\title{No-regret Algorithms for Multi-task Bayesian Optimization}
\newcommand{\norm}[1]{\left\lVert#1\right\rVert}
\newcommand{\expect}[1]{\mathbb{E}\left[{#1}\right]}
\newcommand{\prob}[1]{\mathbb{P}\left[{#1}\right]}
\newcommand{\given}{\; \big\vert \;} 
\newcommand{\bydef}{:=}
\newcommand{\inner}[2]{\langle #1, #2 \rangle}
\newtheorem{mytheorem}{Theorem}
\newtheorem{mylemma}{Lemma}
\newtheorem{remark}{Remark}
\newcommand{\BlackBox}{\rule{1.5ex}{1.5ex}}
\newenvironment{proof}{\par\noindent{\bf Proof\ }}{\hfill\BlackBox\\[2mm]}
\newcommand{\beq}{\begin{equation}}
\newcommand{\eeq}{\end{equation}}
\newcommand{\beqn}{\begin{equation*}}
\newcommand{\eeqn}{\end{equation*}}
\newcommand{\beqa}{\begin{eqnarray}}
\newcommand{\eeqa}{\end{eqnarray}}
\newcommand{\beqan}{\begin{eqnarray*}}
\newcommand{\eeqan}{\end{eqnarray*}}
\newcommand{\argmax}{\mathop{\mathrm{argmax}}}
\DeclareMathOperator{\Tr}{trace}
\DeclareMathOperator{\Dg}{diag}
\author{
  Sayak Ray Chowdhury \hspace{10pt} Aditya Gopalan\\
  Indian Institute of Science
  }
\begin{document}

\maketitle

\begin{abstract}

We consider multi-objective optimization (MOO) of an unknown vector-valued function in the non-parametric Bayesian optimization (BO) setting, with the aim being to learn points on the Pareto front of the objectives. Most existing BO algorithms do not model the fact that the multiple objectives, or equivalently, tasks can share similarities, and even the few that do lack rigorous, finite-time regret guarantees that capture explicitly inter-task structure. In this work, we address this problem by modelling inter-task dependencies using a multi-task kernel and develop two novel BO algorithms based on random scalarizations of the objectives. Our algorithms employ vector-valued kernel regression as a stepping stone and belong to the upper confidence bound class of algorithms. Under a smoothness assumption that the unknown vector-valued function is an element of the reproducing kernel Hilbert space associated with the multi-task kernel, we derive worst-case regret bounds for our algorithms that explicitly capture the similarities between tasks. We numerically benchmark our algorithms on both synthetic and real-life MOO problems, and show the advantages offered by learning with multi-task kernels.
\end{abstract}

\section{Introduction}
\label{sec:intro}

Bayesian optimization is a popular approach for optimizing a black-box function with expensive, noisy evaluations, having been extensively applied in various applications such as hyper-parameter tuning \citep{snoek2012practical}, sensor selection \citep{GarOsbRob10:BOsensor}, synthetic gene design \citep{gonzalez2015bayesian}, etc. In many practical scenarios, one is required to optimize {\em multiple} objectives together, and moreover, these objectives
can be conflicting in nature. 
For example, consider drug discovery, where each function evaluation is a costly laboratory experiment and its output is a measurement of both the potency and side-effects of a candidate drug \citep{Paria2019AFF}. These two objectives are typically conflicting in nature, since one would like to maximize the potency of drug while also keeping its side-effects to a minimum. Other examples include tradeoffs such as bias and variance, accuracy and calibration \citep{guo2017calibration}, accuracy and fairness \citep{zliobaite2015relation} etc. These problems can be framed as that of
optimizing a vector-valued function $f=(f_1,\ldots,f_n)$, where each of its components is a real-valued function and corresponds to a particular objective or task.
Since one often cannot optimize all $f_i$'s simultaneously, most multi-objective optimization (MOO) approaches aim to recover the set of {Pareto optimal}
points, where intuitively a point is Pareto optimal if there is no way to improve on all objectives
simultaneously \citep{knowles2006parego,ponweiser2008multiobjective}. Popular BO strategies in this regard include Predictive
Entropy Search \citep{hernandez2016predictive}, max-value entropy search \citep{belakaria2019max}, Pareto active learning \citep{zuluaga2013active}, expected hypervolume improvement \citep{emmerich2008computation}, sequential uncertainty reduction \citep{picheny2015multiobjective} and scalarization based approaches \citep{roijers2013survey}. {Random scalarizations}, in particular, have been shown to be flexible enough to model user preferences in capturing the whole or a part of the \emph{Pareto front} \citep{Paria2019AFF}. 





Most multi-objective BO approaches maintain $n$ different {Gaussian processes} (GPs) \citep{rasmussen2003gaussian}, one for each task or objective $f_i$. However, in general, the tasks share some underlying structure, and cannot be treated as unrelated objects. By making use of this structure, one might benefit significantly by learning the tasks simultaneously as opposed to learning them independently. For example, consider predicting consumer preferences simultaneously based on their past history \citep{evgeniou2005learning}. Each task is to learn the preference of a particular consumer, and the tasks are related since people with similar tastes tend to buy similar items. Other examples include simultaneous estimation of many related indicators in economic forecasting \citep{greene2003econometric}, predicting tumour behaviour from multiple related diseases \citep{rifkin2003analytical} etc. 
However, assuming similarities in a set of tasks and blindly learning them together can be detrimental \citep{caruana1997multitask}. Hence, it is important to have a model that will benefit the learning in case of related tasks and will not hurt performance when the tasks are unrelated. This can be achieved by maintaining a \emph{multi-task} GP over $f$, which directly
induces correlations between tasks \citep{bonilla2008multi}. 
In the context of BO, \citet{swersky2013multi} empirically demonstrate the utility of this model in a number of applications, and \citet{astudillo2019bayesian} provide an asymptotic convergence analysis under a special setting of composite objective functions and noise-free evaluations. However, a formal finite time regret analysis showing the effectiveness of multi-task GPs over independent GPs in the context of noisy MOO has not been rigorously pursued. Against this backdrop, we make the following contributions:
\begin{itemize}

\item We develop two novel BO algorithms -- multi-task kernelized bandits (MT-KB) and multi-task budgeted kernelized bandits (MT-BKB) -- that are based on random scalarizations, and can leverage similarities between tasks to optimize them
more efficiently. 

\item Our algorithms use vector-valued kernel ridge regression as a building block and follow the general template of the upper-confidence-bound class of algorithms. Also, MT-BKB is the first algorithm that employs the \emph{Nystr\"{o}m approximation} in the context of \emph{multi-task kernels}. 

\item Under the assumption that the objective function has smoothness compatible with a joint kernel on its domain and components, we derive (scalarization induced) regret bounds for our algorithms that {\em explicitly capture the inter-task structure}. These are the first worst-case (frequentist) regret bounds for multi-objective BO, and are proved by deriving a novel concentration inequality for the estimate of the vector-valued objective function, which might be of independent interest. 

\item Finally, our algorithms are simple to implement when the kernel decouples between tasks and domain, and we report numerical results on synthetic as well as real-world based
datasets, for which the algorithms are seen to perform favourably.

\end{itemize}

\textbf{Related work.} In the field of geostatistics \citep{wackernagel2013multivariate}, and more recently in supervised learning \citep{liu2018remarks}, multi-task GPs and associated kernels have gained a lot of traction. Also, a lot of work has been done in the context of vector-valued learning with kernel methods \citep{micchelli2005learning,baldassarre2012multi,grunewalder2012conditional}, and this paper complements the literature by considering an online learning setting. A simple version of multi-objective black box optimization -- in the form of online learning in finite multi-armed
bandits (MABs) -- has been considered in \citep{drugan2013designing,drugan2014scalarization}. This paper, in effect, generalize these works to the more challenging setting of infinite-armed  bandits, which has been studied extensively in the single task setting \citep{srinivas2010gaussian,chowdhury2017kernelized,scarlett2017lower}.

\section{Problem statement}

We consider the problem of maximizing a vector-valued function $f(x)=[f_1(x),\ldots,f_n(x)]^{\top}$ over a compact domain $\cX \subset \Real^d$. At each
round $t$, a learner queries $f$ at a single point $x_t \in \cX$, and observes a noisy output $y_t=f(x_t) + \epsilon_t$, where $\epsilon_t \in \Real^n$ is a zero-mean \emph{sub-Gaussian} random vector conditioned on $\cF_{t-1}$, the $\sigma$-algebra generated by the random variables $\lbrace x_s, \epsilon_s\rbrace_{s = 1}^{t-1}$ and $x_t$. By this we mean that there exists a $\sigma \geq 0$, such that
\beqn
\forall \alpha \in \Real^n, \; \forall t \geq 1,\quad \expect{\exp(\alpha^{\top} \epsilon_t) \given \cF_{t-1}} \leq \exp\big(\sigma^2\norm{\alpha}_2^2/2\big)~.
\eeqn
The query point $x_t$ at round $t$ is chosen
causally depending upon the history $\lbrace(x_s,y_s)\rbrace_{s=1}^{t-1}$ of query and output sequences available up to
round $t-1$. Since one cannot optimize all $f_i$'s simultaneously, the learner's aim is to find the set of \emph{Pareto-optimal} points, denoted by $\cX_f$. A point $x$ is said to be Pareto dominated by $x'$ if $f(x) \prec f(x')$, where for any $u,v \in \Real^n$, $u \prec v$ denotes that
$u_i\leq v_i$ for all $i \in [n]$ and $u_j < v_j$ for some $j \in [n]$.\footnote{We denote by $[n]$ the set $\lbrace 1,2,\ldots,n \rbrace$.} A point is Pareto optimal if it is not Pareto dominated by any other points, i.e., $x \in \cX_f$ if $f(x) \nprec f(x')$ for all $x' \neq x$. The \emph{Pareto front} of $f$ is denoted by $f(\cX_f)$, where for any set $\cA$, $f(\cA)\bydef \lbrace f(x)|x \in \cA \rbrace$.

\paragraph{Random scalarizations and Pareto optimality}
A common approach to solve the multi-objective optimization problem is by converting the objective vectors $f(x)$ into single objective scalars using a \emph{scalarization function} $s_\lambda : \Real^n \ra \Real$, parameterized by a weight vector $\lambda \in \Lambda \subset \Real^n$. Similar to \citet{Paria2019AFF}, we assume random scalarizations, i.e., access to a (known) distribution $P_\lambda$ with its support on $\Lambda$. Thus, instead of maximizing a single scalarized objective, we aim to maximize over a set of scalarizations weighted by the distribution $P_\lambda$. (Note that the Dirac-delta distribution $P_\lambda$\footnote{A Dirac-delta is a probability distribution that puts mass $1$ on exactly one point in the probability space.} yields a deterministic  scalarization.)
We also assume that, for all $\lambda$, the scalarization function
$s_\lambda$ is $L_\lambda$-Lipschitz in the $\ell_2$-norm, i.e., 
\beqn
\forall u,v \in \Real^n,\quad \left\lvert s_\lambda(u)-s_\lambda(v)\right\rvert \leq L_\lambda\norm{u-v}_2~.
\eeqn
Commonly used scalarization functions include the linear scalarization $s_\lambda(y)=\sum_{i=1}^{n}\lambda_iy_i$ and the Chebyshev scalarization $s_\lambda(y)=\min_{i \in [n]}\lambda_i(y_i-z_i)$, where $z \in \Real^n$ is a reference point and $\lambda$ lies in the set $\Lambda = \lbrace \lambda \succ 0:\norm{\lambda}_1=1 \rbrace$ \citep{nakayama2009sequential}. Apart from being Lipschitz, another important property these scalarizations have is monotonicity in all co-ordinates, i.e., $s_\lambda(u) < s_\lambda(v)$ whenever $u \prec v$.
Monotonicity ensures that $x^\star_\lambda := \argmax_{x \in \cX}s_\lambda \left(f(x)\right)$, the maximizer of the scalarized objective, is a Pareto optimal point, since otherwise if $f(x_\lambda^\star) \prec f(x)$ for some $x \neq x_\lambda^\star$, we have $s_\lambda\left(f(x^\star_\lambda)\right) < s_\lambda\left(f(x)\right)$ yielding a contradiction. Therefore, $P_\lambda$ defines a probability distribution over the Pareto optimal set $\cX_f$, and thus, in turn, over the Pareto front $f(\cX_f)$. Hence, the distribution $P_\lambda$
provides flexibility to sample from the entire or a part of the Pareto front depending on the application \citep{Paria2019AFF}.

\paragraph{Performance metric} Given a budget of $T$ rounds, our goal is to find a set $\cX_T :=\lbrace x_1,\ldots,x_T\rbrace \subset \cX$ such that $f(\cX_T)$ well approximates the high probability regions of the Pareto front $f(\cX_f)$. This can be achieved, as shown in \citet{Paria2019AFF}, by minimizing the \emph{Bayes regret}, defined as 
\beqn
R_{B}(T)=\mathbb{E}_{ P_\lambda}\left[r_\lambda(T)\right], \;\; \text{where} \;\; r_\lambda(T) = s_\lambda \left(f(x_\lambda^\star)\right)-\max_{x \in \cX_T}s_\lambda \left(f(x)\right)~.
\eeqn
To see this, we note that it requires $r_\lambda(T)$ to be low for all $\lambda \in \Lambda$ that has high mass, to achieve a low Bayes regret. Now, by definition, $r_\lambda(T)=0$ if $x^\star_\lambda \in \cX_T$, and also, by monotonicity, $x_{\lambda}^\star \in \cX_f$. Then, by the Lipschitz continuity, a low value of $R_B(T)$ will essentially imply $f(\cX_T)$ to "span" the high probability regions of $f(\cX_f)$. A more classical performance measure is the (scalarized) \emph{cumulative regret} 
\beqn
R_C(T) = \sum_{t=1}^{T}\mathbb{E}_{P_\lambda}\big[s_{\lambda_t}\left(f(x^\star_{\lambda_t})\right)-s_{\lambda_t}\left(f\left(x_t\right)\right)\big]~,
\eeqn 
where each $\lambda_t$ is independent and $P_\lambda$ distributed. If $\Lambda$ is a bounded set and the scalarization $s_\lambda$ is also Lipschitz in $\lambda$, then one can show that $R_B(T) \leq \frac{1}{T}R_C(T) +o(1)$ \citep{Paria2019AFF}. 
A sub-linear growth of $R_C(T)$ with $T$ then implies that $R_B(T)\! \ra \!0$ as $T\!\ra\! \infty$.



\paragraph{Regularity assumptions} Attaining non-trivial regret bound is impossible in general for arbitrary vector-valued functions $f$, thus some regularity assumptions are in order. We call a mapping $\Gamma:\cX \times \cX \ra \Real^{n \times n}$, a \emph{multi-task kernel} \footnote{In its more general form, this definition can be lifted from $\Real^n$ to any arbitrary Hilbert space $\cH$ \citep{caponnetto2008universal}.} on $\cX$ if $\Gamma(x,x')^{\top}=\Gamma(x',x)$ for any $x,x' \in \cX$, and it is positive definite, i.e., for any $m \in \Nat$, $\lbrace x_i \rbrace_{i=1}^{m} \subseteq \cX$ and $\lbrace y_i \rbrace_{i=1}^{m} \subseteq \Real^n$ it holds
\beqn
\sum_{i,j=1}^{m}y_i^{\top}\Gamma(x_i,x_j)y_j \geq 0~.
\eeqn
Given a continuous (relative to the induced matrix norm) multi-task kernel $\Gamma$ on $\cX$, there exists a unique (modulo an isometry) vector-valued reproducing kernel Hilbert space (RKHS) of vector-valued continuous functions $g:\cX \to \mathbb{R}^n$, with $\Gamma$ as its reproducing kernel \citep{carmeli2010vector}. We denote this RKHS as $\cH_\Gamma(\cX)$, with the corresponding inner product $\inner{\cdot}{\cdot}_\Gamma$. Then, for every $x \in \cX$, there exists a bounded linear operator $\Gamma_x:\Real^n \ra \cH_\Gamma(\cX)$ such that the following holds:
\beqn
\forall x' \in \cX, \quad \Gamma(x,x')=\Gamma_x^{\top}\Gamma_{x'} \;\;\text{and}\;\; \forall g \in \cH_\Gamma(\cX),\quad g(x)=\Gamma_x^{\top}g~. 
\eeqn
Here, $\Gamma_x^{\top}$ denotes the adjoint of $\Gamma_x$ (with a slight abuse of notation), and it is the unique operator that satisfies the following:
\beqn
\forall g \in \cH_{\Gamma}(\cX),\; \forall y \in \Real^n,\quad \inner{\Gamma_x^{\top}g}{y}_{2} =\inner{g}{\Gamma_x y}_{\Gamma}~.
\eeqn
We assume that the objective function $f$ is an element of the RKHS $\cH_\Gamma(\cX)$ and its norm associated to $\cH_\Gamma(\cX)$ is bounded, i.e., there exists a $b < \infty$ such that $\norm{f}_\Gamma \leq b$.
This is a measure of smoothness of $f$, since, by the reproducing property
\beqn
\forall x,x' \in \cX,\quad \norm{f(x) - f(x')}_2  \leq \norm{f}_\Gamma \norm{\Gamma_x-\Gamma_{x'}},
\eeqn
where $\norm{\Gamma_x}:=\sup_{\norm{y}_2 \leq 1}\norm{\Gamma_x y}_{\Gamma}$ denotes the operator norm.
Further, we assume that there exists a $\kappa < \infty$ such that
$\norm{\Gamma(x,x)} \leq \kappa$ for all $x \in \cX$. Note that in the single-task setting ($n=1$), the kernel $\Gamma$ is scalar-valued and the RKHS $\cH_\Gamma(\cX)$ consists of real-valued functions. In this case, the bounded norm assumption holds for stationary kernels, e.g., the \emph{squared exponential} (SE) kernel and the \emph{Mat\'{e}rn} kernel \citep{srinivas2010gaussian,chowdhury2017kernelized}.


\paragraph{Examples of multi-task (MT) kernels}
It is possible to construct MT kernels using scalar kernels $k:\cX \times \cX \to \Real^{+}$. \citet{evgeniou2005learning} consider the kernel \beqn
\Gamma(x,x')=k(x,x')\left(\omega I_n+(1-\omega)1_n/n\right)~,
\eeqn
where $I_n$ is the $n \times n$ identity matrix, $1_n$ is the $n\times n$ all-one matrix and $\omega \in [0,1]$ is a parameter that governs the similarity level between components of $f$. The choice $\omega = 1$ corresponds to assuming that all tasks are unrelated and possible similarity among them is not exploited. Conversely, $\omega = 0$ is equivalent to assuming that all tasks are identical and can be explained by the same function. \citet{swersky2013multi} consider a more general class of kernels known as the \emph{intrinsic coregionalization model} (ICM), which includes the aforementioned kernel as a special case. The kernels are of the form 
\beqn
\Gamma(x,x')=k(x,x')B~,
\eeqn
where $B$ is an $n\times n$ p.s.d. matrix that encodes the inter-task structure. This class of kernels is called \emph{separable} since it
allows to decouple the contribution of input and output in the covariance structure \citep{alvarez2011kernels}. We consider
stationary scalar kernels $k$ with unit variances -- to avoid redundancy in the parameterization -- since the variances
can be captured fully by $B$ \citep{bonilla2008multi}. The main advantage of ICM is that one can use the eigen-system of $B$ to define a new coordinate system where $\Gamma$ becomes block diagonal, reducing the computational burden to a great extent. The diagonal MT kernel $\Gamma(x,x')=\Dg\left(k_1(x,x'),\ldots,k_n(x,x')\right)$ has the same advantage, but corresponds to treating each task independently using different scalar kernels $k_j$. However, in general, a MT kernel will not be diagonal, and moreover cannot be reduced to a diagonal one by linearly transforming the output space. For example, it is impossible to reduce the kernel $\Gamma(x,x')=\sum_{j=1}^{M}k_j(x,x')B_j$, $M \neq 1$, to a diagonal one, unless all the $n \times n$ matrices $B_j$ are simultaneously diagonalizable \citep{caponnetto2008universal}.

\section{Our approach}

We follow the general template of upper confidence bound (UCB) class of BO algorithms \citep{srinivas2010gaussian,chowdhury2017kernelized} suitably adapted to the multi-task setting. At each round $t$, we randomly sample a weight vector $\lambda_t$ from the distribution $P_\lambda$, and compute a multi-task acquisition
function $u_t:\cX \to \Real$ to act as an UCB for the unknown
function $f$, based on the random scalarization $s_{\lambda_t}$. Whenever $u_t(x)$ is a valid UCB, i.e., $s_{\lambda_t}\left(f(x)\right) \leq u_t(x)$, and it converges to $s_{\lambda_t}\left(f(x)\right)$
``sufficiently" fast, then selecting candidates that are optimal with respect to $u_t$ leads to low (scalarized) regret,
i.e., the scalarized objective $s_{\lambda_t}\left(f(x_{t})\right)$ at $x_{t} \in \argmax_{x \in \cX} u_t(x)$ tends to $s_{\lambda_t}\left(f(x_{\lambda_t}^\star)\right)$ as $t$ increases. The intuition behind our approach, at a high level, is that the set $f(\cX_t)$ tends to the high probability regions of the Pareto front as $t$ increases.
It now remains to design a principled multi-task acquisition function $u_t$ based on the scalarization $s_{\lambda_t}$, and in what follows, we shall describe two algorithms for that.


\subsection{Algorithm 1: Multi-task kernelized bandits (MT-KB)}
\label{subsec:MT-KB}

Given the data $\lbrace(x_i,y_i)\rbrace_{i=1}^{t} \subset \cX \times \Real^n$, we first aim to find an estimate of $f$ by solving a vector-valued regression problem: 
\beqn
\min_{f \in \cH_\Gamma(\cX)}\;\sum_{i=1}^{t}\norm{y_i-f(x_i)}_2^2+\eta \norm{f}_\Gamma^2~,
\eeqn
where $\eta > 0$ is a regularizing parameter. \citet{micchelli2005learning} show that the solution of this minimization problem can be written as
\beqn
\mu_t=\sum_{i=1}^{t}\Gamma_{x_i}\alpha_i~.
\eeqn
Here, $\lbrace\alpha_i \rbrace_{i=1}^{t} \subseteq \Real^n$ is the unique solution of the linear system of equations
\beqn
\sum_{i=1}^{t} \left(\Gamma(x_j,x_i)+\eta \delta_{j,i}\right)\alpha_i=y_j~,\quad 1 \leq j \leq t~,
\eeqn
where $\delta_{j,i}$ denotes the Kronecker-delta function. Now, by the reproducing property, we have
\beqn
    \mu_t(x)=\Gamma_x^{\top}\mu_t=G_{t}(x)^{\top}(G_t+\eta I_{nt})^{-1}Y_{t}~,
    \label{eqn:post-mean}
\eeqn
where the kernel matrix $G_t=[\Gamma(x_i,x_j)]_{i,j=1}^{t}$ is a $t \times t$ block matrix with each block being an $n\times n$ matrix (so that $G_t$ is an $nt \times nt$ matrix), $Y_{t}=\left[y_1^{\top},\ldots,y_t^{\top}\right]^{\top}$ is an $nt \times 1$ vector with the outputs concatenated, and $G_t(x)=\left[\Gamma(x,x_1)^{\top},\ldots,\Gamma(x,x_t)^{\top}\right]^{\top}$ is an $nt \times n$ matrix. Notice that $G_t(x)$ can be interpreted as an embedding of a point $x$ supported over the points $x_1,\ldots,x_t$ observed so far.
Now, if an arm $x$ is sufficiently unexplored, the estimate $\mu_t(x)$ will, in general, have high variance. One natural way of specifying the uncertainty around $\mu_t(x)$ is the following multi-task kernel:
\beq
	\Gamma_t(x,x')=\Gamma(x,x')-G_t(x)^{\top}(G_t+\eta I_{nt})^{-1}G_t(x')~, \quad x,x' \in \cX~.
\label{eqn:post-cov}	
\eeq
To see this, we draw a connection to multi-task Gaussian processes (MT-GPs) \citep{liu2018remarks}. Let $f\sim \cG \cP(0,\Gamma)$ be a sample from a zero-mean MT-GP  with covariance function $\Gamma$ (i.e., $\expect{f_i(x)}=0$ and $\expect{f_i(x)f_j(x')}=\Gamma(x,x')_{ij}$ for all $i,j \in [n]$ and $x,x'\in \cX$), and assume that the observation noise vectors $\lbrace\epsilon_t\rbrace_{t\geq 1}$ are independent and $\cN(0,\eta I_n)$ distributed. Then the posterior distribution of $f$ conditioned on the data $\lbrace(x_i,y_i)\rbrace_{i=1}^{t}$ is also a MT-GP with mean $\mu_t$ and covariance $\Gamma_t$, yielding a natural uncertainty model.
Now, inspired by the optimism-in-face-of-uncertainty principle, we compute the acquisition function for the next round as
\beq
u_{t+1}(x)=s_{\lambda_{t+1}}\left(\mu_{t}(x)\right)+L_{\lambda_{t+1}}\beta_{t}\norm{\Gamma_{t}(x,x)}^{1/2}~,
\label{eqn:UCB-rule}
\eeq
where $L_{\lambda}$ is the Lipschitz constant of the scalarization $s_{\lambda}$. As a result, selecting the arm $x_{t+1}$ with the highest $u_{t+1}$ inherently trades off exploitation, i.e., picking points with high (scalarized) reward $s_{\lambda_{t+1}}\left(\mu_{t}(x)\right)$, with exploration, i.e., picking points with high uncertainty $\norm{\Gamma_{t}(x,x)}^{1/2}$. The parameter $\beta_t$ balances between these two objectives, and needs be tuned properly to guarantee low regret.
The pseudo-code of MT-KB is given in Algorithm \ref{algo:MT-KB}.

\begin{algorithm}[t]
\renewcommand\thealgorithm{1}
\caption{Multi-task kernelized bandits (MT-KB)}\label{algo:MT-KB}
\begin{algorithmic}
\STATE \textbf{Require:} Kernel $\Gamma$, distribution $P_\lambda$, scalarization $s_\lambda$, time budget $T$, parameters $\eta$, $\lbrace\beta_t\rbrace_{t=0}^{T-1}$
\STATE Initialize $\mu_0(x)=0$ and $\Gamma_0(x,x')=\Gamma(x,x')$
\FOR{round $t = 1, 2, 3, \ldots,T$}
\STATE Sample weight vector $\lambda_t \sim P_{\lambda}$
\STATE Compute acquisition function $u_{t}(x)=s_{\lambda_{t}}\left(\mu_{t-1}(x)\right)+L_{\lambda_{t}}\beta_{t-1}\norm{\Gamma_{t-1}(x,x)}^{1/2}$
\STATE Select point $x_{t} \in \argmax_{x \in \cX} u_t(x)$
\STATE Get vector-valued output $y_t=f(x_t)+\epsilon_t$
\STATE Compute
\beqn
G_t(x)=\left[\Gamma(x_1,x)^{\top},\ldots,\Gamma(x_t,x)^{\top}\right]^{\top},\;G_t=[\Gamma(x_i,x_j)]_{i,j=1}^{t},\; Y_{t}=\left[y_1^{\top},\ldots,y_t^{\top}\right]^{\top}
\eeqn
\STATE Update
\beqan
\mu_t(x)&=&G_{t}(x)^{\top}(G_t+\eta I_{nt})^{-1}Y_{t}\\
\Gamma_t(x,x)&=&\Gamma(x,x)-G_t(x)^{\top}(G_t+\eta I_{nt})^{-1}G_t(x)
\eeqan
\ENDFOR
\end{algorithmic}
\addtocounter{algorithm}{-1}
\end{algorithm}


\paragraph{Computational complexity} Maximizing the acquisition function $u_t(x)$ over $\cX$ is in general NP-hard even for a single task, since it is a highly non-convex function. To simplify the exposition, in what follows, we will assume that an efficient oracle to optimize $u_t(x)$, such as DIRECT \citep{brochu2010tutorial}, is provided to us, and the per step cost comes only from computing $u_t(x)$. Now, the cost of computing $u_t(x)$ is dominated by the cost of inversion of the $nt \times nt$ kernel matrix, and thus in principle scales as $O(n^3t^3)$.\footnote{This can be reduced to $O(n^3t^2)$ using Schur's complement, but at an additional storage cost of $O(n^2t^2)$.} We note that the cubic dependency with time $t$ is present even in the single-task ($n=1$) setting \citep{shahriari2015taking} and in this case, in fact, MT-KB reduces to the GP-UCB algorithm \citep{srinivas2010gaussian}. 
\begin{remark}
The diagonal MT kernel $\Gamma(x,x')=\Dg\left(k_1(x,x'),\ldots,k_n(x,x')\right)$ corresponds to treating each task independently and the problem reduces to inverting $n$ kernel matrices yielding a per-step cost of $O(nt^3)$ for MT-KB. This is similar to the prior works \citep{hernandez2016predictive,Paria2019AFF,belakaria2019max} which assume that each task $f_i$ is sampled independently from the scalar Gaussian process $\cG\cP(0,k_i)$. 
\end{remark}

One common approach to improve computational scalability in kernel methods is the Nystr\"{o}m approximation \citep{drineas2005nystrom}, which restricts the embeddings $G_t(x)$ and the kernel matrix $G_t$ to be supported on a subset (dictionary) $\cD_t$ of selected points. However, this can lead to
sub-optimal choices and large regret
if $\cD_t$ is not sufficiently accurate. This brings about a trade-off between
larger and more accurate dictionaries, or smaller and more efficient ones. The BKB algorithm solves this for single-task BO \citep{calandriello2019gaussian}. We now generalize BKB for  multiple tasks to improve over the $O(n^3t^3)$ cost of MT-KB.

\subsection{Algorithm 2: Multi-task budgeted kernelized bandits (MT-BKB)} 
\label{subsec:MT-BKB}

The central idea behind this algorithm is to evaluate an approximate acquisition function $\tilde{u}_t(x)$, which remains a valid UCB over the scalarized function $s_{\lambda_t}\left(f(x)\right)$ and at the same time is sufficiently close to $u_t(x)$ to ensure low regret. Given the data $\lbrace(x_i,y_i)\rbrace_{i=1}^{t}$, we start with an empty dictionary $\cD_t=\emptyset$ and iterate over the set $\lbrace x_1,\ldots,x_t\rbrace$ to update $\cD_t$ as follows. For each candidate $x_i$, we compute an inclusion probability $p_{t,i}$, and add $x_i$ to $\cD_t$ with probabability $p_{t,i}$.
The inclusion probabilities $p_{t,i}$ need to be set suitably so that the dictionary is small enough without compromising on its accuracy. Once the sampling is over, let $\cD_t$ be given by the set $\lbrace x_{i_1},\ldots,x_{i_{m_t}}\rbrace$, where $m_t$ is the size of $\cD_t$ and $ i_j \in [t]$ for each $j \in [m_t]$. Given the dictionary $\cD_t$, let $\tilde{G}_{t}(x)=\big[\Gamma(x_{i_1},x)^{\top}/\sqrt{p_{t,i_1}},\ldots,\Gamma(x_{i_{m_t}},x)^{\top}/\sqrt{p_{t,i_{m_t}}}\big]^{\top}$ be the $nm_t \times n$ embedding of $x$ supported over all points in $\cD_t$ and $\tilde{G}_{t}=\left[\Gamma(x_{i_u},x_{i_v})/\sqrt{p_{t,i_u}p_{t,i_v}}\right]_{u,v=1}^{m_t}$ be the corresponding $nm_t \times nm_t$ kernel matrix, properly reweighted by the inclusion probabilities.
Then we compute the Nystr\"{o}m embeddings as 
\beqn
\tilde{\Phi}_t(x)=\big(\tilde{G}_t^{1/2}\big)^{+}\tilde{G}_{t}(x)~,
\eeqn
where $(\cdot)^{+}$ denotes the pseudo-inverse. We now use these embeddings to approximate $\mu_t$ and $\Gamma_t$ as
\beqan
\tilde{\mu}_t(x)&=&\tilde{\Phi}_t(x)^{\top}(\tilde{V}_t+\eta I_{nm_t})^{-1}\sum_{s=1}^{t}\tilde{\Phi}_t(x_s)y_s\;,\\
\tilde{\Gamma}_t(x,x')&=&\Gamma(x,x')-\tilde{\Phi}_t(x)^{\top}\tilde{\Phi}_t(x')+\eta \tilde{\Phi}_t(x)^{\top} (\tilde{V}_t+\eta I_{nm_t})^{-1}\tilde{\Phi}_t(x')\;,
\eeqan
where $\tilde{V}_t=\sum_{s=1}^{t}\tilde{\Phi}_t(x_s)\tilde{\Phi}_t(x_s)^{\top}$ is an $nm_t \times nm_t$ matrix. Finally, similar to (\ref{eqn:UCB-rule}), we compute the acquisition function for the next round as
\beqn
\tilde{u}_{t+1}(x)=s_{\lambda_{t+1}}\left(\tilde{\mu}_{t}(x)\right) + L_{\lambda_{t+1}}\tilde{\beta}_{t}\lVert\tilde{\Gamma}_{t}(x,x)\rVert^{1/2}~,
\eeqn
with $\tilde{\beta}_{t}$ governing the exploration-exploitation tradeoff. The inclusion probabilities for the next round are computed as $p_{t+1,i}=\min \big\lbrace q \lVert\tilde{\Gamma}_{t}(x_i,x_i)\rVert, 1 \big\rbrace$, where $q \geq 1$ is a parameter trading-off
the size of the dictionary and accuracy of the approximation. We note here that constructing $\cD_t$
based on approximate posterior variance sampling is well-studied for scalar kernels \citep{alaoui2015fast}, and in this work, we introduce it for the first time for MT kernels.  The pseudo-code of MT-BKB is given in Algorithm \ref{algo:MT-BKB}.

\begin{algorithm}[t]
\renewcommand\thealgorithm{2}
\caption{Multi-task budgeted kernelized bandits (MT-BKB)}\label{algo:MT-BKB}
\begin{algorithmic}
\STATE \textbf{Require:} Kernel $\Gamma$, distribution $P_\lambda$, scalarization $s_\lambda$, time budget $T$, parameters $\eta$, $q$, $\lbrace\tilde{\beta}_t\rbrace_{t=0}^{T-1}$
\STATE Initialize $\tilde{\mu}_0(x)=0$ and $\tilde{\Gamma}_0(x,x')=\Gamma(x,x')$
\FOR{round $t = 1, 2, 3, \ldots,T$}
\STATE Sample weight vector $\lambda_t \sim P_{\lambda}$
\STATE Compute acquisition function $\tilde{u}_{t}(x)=s_{\lambda_{t}}\left(\tilde{\mu}_{t-1}(x)\right)+L_{\lambda_{t}}\tilde{\beta}_{t-1}\lVert\tilde{\Gamma}_{t-1}(x,x)\rVert^{1/2}$
\STATE Select point $x_{t} \in \argmax_{x \in \cX} \tilde{u}_t(x)$
\STATE Get vector-valued output $y_t=f(x_t)+\epsilon_t$
\STATE Initialize dictionary $\cD_t=\emptyset$ 
\FOR{$i = 1, 2, 3, \ldots, t$}
\STATE Set inclusion probability $p_{t,i}=\min \left\lbrace q \lVert\tilde{\Gamma}_{t-1}(x_i,x_i)\rVert, 1 \right\rbrace$
\STATE Draw $z_{t,i}\sim\text{Bernoulli}(p_{t,i})$
\IF{$z_{t,i}=1$}
\STATE Update $\cD_t=\cD_t \cup \lbrace x_i \rbrace$ 
\ENDIF
\ENDFOR
\STATE Set $m_t=\abs{\cD_t}$, enumerate $\cD_t=\lbrace x_{i_1},\ldots,x_{i_{m_t}}\rbrace$ and compute 
\beqn
\tilde{G}_{t}(x)=\left[\frac{1}{\sqrt{p_{t,i_1}}}\Gamma(x_{i_1},x)^{\top},\ldots,\frac{1}{\sqrt{p_{t,i_{m_t}}}}\Gamma(x_{i_{m_t}},x)^{\top}\right]^{\top},\;  \tilde{G}_{t}=\left[\frac{1}{\sqrt{p_{t,i_u}p_{t,i_v}}}\Gamma(x_{i_u},x_{i_v})\right]_{u,v=1}^{m_t}
\eeqn
\STATE Find Nystr\"{o}m embeddings $\tilde{\Phi}_t(x)=\left(\tilde{G}_t^{1/2}\right)^{+}\tilde{G}_{t}(x)$
\STATE Compute $\tilde{V}_t =\sum_{s=1}^{t}\tilde{\Phi}_t(x_s)\tilde{\Phi}_t(x_s)^{\top}$ and update
\beqan
\tilde{\mu}_t(x)&=&\tilde{\Phi}_t(x)^{\top}(\tilde{V}_t+\eta I_{nm_t})^{-1}\sum\nolimits_{s=1}^{t}\tilde{\Phi}_t(x_s)y_s\\
\tilde{\Gamma}_t(x,x)&=&\Gamma(x,x)-\tilde{\Phi}_t(x)^{\top}\tilde{\Phi}_t(x)+\eta \tilde{\Phi}_t(x)^{\top} (\tilde{V}_t+\eta I_{nm_t})^{-1}\tilde{\Phi}_t(x)
\eeqan
\ENDFOR
\end{algorithmic}
\addtocounter{algorithm}{-1}
\end{algorithm}



\paragraph{Computational complexity}
Computing the dictionary involves a linear search over all selected points while the inclusion probabilities are computed already at the previous round, and thus requires $O(t)$ time per step. The Nystr\"{o}m embeddings $\tilde{\Phi}_t(x)$ can be computed in $O(n^3m_t^3)$ time, since an inversion of the matrix $\tilde{G}_t$ is required. By using these embeddings, $\tilde{V}_t$ can now be computed and inverted in $O(n^2m_t^2t)$ and $O(n^3m_t^3)$ time, respectively. Since, in general, $m_t \leq t$, the total per step cost of computing the acquisition function $\tilde{u}_t(x)$ is now $O(n^3m_t^2t)$ as opposed to the $O(n^3t^3)$ cost of MT-KB. The computational advantage of MT-BKB is clearly visible when the dictionary size $m_t$ is near constant at every step, i.e., when $m_t=\tilde{O}(1)$, where $\tilde{O}(\cdot)$ hides constant and $\log$ factors. We shall see in Section \ref{subsec:regret-bound} that this holds, for example, for the intrinsic coregionalization model (ICM) with the squared exponential kernel in its scalar part.

\subsection{Improved computational complexity for ICM kernels} 

The computational cost of our algorithms can be greatly reduced for ICM kernels $\Gamma(x,x') = k(x, x')B$. Let $\lbrace \xi_i\rbrace_{i=1}^{n}$ be the eigenvalues of $B$ with corresponding orthonormal eigenvectors $\lbrace u_i \rbrace_{i=1}^{n}$. We then have the kernel matrix $G_t=\sum_{i=1}^{n}\xi_i K_t \otimes u_iu_i^{\top}$ and the output vector $Y_{t}=\sum_{i=1}^{n}Y_{t}^{i}\otimes u_i$, where $\otimes$ denotes the Kronecker product, $K_t=[k(x_i,x_j)]_{i,j=1}^{t}$ is the kernel matrix of the scalar kernel $k$ and $Y_{t}^{i}=[y_1^{\top}u_i,\ldots,y_t^{\top}u_i]^{\top}$. Plugging these into (\ref{eqn:post-mean}) and (\ref{eqn:post-cov}), and using properties of Kronecker product, we now obtain 
\beqan
\mu_t(x)&=&\sum_{i=1}^{n}\xi_i k_t(x)^{\top}(\xi_iK_t+\eta I_t)^{-1}Y_{t}^{i} u_i~,\\
\lVert\Gamma_t(x,x)\rVert &=& \max_{1 \leq i \leq n}\xi_i\left(k(x,x)- \xi_ik_t(x)^{\top}(\xi_iK_t+\eta I_t)^{-1}k_t(x)\right)~,
\eeqan
where $k_t(x)=[k(x_1,x),\ldots,k(x_t,x)]^{\top}$.
We see that the eigen-decomposition of $B$ needs to be computed only once at the beginning and then, in the new coordinate system, we essentially have to solve $n$ independent problems. Specifically, at round $t$, we need to project the vector-valued output $y_t$ to all coordinates and compute $n$ matrix-vector multiplications of size $t$. However, since the kernel matrix $K_t$ is rescaled by the eigenvalues $\xi_i$, we have to perform only one $t \times t$ inversion.
Hence, the per-step time complexity of MT-KB is now $O\left(n^2+(n+t)t^2\right)$ as opposed to $O(n^3t^3)$ for general MT kernels. Similarly, the per-step cost of MT-BKB can be substantially improved to $O\left(n^2+(n+m_t)m_tt\right)$ from the $O(n^3m_t^2t)$ cost in general. 
Therefore, the kernels of this form allow for a near-linear (in time $t$) per-step cost of MT-BKB at the price of the eigen-decomposition of $B$. (We defer the details to appendix \ref{app:complexity}.)

\section{Theoretical results}

We now present the first theoretical result of this work, a concentration inequality for the estimate of the unknown multi-task objective function $f$, which is then used to prove the regret bounds for our algorithms. (Complete proofs of all results presented in this section are deferred to the appendix.)

\begin{mytheorem}[Multi-task concentration inequality] 
Let $f \in \cH_\Gamma(\cX)$ and the noise vectors $\lbrace\epsilon_t\rbrace_{t \geq 1}$ be $\sigma$-sub-Gaussian. Then, for any $\eta > 0$ and $\delta \in (0,1]$, with probability at least $1-\delta$, the following holds uniformly over all $x \in \cX$ and $t \geq 1:$
	\beqn
		\norm{f(x)-\mu_{t}(x)}_2 \leq \left(\norm{f}_\Gamma+\frac{\sigma}{\sqrt{n}}\sqrt{2\log(1/\delta)+\log\det(I_{nt}+\eta^{-1}G_t)}\right)\norm{\Gamma_{t}(x,x)}^{1/2}.
	\eeqn
	
	\label{thm:concentration}
\end{mytheorem}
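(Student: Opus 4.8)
The plan is to pass to an operator-theoretic description of the estimator, split the estimation error into a deterministic bias term and a noise martingale term, factor out the uncertainty $\norm{\Gamma_t(x,x)}^{1/2}$ by Cauchy--Schwarz, and then bound the noise term by a self-normalized concentration inequality for a vector-valued martingale living in the RKHS.

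First I would fix notation: let $S_t:\cH_\Gamma(\cX)\to\Real^{nt}$ be the sampling operator $S_tg\bydef(g(x_1),\dots,g(x_t))$, with adjoint $S_t^{\top}y=\sum_{i=1}^{t}\Gamma_{x_i}y_i$, and let $C_t\bydef S_t^{\top}S_t=\sum_{i=1}^{t}\Gamma_{x_i}\Gamma_{x_i}^{\top}$ be the empirical covariance operator on $\cH_\Gamma(\cX)$; note $S_tS_t^{\top}=G_t$, $S_t\Gamma_x=G_t(x)$, and that $C_t$ and $G_t$ share their nonzero spectrum, so $\det(I+\eta^{-1}C_t)=\det(I_{nt}+\eta^{-1}G_t)$. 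The regression solution of Section~\ref{subsec:MT-KB} is $\mu_t=(C_t+\eta I)^{-1}S_t^{\top}Y_t$, so $\mu_t(x)=\Gamma_x^{\top}(C_t+\eta I)^{-1}S_t^{\top}Y_t$, and the push-through identity $S_t^{\top}(S_tS_t^{\top}+\eta I)^{-1}=(S_t^{\top}S_t+\eta I)^{-1}S_t^{\top}$ rewrites the definition of $\Gamma_t$ as
\[
\Gamma_t(x,x)=\Gamma_x^{\top}\bigl(I-S_t^{\top}(G_t+\eta I)^{-1}S_t\bigr)\Gamma_x=\eta\,\Gamma_x^{\top}(C_t+\eta I)^{-1}\Gamma_x ,
\]
which yields the key identity $\norm{(C_t+\eta I)^{-1/2}\Gamma_x}_{\op}=\eta^{-1/2}\norm{\Gamma_t(x,x)}^{1/2}$, since the squared operator norm of $(C_t+\eta I)^{-1/2}\Gamma_x$ is the largest eigenvalue of $\Gamma_x^{\top}(C_t+\eta I)^{-1}\Gamma_x$.

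Next, substituting $Y_t=S_tf+E_t$ with $E_t\bydef(\epsilon_1,\dots,\epsilon_t)\in\Real^{nt}$ and using $(C_t+\eta I)^{-1}C_t=I-\eta(C_t+\eta I)^{-1}$, the error decomposes as
\[
f(x)-\mu_t(x)=\eta\,\Gamma_x^{\top}(C_t+\eta I)^{-1}f\;-\;\Gamma_x^{\top}(C_t+\eta I)^{-1}S_t^{\top}E_t .
\]
Inserting $(C_t+\eta I)^{-1/2}(C_t+\eta I)^{-1/2}$ in each term and applying Cauchy--Schwarz in $\cH_\Gamma(\cX)$ extracts the common factor $\norm{(C_t+\eta I)^{-1/2}\Gamma_x}_{\op}=\eta^{-1/2}\norm{\Gamma_t(x,x)}^{1/2}$; since this is the only place where $x$ enters, uniformity over $x\in\cX$ comes for free and only uniformity over $t$ and the randomness in $E_t$ need to be handled. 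The bias term is then controlled deterministically by $\eta\cdot\eta^{-1/2}\norm{\Gamma_t(x,x)}^{1/2}\cdot\norm{(C_t+\eta I)^{-1/2}f}_\Gamma\le\norm{f}_\Gamma\norm{\Gamma_t(x,x)}^{1/2}$, using $C_t\succeq0$ to get $\norm{(C_t+\eta I)^{-1/2}f}_\Gamma\le\eta^{-1/2}\norm{f}_\Gamma$. Everything thus reduces to showing that, with probability at least $1-\delta$ and simultaneously for all $t\ge1$,
\[
\eta^{-1/2}\norm{(C_t+\eta I)^{-1/2}S_t^{\top}E_t}_\Gamma\ \le\ \frac{\sigma}{\sqrt n}\sqrt{2\log(1/\delta)+\log\det(I_{nt}+\eta^{-1}G_t)}.
\]

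I expect this self-normalized bound to be the crux of the argument and the main obstacle; it is the "novel concentration inequality" flagged in the introduction. The plan is a method-of-mixtures argument adapted to the vector-valued RKHS. For fixed $h\in\cH_\Gamma(\cX)$, the process $M_t(h)\bydef\exp\bigl(\inner{h}{S_t^{\top}E_t}_\Gamma-\tfrac{\sigma^2}{2}\inner{h}{C_th}_\Gamma\bigr)$ is a nonnegative supermartingale with $\expect{M_0(h)}=1$: the increment of the exponent is $\inner{h}{\Gamma_{x_i}\epsilon_i}_\Gamma=\inner{h(x_i)}{\epsilon_i}_2$, and the sub-Gaussian hypothesis applied with $\alpha=h(x_i)$ gives $\expect{\exp(\inner{h(x_i)}{\epsilon_i}_2)\given\cF_{i-1}}\le\exp\bigl(\tfrac{\sigma^2}{2}\norm{h(x_i)}_2^2\bigr)=\exp\bigl(\tfrac{\sigma^2}{2}\inner{h}{\Gamma_{x_i}\Gamma_{x_i}^{\top}h}_\Gamma\bigr)$. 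One then mixes $h$ against a centered Gaussian measure on $\cH_\Gamma(\cX)$ whose covariance is a scalar multiple of the identity --- made rigorous by first restricting to the finite-dimensional span of $\{\Gamma_{x_i}e_k\}_{i\le t,k\le n}$ and passing to the limit, as is standard for kernelized self-normalized inequalities --- and evaluates the resulting Gaussian integral to obtain $\overline M_t=\det(I+\eta^{-1}C_t)^{-1/2}\exp\bigl(\tfrac{c}{2}\norm{(C_t+\eta I)^{-1/2}S_t^{\top}E_t}_\Gamma^2\bigr)$ for an explicit constant $c$, with $\expect{\overline M_t}\le1$ and $\det(I+\eta^{-1}C_t)=\det(I_{nt}+\eta^{-1}G_t)$. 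A maximal/stopping-time inequality converts $\expect{\overline M_t}\le1$ into $\prob{\exists t\ge1:\overline M_t>1/\delta}\le\delta$, which after tracking the constant $c$ gives the displayed bound uniformly in $t$; exploiting that $\epsilon_i$ is a single $n$-dimensional sub-Gaussian vector with proxy $\sigma$ along every direction (rather than $n$ unrelated scalars) is what should produce the $1/\sqrt n$ scaling in the final constant, and getting this accounting right --- together with the infinite-dimensional mixture and the uniform-in-$t$ stopping argument --- is the delicate part. Combining the deterministic bias bound, this high-probability noise bound, and the identity $\norm{(C_t+\eta I)^{-1/2}\Gamma_x}_{\op}=\eta^{-1/2}\norm{\Gamma_t(x,x)}^{1/2}$ then gives the stated inequality.
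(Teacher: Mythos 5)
Your proposal follows essentially the same route as the paper's proof: the same bias/noise decomposition of $f(x)-\mu_t(x)$, the same identity $\Gamma_t(x,x)=\eta\,\Gamma_x^{\top}(C_t+\eta I)^{-1}\Gamma_x$ used to extract the common factor $\eta^{-1/2}\norm{\Gamma_t(x,x)}^{1/2}$, and the same method-of-mixtures self-normalized inequality for the RKHS-valued noise martingale, including the finite-dimensional truncation and the stopping-time argument for uniformity in $t$. (The paper phrases everything through an explicit Mercer feature map $\Phi:\cX\to\cL(\Real^n,\ell^2)$ rather than the sampling operator on $\cH_\Gamma(\cX)$, but the two descriptions are unitarily equivalent, and Sylvester's identity plays the role of your remark that $C_t$ and $G_t$ share their nonzero spectrum.)

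The one place your plan would break down is the constant you are aiming for in the noise bound. The Gaussian mixture over $h$ with covariance $\eta^{-1}I$ yields exactly $\overline M_t=\det(I+\eta^{-1}C_t)^{-1/2}\exp\bigl(\tfrac{1}{2\sigma^2}\norm{(C_t+\eta I)^{-1/2}S_t^{\top}E_t}_\Gamma^2\bigr)$, i.e.\ your constant is $c=1/\sigma^2$ with no $n$-dependence: the vector sub-Gaussianity of $\epsilon_i$ is fully spent in establishing the supermartingale property of $M_t(h)$ and cannot be re-used to shave a factor of $1/\sqrt n$. Consequently the self-normalized bound is $\norm{(C_t+\eta I)^{-1/2}S_t^{\top}E_t}_\Gamma\le\sigma\sqrt{2\log(1/\delta)+\log\det(I_{nt}+\eta^{-1}G_t)}$, and after multiplying by $\norm{(C_t+\eta I)^{-1/2}\Gamma_x}_{\op}=\eta^{-1/2}\norm{\Gamma_t(x,x)}^{1/2}$ the coefficient in the final inequality is $\sigma/\sqrt{\eta}$, not $\sigma/\sqrt{n}$. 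The $\sqrt n$ in the displayed statement is a typo for $\sqrt\eta$ --- compare the definition of $\beta_t$ in Theorem~\ref{thm:cumulative-regret} and the bound in Lemma~\ref{lem:concentration-kernel-approx}, both of which carry $\sigma/\sqrt{\eta}$ against the same log-determinant. So your argument, carried out as written, proves the theorem with $\sqrt\eta$ in place of $\sqrt n$; the part of your plan that promises to ``track the constant'' to manufacture a genuine $1/\sqrt n$ should be dropped, since that target is neither achievable by this method nor actually intended by the statement.
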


The significance of this bound can be better understood by studying the log-determinant term, and for this, we again draw a connection to MT-GPs. If $f \sim \cG\cP(0,\Gamma)$ and $\epsilon_t \sim \cN(0, \eta I_n)$ i.i.d., then the \emph{mutual information} between $f$ and the outputs $Y_t$ is exactly equal to $\frac{1}{2}\log\det(I_{nt}+\eta^{-1}G_t))$, and it is a measure for the reduction in the
uncertainty or, equivalently, the information gain about $f$. Note that while we use GPs to describe the uncertainty in estimating the unknown function $f$, the bound is \emph{frequentist} and does not need any \emph{Bayesian} assumption about $f$. Similar to the single-task setting \citep{durand2018streaming}, the bound is proved by deriving a new self-normalized concentration inequality for martingales in the $\ell_2$ space.\footnote{Theorem \ref{thm:concentration} can even be generalized to the regime of infinite-task learning \citep{kadri2016operator,brault2019infinite}, where the observations lie in a Hilbert space $\cH$, and thus can be of independent interest. The only technical assumption that one will need is that the multi-task kernel $\Gamma(x,x)$ has a finite trace, which trivially holds in the finite-task setting.} We note here that 
\citet{astudillo2019bayesian} consider the much simpler setting of noise-free outputs and their bound can be re-derived as a special case of Theorem \ref{thm:concentration}.


\begin{remark} The multi-task kernel $\Gamma$ can be seen as a scalar kernel, $\Gamma(x,x')_{ij}=k\left((x,i),(x',j)\right)$, $i,j \in [n]$, and $G_t$ as an $nt \times nt$ kernel matrix of $k$ evaluated at points $(x_s,i)$, $s \in [t]$, $i \in [n]$. In this case, one can use \citet[Theorem 2]{chowdhury2017kernelized} to derive concentration bounds for each task $f_i$ separately and combine them together to obtain a result similar to Theorem \ref{thm:concentration} but with a notable change -- $\norm{\Gamma_t(x,x)}$ being replaced by $\Tr\left(\Gamma_t(x,x)\right)$. Thus, in general, we prove a tighter concentration inequality which eventually leads to a $O(\sqrt{n})$ factor saving in the final regret bound.   
\end{remark}



\subsection{Regret bounds}
\label{subsec:regret-bound}

Theorem \ref{thm:concentration} allows for a principled way to tune the confidence radii (i.e., $\beta_t$ and $\tilde{\beta}_t$) of our algorithms and achieve low regret. We now present the regret bound of MT-KB, which, to the best of our knowledge, is the first frequentist regret guarantee for multi-task BO under any general MT kernel.




\begin{mytheorem}[Cumulative regret of MT-KB]
	Let $f \in \cH_\Gamma(\cX)$, $\norm{f}_\Gamma \leq b$ and $\norm{\Gamma(x,x)} \leq \kappa$ for all $x \in \cX$. Let the scalarization function $s_\lambda$ be $L_\lambda$-Lipschitz, $L_\lambda \leq L$ for all $\lambda \in \Lambda$ and the noise vectors $\lbrace \epsilon_t \rbrace_{t \geq 1}$ be $\sigma$-sub-Gaussian. Then, for any $\eta >0$ and $\delta \in (0,1]$, MT-KB with
	\beqn
	\beta_t=b+\frac{\sigma}{\sqrt{\eta}}\sqrt{2\log(1/\delta)+\sum_{s=1}^{t}\log \det\left(I_n+\eta^{-1}\Gamma_{s-1}(x_s,x_s)\right)}~,
	\eeqn
	enjoys, with probability at least $1-\delta$, the regret bound
	\beqn
		R_C^{\text{MT-KB}}(T) \leq 2L\left(b+\frac{\sigma}{\sqrt{\eta}}\sqrt{\left(2\log(1/\delta)+\gamma_{nT}(\Gamma,\eta)\right)}\right)\sqrt{(1+\kappa/\eta)T\sum\nolimits_{t=1}^{T}\norm{\Gamma_{t}(x_t,x_t)}}\;,
	\eeqn
	where $\gamma_{nT}(\Gamma,\eta):=\max_{\cX_T \subset \cX}\frac{1}{2}\log\det\left(I_{nT}+\eta^{-1}G_T\right)$ denotes the maximum information gain.
	\label{thm:cumulative-regret}
\end{mytheorem}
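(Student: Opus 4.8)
The plan is to run the classical optimism‑in‑the‑face‑of‑uncertainty argument, but with three vector‑valued ingredients replacing their scalar counterparts from the GP‑UCB analysis: (i) showing that the prescribed $\beta_t$ makes $u_t$ a legitimate upper confidence bound on the scalarized objective; (ii) the per‑round ``instantaneous regret $\le$ twice the confidence width'' step obtained from optimality of $x_t$; and (iii) summing the confidence widths and converting them into the information‑gain quantity $\gamma_{nT}(\Gamma,\eta)$ together with the factor $1+\kappa/\eta$.

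For (i) I first establish the log‑determinant chain rule
\[
\log\det\!\big(I_{nt}+\eta^{-1}G_t\big)=\sum_{s=1}^{t}\log\det\!\big(I_n+\eta^{-1}\Gamma_{s-1}(x_s,x_s)\big),
\]
by induction on $t$: writing $I_{nt}+\eta^{-1}G_t$ as a $2\times 2$ block matrix with $I_{n(t-1)}+\eta^{-1}G_{t-1}$ in the top‑left corner and the block for $x_t$ in the bottom‑right corner, a Schur‑complement computation using the identity $\eta^{-2}\big(I_{n(t-1)}+\eta^{-1}G_{t-1}\big)^{-1}=\eta^{-1}\big(G_{t-1}+\eta I_{n(t-1)}\big)^{-1}$ shows the Schur complement equals $I_n+\eta^{-1}\Gamma_{t-1}(x_t,x_t)$, yielding the multiplicative recursion. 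Combining this identity with Theorem~\ref{thm:concentration} (applied at index $t-1$) and the $L_{\lambda}$‑Lipschitzness of $s_\lambda$, the specific form of $\beta_t$ is seen to dominate the confidence width, so that on an event of probability at least $1-\delta$, for every $t$, every $x\in\cX$ and the realized $\lambda_t$,
\[
\big|s_{\lambda_t}(f(x))-s_{\lambda_t}(\mu_{t-1}(x))\big|\le L_{\lambda_t}\,\beta_{t-1}\,\norm{\Gamma_{t-1}(x,x)}^{1/2},
\]
hence $s_{\lambda_t}(f(x))\le u_t(x)$ and $u_t(x)\le s_{\lambda_t}(f(x))+2L_{\lambda_t}\beta_{t-1}\norm{\Gamma_{t-1}(x,x)}^{1/2}$ for all $x$.

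On this event, step (ii) is immediate: since $x_t\in\argmax_{x}u_t(x)$,
\[
s_{\lambda_t}\!\big(f(x_{\lambda_t}^\star)\big)\le u_t(x_{\lambda_t}^\star)\le u_t(x_t)\le s_{\lambda_t}\!\big(f(x_t)\big)+2L_{\lambda_t}\beta_{t-1}\norm{\Gamma_{t-1}(x_t,x_t)}^{1/2},
\]
so the per‑round regret is at most $2L\,\beta_{t-1}\norm{\Gamma_{t-1}(x_t,x_t)}^{1/2}$ using $L_{\lambda_t}\le L$. For (iii) I relate the width used at round $t$, which carries the index $t-1$, to $\Gamma_t(x_t,x_t)$ via the rank‑$n$ posterior‑covariance update $\Gamma_t(x_t,x_t)=\eta\,\Gamma_{t-1}(x_t,x_t)\big(\Gamma_{t-1}(x_t,x_t)+\eta I_n\big)^{-1}$ (again a Schur‑complement consequence of the block form of $(G_t+\eta I_{nt})^{-1}$). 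This gives $\Gamma_{t-1}(x_t,x_t)=\big(I_n+\eta^{-1}\Gamma_{t-1}(x_t,x_t)\big)\Gamma_t(x_t,x_t)$, and since $\Gamma_{t-1}(x,x)\preceq\Gamma(x,x)$ with $\norm{\Gamma(x,x)}\le\kappa$, submultiplicativity of the operator norm yields $\norm{\Gamma_{t-1}(x_t,x_t)}\le(1+\kappa/\eta)\norm{\Gamma_t(x_t,x_t)}$. Summing the per‑round bound over $t$, using monotonicity of $t\mapsto\beta_t$ to replace $\beta_{t-1}$ by $\beta_{T-1}$, Cauchy--Schwarz, and then bounding $\beta_{T-1}$ through the chain rule by $b+\frac{\sigma}{\sqrt\eta}\sqrt{2\log(1/\delta)+\gamma_{nT}(\Gamma,\eta)}$ (up to constants, since $\sum_{s=1}^{T-1}\log\det(\cdot)\le 2\gamma_{nT}(\Gamma,\eta)$), delivers the claimed bound.

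The main obstacle is the matrix algebra underlying ingredients (i) and (iii): in the scalar ($n=1$) case these are one‑line facts about posterior variances, whereas here $\Gamma_{s-1}(x_s,x_s)$ is an $n\times n$ matrix, so one must derive both the log‑determinant chain rule and the recursion $\Gamma_t(x_t,x_t)=\eta\,\Gamma_{t-1}(x_t,x_t)(\Gamma_{t-1}(x_t,x_t)+\eta I_n)^{-1}$ from the block Schur‑complement form of $(G_t+\eta I_{nt})^{-1}$, and then argue throughout in operator norm (using $\norm{AB}\le\norm{A}\norm{B}$ and $\Gamma_{t-1}\preceq\Gamma$) rather than with scalars. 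Care is also needed to verify that $\beta_t$ as defined indeed upper bounds the confidence width produced by Theorem~\ref{thm:concentration}; everything else — optimism and Cauchy--Schwarz — is routine.
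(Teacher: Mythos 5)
Your proposal is correct and follows essentially the same route as the paper: the log-determinant chain rule via Schur complements (the paper's Lemma~\ref{lem:sum-of-pred-variances}), the confidence bound from Theorem~\ref{thm:concentration} combined with Lipschitzness, the optimism step, Cauchy--Schwarz, and the comparison $\norm{\Gamma_{t-1}(x_t,x_t)}\le(1+\kappa/\eta)\norm{\Gamma_t(x_t,x_t)}$. The only (valid) variation is that you obtain this last comparison from the explicit matrix recursion $\Gamma_t(x_t,x_t)=\eta\,\Gamma_{t-1}(x_t,x_t)\left(\Gamma_{t-1}(x_t,x_t)+\eta I_n\right)^{-1}$ at the single point $x_t$, whereas the paper's Lemma~\ref{lem:pred-var-inequalities} proves the stronger operator inequality $\Gamma_{t-1}(x,x)\preceq(1+\kappa/\eta)\Gamma_t(x,x)$ for all $x$ via a Sherman--Morrison argument in feature space; both suffice here.
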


Theorem \ref{thm:cumulative-regret}, along with the upper bound $\sum\nolimits_{t=1}^{T}\norm{\Gamma_{t}(x_t,x_t)} \leq 2 \eta\gamma_{nT}(\Gamma,\eta)$, yields the more compact regret bound $\tilde{O}\big(b\sqrt{ T\gamma_{nT}(\Gamma,\eta)}+\gamma_{nT}(\Gamma,\eta)\sqrt{T}\big)$. We note here that the bound for single-task case \citep{chowdhury2017kernelized} can be recovered by setting $n=1$. Furthermore, since the single-task bound is shown to be tight upto a poly-logarithmic factor \citep{scarlett2017lower}, our bound, we believe, is also tight in terms of dependence on $T$.
Now, we instantiate Theorem \ref{thm:cumulative-regret} for the special case of separable kernels to point out the novel insights and improvements that our analysis unearths as compared to existing work.  

\begin{mylemma}[Inter-task structure in regret bound]
Let $B$ be an $n \times n$ p.s.d. matrix and $\Gamma(x,x')=k(x,x')B$. Let $\norm{\Gamma(x,x)} \leq \kappa$ and $k(x,x)=1$ for all $x \in \cX$. Then the following holds:
\beqan
\gamma_{nT}(\Gamma,\eta) &\leq & \sum_{i \in [n]:\xi_i > 0}\gamma_T(k,\eta/\xi_i)\;,\\  \sum_{t=1}^{T}\norm{\Gamma_{t}(x_t,x_t)} &\leq & 2\eta \max\lbrace\kappa,1\rbrace\gamma_T(k,\eta)\;,
\eeqan
where $\xi_1,\ldots,\xi_n$ are the eigenvalues of $B$ and $\gamma_T(k,\alpha):=\max_{\cX_T \subset \cX}\frac{1}{2}\log\det\left(I_{T}+\alpha^{-1}K_T\right)$, $\alpha > 0$, is the maximum information gain associated with the scalar kernel $k$.
\label{lem:bound-ICM}
\end{mylemma}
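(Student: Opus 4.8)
The plan is to diagonalize the inter-task matrix $B=\sum_{i=1}^n\xi_i u_iu_i^{\top}$ and reduce both quantities to the corresponding scalar-kernel quantities for $k$, but with the noise level $\eta$ rescaled by the eigenvalues of $B$. For the first inequality I would use that the $nT\times nT$ block kernel matrix is a Kronecker product, $G_T = K_T\otimes B$ with $K_T=[k(x_i,x_j)]_{i,j=1}^T$, so that the eigenvalues of $G_T$ are exactly the products $\lambda\xi_i$ with $\lambda$ an eigenvalue of $K_T$ and $i\in[n]$. Hence $\det(I_{nT}+\eta^{-1}G_T)=\prod_{i=1}^n\det\!\big(I_T+(\xi_i/\eta)K_T\big)=\prod_{i:\xi_i>0}\det\!\big(I_T+(\eta/\xi_i)^{-1}K_T\big)$, and taking $\tfrac12\log$ gives, for \emph{every} choice of $\cX_T$, $\tfrac12\log\det(I_{nT}+\eta^{-1}G_T)=\sum_{i:\xi_i>0}\tfrac12\log\det\!\big(I_T+(\eta/\xi_i)^{-1}K_T\big)\le\sum_{i:\xi_i>0}\gamma_T(k,\eta/\xi_i)$. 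Since the last expression does not depend on $\cX_T$, maximizing the left-hand side over $\cX_T\subset\cX$ gives $\gamma_{nT}(\Gamma,\eta)\le\sum_{i:\xi_i>0}\gamma_T(k,\eta/\xi_i)$.

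For the second inequality I would first rewrite $\norm{\Gamma_t(x,x)}$ in scalar terms. Factoring $\xi_iK_t+\eta I_t=\xi_i\big(K_t+(\eta/\xi_i)I_t\big)$ in the ICM form of $\norm{\Gamma_t(x,x)}$ recorded earlier gives $\norm{\Gamma_t(x,x)}=\max_{i:\xi_i>0}\xi_i\,v_{t,\eta/\xi_i}(x)$, where $v_{t,\alpha}(x):=k(x,x)-k_t(x)^{\top}(K_t+\alpha I_t)^{-1}k_t(x)$ is the posterior variance of a scalar GP with kernel $k$ and noise $\alpha$. Passing to a feature map $k(x,x')=\langle\phi(x),\phi(x')\rangle$ and applying the push-through identity, $v_{t,\alpha}(x)=\alpha\,\phi(x)^{\top}(\Phi_t^{\top}\Phi_t+\alpha I)^{-1}\phi(x)$, so $\xi_i\,v_{t,\eta/\xi_i}(x)=\eta\,\phi(x)^{\top}(\Phi_t^{\top}\Phi_t+(\eta/\xi_i)I)^{-1}\phi(x)$. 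Now for any $\xi>0$ one has $\Phi_t^{\top}\Phi_t+(\eta/\xi)I\succeq\tfrac{1}{\max\{1,\xi\}}\big(\Phi_t^{\top}\Phi_t+\eta I\big)$ — check separately $\xi\le1$, where $\eta/\xi\ge\eta$, and $\xi>1$, where $\tfrac1\xi\Phi_t^{\top}\Phi_t\preceq\Phi_t^{\top}\Phi_t$ — hence $\xi_i\,v_{t,\eta/\xi_i}(x)\le\max\{1,\xi_i\}\,v_{t,\eta}(x)$. Taking the max over $i$ and using $\lambda_{\max}(B)=\norm{\Gamma(x,x)}/k(x,x)\le\kappa$ gives $\norm{\Gamma_t(x,x)}\le\max\{\kappa,1\}\,v_{t,\eta}(x)$. (In fact $\xi_i\,v_{t,\eta/\xi_i}(x)$ is nondecreasing in $\xi_i$, so the maximum is attained at $\xi_{\max}=\lambda_{\max}(B)$.)

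It then remains to sum and invoke the scalar information-gain bound. Using $\norm{\Gamma_t(x_t,x_t)}\le\max\{\kappa,1\}\,v_{t,\eta}(x_t)\le\max\{\kappa,1\}\,v_{t-1,\eta}(x_t)$ (posterior variance is monotone in the conditioning set) and the telescoping identity $\log\det(I_T+\eta^{-1}K_T)=\sum_{t=1}^T\log\!\big(1+\eta^{-1}v_{t-1,\eta}(x_t)\big)$, together with the elementary inequality relating $z$ and $\log(1+z)$ on $z\in[0,1]$ — applicable since $v_{t-1,\eta}(x_t)\le k(x_t,x_t)=1$ — one obtains $\sum_{t=1}^T v_{t-1,\eta}(x_t)\le 2\eta\,\gamma_T(k,\eta)$, and the claimed bound $\sum_{t=1}^T\norm{\Gamma_t(x_t,x_t)}\le 2\eta\max\{\kappa,1\}\,\gamma_T(k,\eta)$ follows.

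The step I expect to be the main obstacle is the reduction $\norm{\Gamma_t(x,x)}\le\max\{\kappa,1\}\,v_{t,\eta}(x)$: the multi-task posterior variance genuinely mixes $n$ distinct effective noise levels $\eta/\xi_i$, one per eigenspace of $B$, and collapsing all of them onto the single level $\eta$ requires the Loewner comparison between $(\Phi_t^{\top}\Phi_t+(\eta/\xi)I)^{-1}$ and $(\Phi_t^{\top}\Phi_t+\eta I)^{-1}$ to hold uniformly in $\xi$ (both below and above $1$). Everything after that is bookkeeping around the single-task information-gain lemma; the only other point needing care is the numerical constant in the final $z$-versus-$\log(1+z)$ step, where the normalization $k(x,x)=1$ (and the regime of $\eta$) fixes the factor, here $2\eta$.
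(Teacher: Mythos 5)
Your proof of the first inequality is correct and is exactly the paper's argument: the Kronecker-product structure $G_T=K_T\otimes B$ gives eigenvalues $\alpha_t\xi_i$, the determinant factors over the eigenvalues of $B$, and one takes the supremum over $\cX_T$ at the end. Your reduction of the second inequality to the scalar posterior variance, $\norm{\Gamma_t(x,x)}\le\max\lbrace\kappa,1\rbrace\,v_{t,\eta}(x)$ with $v_{t,\alpha}(x)=\alpha\,\phi(x)^{\top}(v_t+\alpha I)^{-1}\phi(x)$, is also correct and matches the paper's (you split on $\xi_i\lessgtr 1$ eigenvalue by eigenvalue, the paper first bounds $\xi_i\le\kappa$ and then splits on $\kappa\lessgtr 1$; same outcome). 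The Loewner comparison you flagged as the main obstacle is handled correctly.

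The genuine gap is in the final summation step. You relax $v_{t,\eta}(x_t)$ to $v_{t-1,\eta}(x_t)$ and then assert $\sum_{t=1}^{T}v_{t-1,\eta}(x_t)\le 2\eta\,\gamma_T(k,\eta)$ via the telescoping identity and the elementary comparison of $z$ with $\log(1+z)$. That intermediate claim is false: already for $T=1$ one has $v_{0,\eta}(x_1)=k(x_1,x_1)=1$, while $2\eta\,\gamma_1(k,\eta)=\eta\log(1+\eta^{-1})<1$ for every $\eta>0$. The elementary inequality on $z\in[0,1]$ only yields $z\le\log(1+\eta^{-1}z)/\log(1+\eta^{-1})$, hence $\sum_t v_{t-1,\eta}(x_t)\le 2\gamma_T(k,\eta)/\log(1+\eta^{-1})$, and since $1/\log(1+\eta^{-1})\ge\eta$ always, this route cannot recover the factor $\eta$ in the statement. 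The fix, which is what the paper does (its Lemma on the sum of predictive variances, specialized to $n=1$), is to keep the conditioning at index $t$ rather than $t-1$: writing $v_{t,\eta}(x_t)=\eta\,\phi(x_t)^{\top}(V_t+\eta I)^{-1}\phi(x_t)$ with $V_t-V_{t-1}=\phi(x_t)\phi(x_t)^{\top}$, the inequality $\Tr\left(A^{-1}(A-B)\right)\le\log\det A-\log\det B$ for $A\succeq B\succ 0$ telescopes to give $\sum_{t=1}^{T}\phi(x_t)^{\top}(V_t+\eta I)^{-1}\phi(x_t)\le\log\det\left(I_T+\eta^{-1}K_T\right)\le 2\gamma_T(k,\eta)$, whence $\sum_{t=1}^{T}v_{t,\eta}(x_t)\le 2\eta\,\gamma_T(k,\eta)$. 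With that substitution the rest of your argument goes through and yields the stated bound.
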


Lemma \ref{lem:bound-ICM}, along with Theorem \ref{thm:cumulative-regret}, leads to a regret bound that explicitly encodes the amount of similarity between tasks in terms of the spectral properties of $B$.
For example, consider the case $B= \omega I_n+(1-\omega) 1_n/n$, $\omega \in [0,1]$, which has one eigenvalue equal to $1$ and all others equal to $\omega$. In this case, we obtain $\gamma_{nT}(\Gamma,\eta)\leq\gamma_{T}(k,\eta)+(n-1)\gamma_{T}(k,\eta/\omega)$. Now $\gamma_T(k,\eta/\omega)$ is an increasing function in $\omega$, and in fact, $\gamma_T(k,\eta/\omega)=0$ when $\omega=0$.
Hence, a low value of $\omega$, i.e., a high amount of similarity between tasks, yields a low cumulative regret and vice-versa. (A numerical example is shown in Figure \ref{fig:cumulative-regret-plot}: (a) using the squared exponential kernel as $k$.) Moreover, for the extreme two cases of $\omega=0$ (all tasks identical) and $\omega=1$ (all tasks unrelated), the regret bounds are $\tilde{O}(\gamma_T(k,\eta)\sqrt{T})$ and $\tilde{O}(\gamma_T(k,\eta)\sqrt{nT})$, respectively. The bounds clearly assert that similar objectives can be learnt much faster together rather than learning them separately. To the best of our knowledge, this intuitive but important observation is not captured by any of the existing regret analysis \citep{zuluaga2013active,Paria2019AFF,belakaria2019max}.

\begin{remark} Existing works model each task independently by means of a diagonal multi-task kernel $\Gamma(x,x')=\Dg\left(k_1(x,x'),\ldots,k_n(x,x')\right)$ and prove regret bounds for this special setting. In contrast, Theorem \ref{thm:cumulative-regret} is applicable to any general multi-task kernel, and in the special case of diagonal kernel, yields, along with Lemma \ref{lem:bound-ICM}, a regret bound of $\tilde{O}(\max_{i}\gamma_T(k_i,\eta)\sqrt{nT})$. This bound, together with the discussion above, suggest that whereas on the one hand MT-KB exploits similarities between tasks efficiently, its performance on the other hand does not suffer when the tasks are unrelated. 
Another important point to note here is that we analyze the frequentist (worst-case) regret, which is a stronger notion of regret compared to the Bayesian one (defined as the expected cumulative regret under a prior distribution of $f$) as considered in previous works \citep{Paria2019AFF,belakaria2019max}. 
\end{remark}

We now present regret and complexity guarantees for MT-BKB, which, to the best of our knowledge, are first of their kinds for multi-task BO under kernel or GP approximation.

\begin{mytheorem}[Analysis of MT-BKB]
For any $\eta>0$, $\epsilon \in (0,1)$ and $\delta \in (0,1]$, let $\rho=(1+\epsilon)/(1-\epsilon)$ and $q = 6\rho\log (4T/\delta)/\epsilon^2$. Then, under the same hypothesis as Theorem \ref{thm:cumulative-regret}, if we run MT-BKB with 
\beqn
\tilde{\beta}_t 
 =b\left(1+1/\sqrt{1-\epsilon}\right)+\frac{\sigma}{\sqrt{\eta}}\sqrt{2\log(2/\delta)+\rho\sum_{s=1}^{t}\log \det\big(I_n+ \eta^{-1}\tilde{\Gamma}_{s-1}(x_s,x_s)\big)}~,
 \eeqn
 then, with probability at least $1-\delta$, the following holds:
\beqan
 R_C^{\text{MT-BKB}}(T) &\leq & 2\rho^{3/2}R_C^{\text{MT-KB}}(T)~, \\ 
 \forall t \in [T], \quad m_t &\leq &   6\rho q(1+\kappa/\eta)\sum\nolimits_{s=1}^{t}\norm{\Gamma_{s}(x_s,x_s)}~. 
 \eeqan

\label{thm:cumulative-regret-kernel-approx}
\end{mytheorem}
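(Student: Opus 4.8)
The argument mirrors the single-task BKB analysis of \citet{calandriello2019gaussian}, lifted to the operator-valued (multi-task) setting, and everything is reduced to a single inductive invariant: the Nyström-reweighted covariance is a two-sided spectral approximation of the exact one. Writing $V_t$ for the (possibly infinite-dimensional) feature covariance operator associated with $\Gamma$ and the played points, and $\tilde{V}_t$ for its subsampled counterpart in Algorithm~\ref{algo:MT-BKB} (suitably lifted back to feature space, say $\tilde{V}_t'$), the plan is to show that, with probability at least $1-\delta$, simultaneously for all $t \in [T]$,
\beqn
(1-\epsilon)\,(V_{t-1}+\eta I)\ \preceq\ \tilde{V}_{t}' + \eta I\ \preceq\ (1+\epsilon)\,(V_{t-1}+\eta I).
\eeqn
Via the Woodbury identity this is equivalent to the multiplicative sandwich $\rho^{-1}\norm{\Gamma_{t-1}(x,x)} \le \norm{\tilde{\Gamma}_{t-1}(x,x)} \le \rho\,\norm{\Gamma_{t-1}(x,x)}$, uniformly in $x \in \cX$. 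Granting this, both claims of the theorem follow by now-routine manipulations.

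\textbf{Establishing the spectral approximation.} This is the technical heart. Conditioning on $\cF_{t-1}$ and on the event that the invariant holds at round $t-1$, the inclusion probabilities $p_{t,i}=\min\{q\norm{\tilde{\Gamma}_{t-1}(x_i,x_i)},1\}$ are, up to the factor $\rho$, lower bounds on the exact $\eta$-ridge leverage scores of the blocks $(x_i,\cdot)$, $i\in[t]$. Consequently $\tilde{V}_t + \eta I$ is a sum of conditionally independent, self-adjoint, rank-at-most-$n$ random operators with the correct conditional mean, so an operator Bernstein/Chernoff inequality applies; the bound $\norm{\Gamma(x,x)}\le\kappa$ (finite trace) together with the effective-dimension bound $\sum_s\norm{\Gamma_s(x_s,x_s)}\le 2\eta\gamma_{nT}(\Gamma,\eta)$ controls the intrinsic dimension of the concentration. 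The choice $q=6\rho\log(4T/\delta)/\epsilon^2$ is calibrated so that the one-round, two-sided failure probability is at most $\delta/(4T)$; a union bound over $t\in[T]$, wrapped in an induction that uses the round-$(t-1)$ invariant to justify the round-$t$ sampling, closes the loop. Making the projection / ridge-leverage-function machinery valid for operator-valued kernels (the updates are block, dimension $n$, rather than rank one) is where the multi-task generalization needs care.

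\textbf{From the invariant to the two claims.} \emph{Valid UCB and regret.} Rerunning the self-normalized martingale argument behind Theorem~\ref{thm:concentration} inside the finite-dimensional feature space $\mathrm{range}(\tilde{\Phi}_t)$ yields, on the good event, $\norm{f(x)-\tilde{\mu}_{t-1}(x)}_2 \le \tilde{\beta}_{t-1}\norm{\tilde{\Gamma}_{t-1}(x,x)}^{1/2}$ for all $x$ and $t$; the extra $b/\sqrt{1-\epsilon}$ in $\tilde{\beta}_t$ is exactly the bias from $f$ not lying in the span of the dictionary, bounded by the lower spectral inequality. Validity $\tilde{u}_t(x)\ge s_{\lambda_t}(f(x))$ then follows from $L_{\lambda_t}$-Lipschitzness, and the usual UCB telescoping gives $s_{\lambda_t}(f(x^\star_{\lambda_t}))-s_{\lambda_t}(f(x_t)) \le 2L\tilde{\beta}_{t-1}\norm{\tilde{\Gamma}_{t-1}(x_t,x_t)}^{1/2} \le 2L\sqrt{\rho}\,\tilde{\beta}_{t-1}\norm{\Gamma_{t-1}(x_t,x_t)}^{1/2}$; using $\tilde{\Gamma}_{s-1}(x_s,x_s)\preceq\rho\,\Gamma_{s-1}(x_s,x_s)$ inside the log-determinants (and $1-\epsilon\ge\rho^{-1}$ for the bias term) bounds $\tilde{\beta}_{T-1}$ by roughly $2\rho$ times the MT-KB radius $\beta_{T-1}$, and Cauchy--Schwarz plus the rank-$\le n$ update inequality $\norm{\Gamma_{t-1}(x_t,x_t)}\le(1+\kappa/\eta)\norm{\Gamma_t(x_t,x_t)}$ finishes the sum; the $\rho$-factors combine to $2\rho^{3/2}$. \emph{Dictionary size.} Conditionally, $\mathbb{E}[m_t]=\sum_{i=1}^t p_{t,i}\le q\sum_{i=1}^t\norm{\tilde{\Gamma}_{t-1}(x_i,x_i)}\le q\rho\sum_{i=1}^t\norm{\Gamma_{t-1}(x_i,x_i)}$; monotonicity of the exact posterior covariance in the conditioning set gives $\norm{\Gamma_{t-1}(x_i,x_i)}\le\norm{\Gamma_{i-1}(x_i,x_i)}\le(1+\kappa/\eta)\norm{\Gamma_i(x_i,x_i)}$, so $\mathbb{E}[m_t]\le q\rho(1+\kappa/\eta)\sum_{s=1}^t\norm{\Gamma_s(x_s,x_s)}$, and a Bernstein/Chernoff bound for the independent Bernoullis $z_{t,i}$ (union bounded over $t$) upgrades this to the stated high-probability bound, absorbing the deviation into the constant $6$.

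\textbf{Main obstacle.} The crux is the self-referential nature of the dictionary: the accuracy of $\tilde{\Gamma}_{t-1}$, on which the round-$t$ inclusion probabilities depend, is precisely the inductive hypothesis, and the guarantee must hold uniformly over all $x\in\cX$ and all $t$ at once. Setting up the induction so that (a) the probabilistic events compose across rounds without leakage, (b) the operator-valued matrix-Bernstein step is valid in the block setting, and (c) the \emph{same} good event simultaneously yields the lower spectral bound (for accuracy, UCB validity and bias control) and the upper one (for the size bound), is the delicate part; the remainder is bookkeeping already largely present in the single-task proof.
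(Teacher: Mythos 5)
Your overall architecture matches the paper's: the same spectral sandwich $(1-\epsilon)\Phi_{\cX_t}^{\top}\Phi_{\cX_t}-\epsilon\eta I \preceq \Phi_{\cD_t}^{\top}\Phi_{\cD_t}\preceq (1+\epsilon)\Phi_{\cX_t}^{\top}\Phi_{\cX_t}+\epsilon\eta I$ serves as the single good event (the paper's Lemma \ref{lem:dictionary-prop}, which it obtains by reducing to a selection-matrix formulation and invoking \citet[Theorem 1]{calandriello2019gaussian} rather than redoing the operator-Bernstein induction you sketch); the $\rho$-sandwich on predictive variances (Lemma \ref{lem:approx-pred-var-bound}), the bias constant $c_\epsilon=1+1/\sqrt{1-\epsilon}$ coming from the projection of $f$ onto the dictionary span, the log-determinant comparison, the UCB telescoping via Lemma \ref{lem:pred-var-inequalities}, the union bound splitting $\delta$, and the $2\rho^{3/2}$ bookkeeping are all as in the paper.

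The one step where your plan, as stated, would fail is the noise term: you propose to ``rerun the self-normalized martingale argument inside $\mathrm{range}(\tilde{\Phi}_t)$.'' That subspace is determined by the dictionary $\cD_t$, which depends on all of $x_1,\ldots,x_t$ and on the Bernoulli draws made at round $t$; the projected increments $\hat{\Phi}_t(x_s)\epsilon_s$ are therefore not adapted to the filtration $\lbrace\cF_s\rbrace$, and the supermartingale construction behind Lemma \ref{lem:martingale-control} does not apply to them directly. The paper avoids this in Lemma \ref{lem:concentration-kernel-approx} by showing
\beqn
\norm{\hat{\Phi}_{\cX_t}^{\top}E_t}_{(\hat{V}_t+\eta I)^{-1}}^2 = E_t^{\top}\left(I_{nt}-\eta\big(\hat{\Phi}_{\cX_t}\hat{\Phi}_{\cX_t}^{\top}+\eta I_{nt}\big)^{-1}\right)E_t \leq \norm{\Phi_{\cX_t}^{\top}E_t}_{(V_t+\eta I)^{-1}}^2,
\eeqn
using $\hat{\Phi}_{\cX_t}\hat{\Phi}_{\cX_t}^{\top}=\Phi_{\cX_t}P_t\Phi_{\cX_t}^{\top}\preceq\Phi_{\cX_t}\Phi_{\cX_t}^{\top}$: it dominates the projected self-normalized norm by the exact one and then applies the original martingale bound once, in the full feature space, where adaptedness does hold. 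With that substitution your argument goes through; the remainder is the paper's proof.
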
 

Theorem \ref{thm:cumulative-regret-kernel-approx} shows that MT-BKB can achieve an order-wise similar regret scaling as MT-KB (up to a constant factor), but only at a fraction of the computational cost. To see this, we again consider the kernel $\Gamma(x,x')=k(x,x')B$. In this case, Theorem \ref{thm:cumulative-regret-kernel-approx} and Lemma \ref{lem:bound-ICM} together imply that the dictionary size $m_t$ is $\tilde{O}\left(\gamma_t(k,\eta)\right)$. Now $\gamma_t$
is itself bounded for specific scalar kernels $k$, e.g., it is
$O\left((\ln t)^{d}\right)$ for the squared exponential kernel \citep{srinivas2010gaussian}, yielding $m_t$ to be $\tilde{O}(1)$. This leads to a near-linear (in time $t$) per-step cost for MT-BKB compared to the cubic cost for MT-KB. Further, it is worth noting that MT-BKB can adapt to any desired
accuracy level $\epsilon$ of the Nystr\"{o}m approximation. A low value of $\epsilon$ corresponds to high desired accuracy and MT-BKB adapts to it by inducing more and more points in the dictionary, yielding accurate embeddings and thus, in turn, low regret. Conversely, if one is willing to compromise on the accuracy (given by a high value of $\epsilon$), then MT-BKB can greatly reduce the size of the dictionary, yielding a low time complexity. 
The analysis follows in the footsteps of \citet{calandriello2019gaussian}, but is carefully generalized to consider multi-task kernels. The regret bound is crucially achieved by showing that $\Gamma_t(x,x)/\rho \preceq \tilde{\Gamma}_t(x,x) \preceq \rho\Gamma_t(x,x)$, i.e., MT-BKB's variance estimates are always almost close to the exact ones ($A \succeq B$ denotes that the matrix $A-B$ is p.s.d.). This not only helps us avoid variance starvation which is known to happen with classical sparse GP approximations \citep{wang2018batched}, but also, allows us to set $\tilde{\beta}_t$ efficiently and in a data-adaptive way.


 

\section{Experiments}
\label{sec:experiments} 

In order to investigate the practical benefits offered by learning with multi-task kernels, we compare MT-KB and MT-BKB with single-task algorithms that enjoy regret guarantees under RKHS smoothness assumptions. Specifically, we consider GP-UCB \citep{chowdhury2017kernelized} and its Nystr\"{o}m approximation BKB \citep{calandriello2019gaussian} as baselines, where each task is learnt independently and inter-task structure is not exploited. We call these baselines {\em independent task kernelized bandits (IT-KB)} and {\em budgeted kernelized bandits (IT-BKB)}, respectively. Whenever the objective is not explicitly generated from an RKHS, we also compare with MOBO \citep{Paria2019AFF}, which has better regret performance than other methods \citep{knowles2006parego,ponweiser2008multiobjective,emmerich2008computation,hernandez2016predictive} that model each task with an independent GP. In all simulations, we set $\eta=0.1$, $\delta=0.1$ and $\epsilon=0.5$, and use the Chebyshev scalarization. Similar to \citep{Paria2019AFF}, we sample from $P_\lambda$ as $\lambda=\alpha/\lVert \alpha \rVert_1$, where $\alpha_i=\norm{u}_1/u_i$, $i \in [n]$, and $u$ is sampled uniformly from $[0,1]^n$. We compare the algorithms on the following MOO problems and plot mean and standard deviation (over $10$ independent trials) of the time-average cumulative regret $\frac{1}{T}R_C(T)$ in Fig. \ref{fig:cumulative-regret-plot}: (b)-(f). (More details in appendix \ref{app:experiments}.)

\begin{figure}[t!]
\centering
\subfigure[Tasks with varying similarities]{\includegraphics[height=1.15in,width=1.75in]{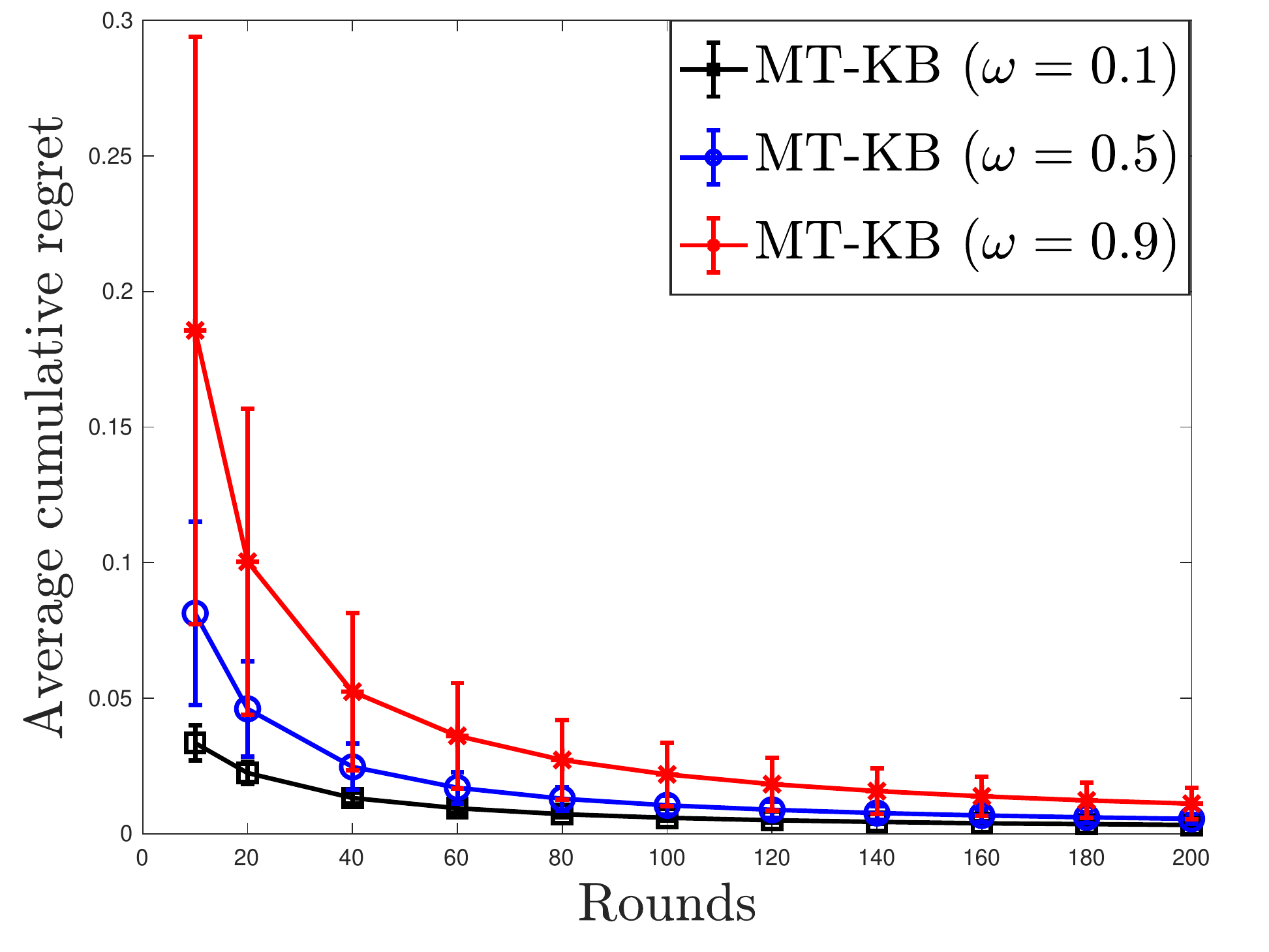}}
\subfigure[RKHS function ($2$ tasks)]{\includegraphics[height=1.15in,width=1.75in]{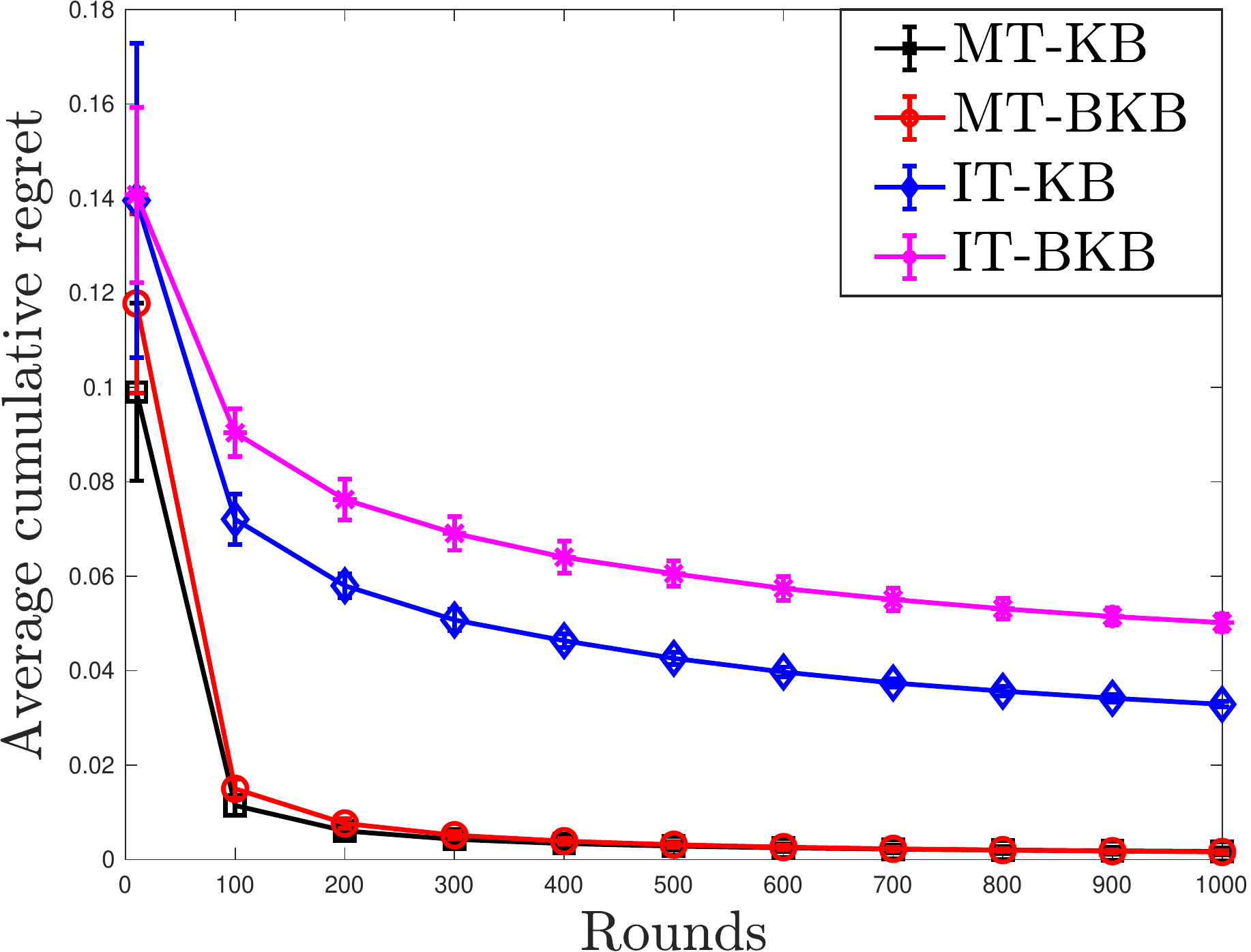}}
\subfigure[RKHS function ($20$ tasks)]{\includegraphics[height=1.15in,width=1.75in]{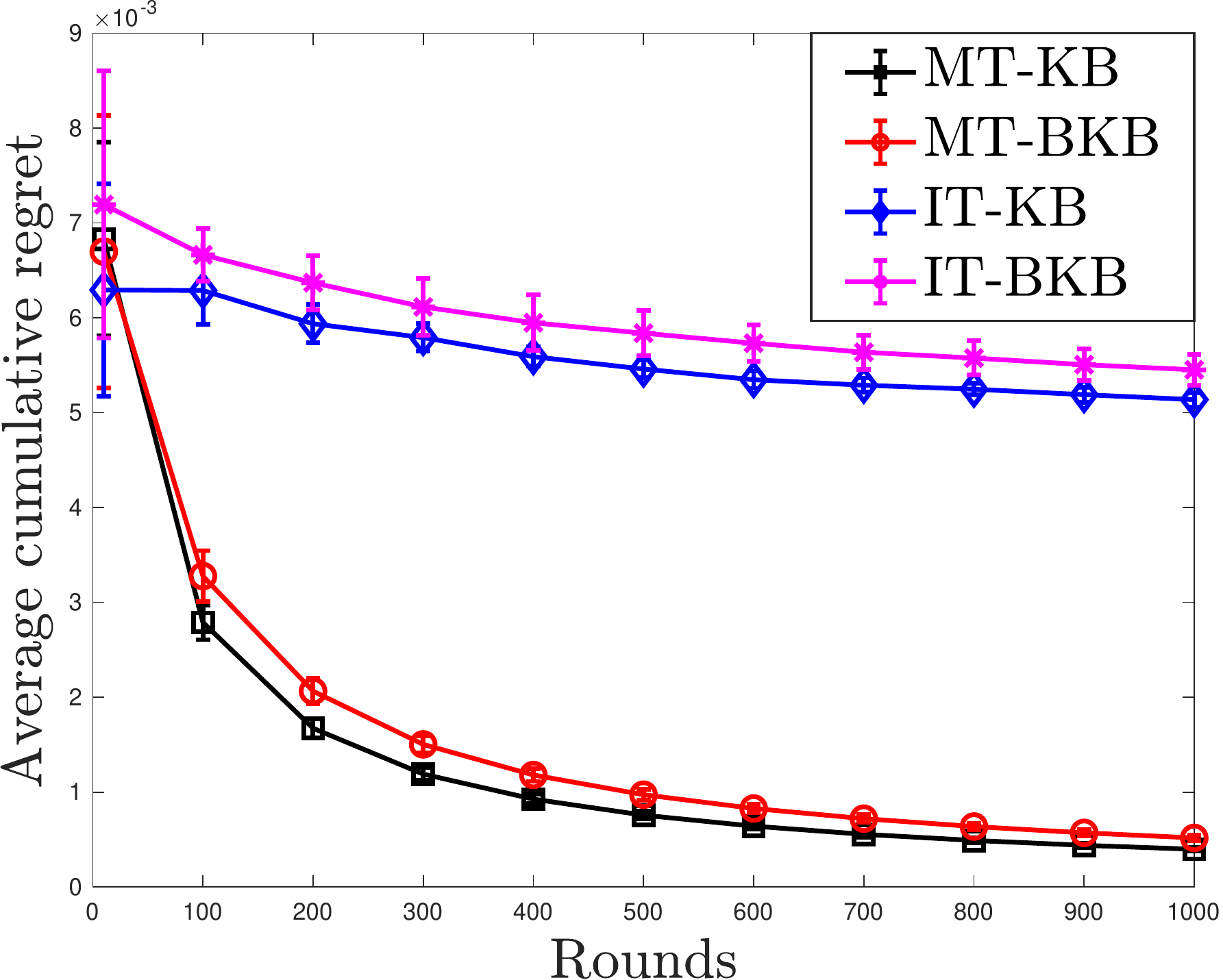}}
\subfigure[Perturbed sine function]{\includegraphics[height=1.15in,width=1.75in]{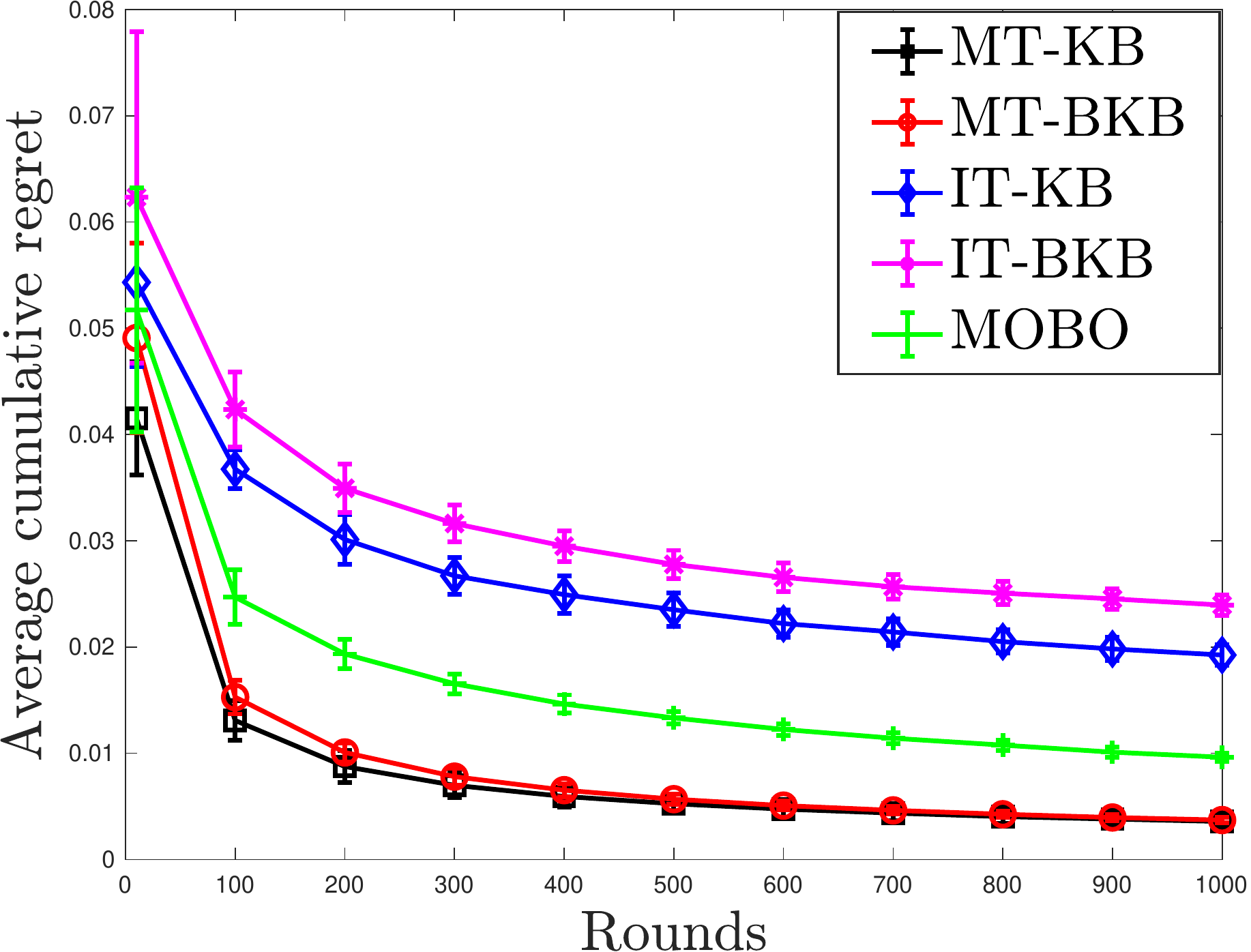}}
\subfigure[Shifted Branin-Hoo]{\includegraphics[height=1.15in,width=1.75in]{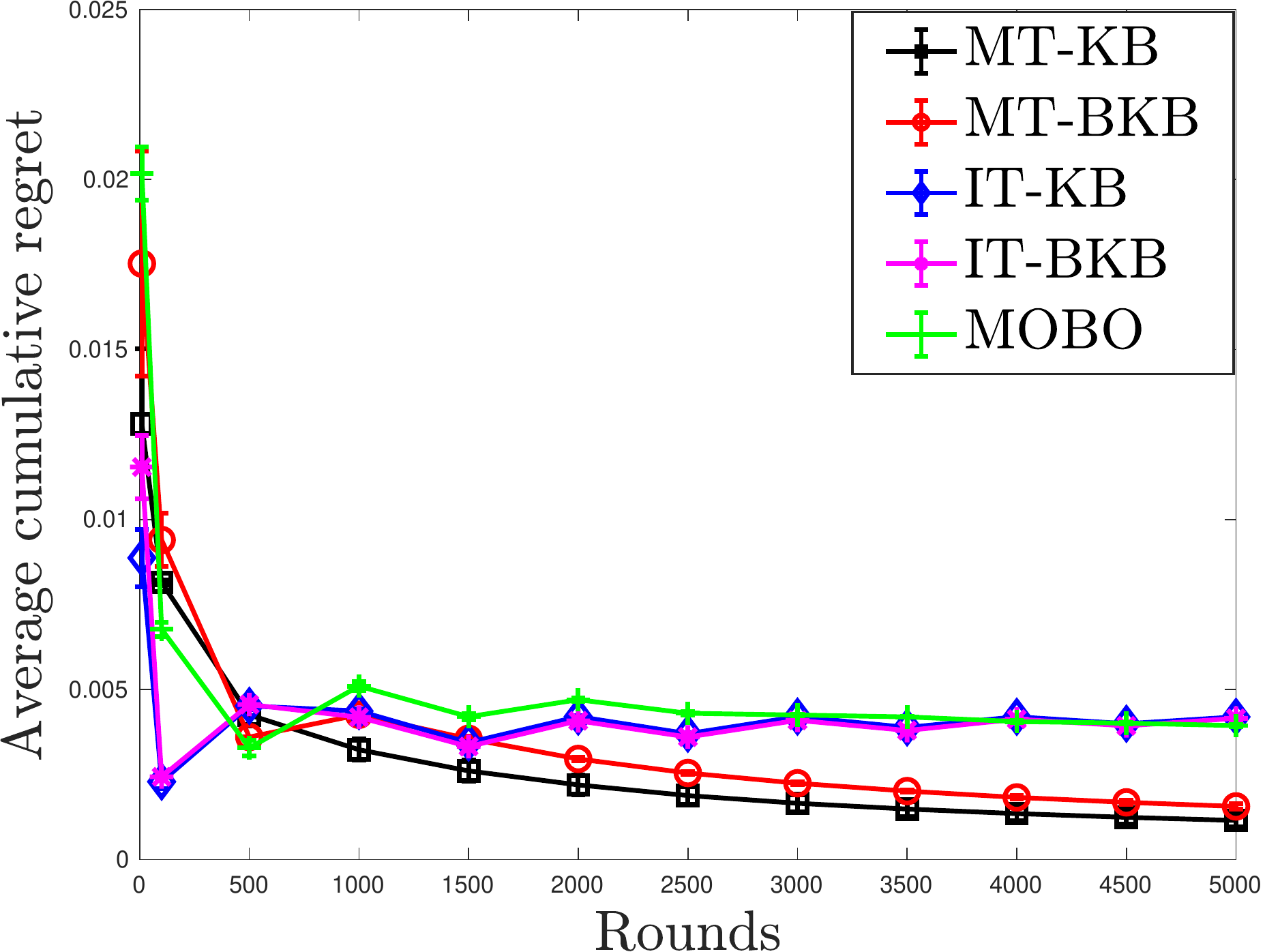}}
\subfigure[Sensor measurements]{\includegraphics[height=1.15in,width=1.75in]{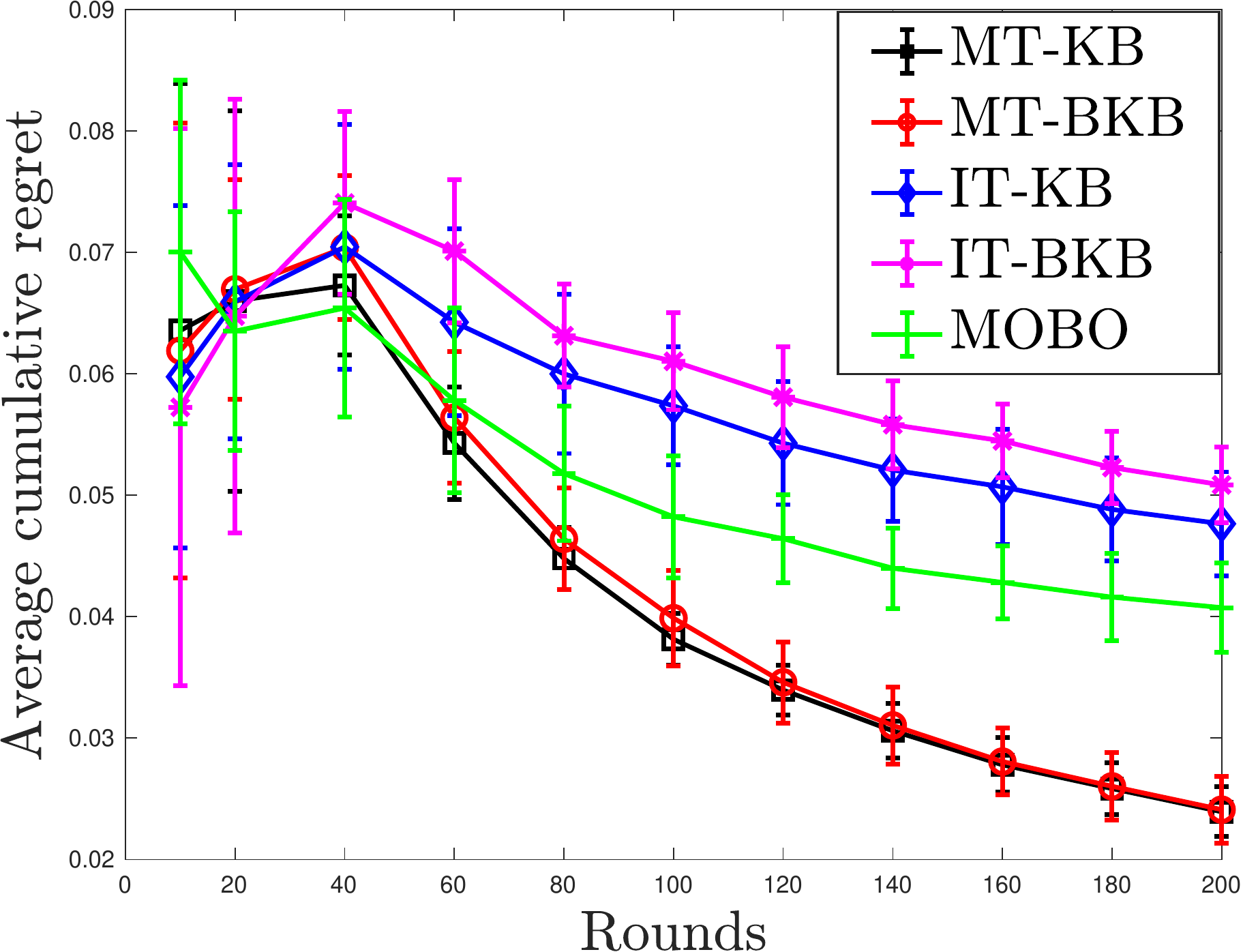}}
\caption{(a) Regret performance of MT-KB under varying inter-task similarities, (b)-(f) Comparison of average cumulative regret of MT-KB and MT-BKB with IT-KB, IT-BKB and MOBO on different MOO problems.
}
\label{fig:cumulative-regret-plot} 
\end{figure}

\paragraph{RKHS function} We generate a vector-valued RKHS element as $f(\cdot)=\sum_{i\leq 50}\Gamma(\cdot,x_i)c_i$, where the domain $\cX$ is an $0.01$-net of the interval $[0,1]$, each $x_i \in \cX$ and each $c_i$ is uniformly sampled from $[-1,1]^n$. We consider the ICM kernel $\Gamma(x,x')=k(x,x')B$ adopting a SE kernel with lengthscale $0.2$ for its scalar part and set $B = A^{\top}A$, where the elements of the $n \times n$ matrix $A$ is uniformly sampled from $[0,1]$. We set $\kappa$ as the largest eigenvalue of $B$ and bound the RKHS norm of $f$ using $b=\max_{x} \norm{f(x)}_2/\kappa$. The noise vectors are taken i.i.d. $\cN(0,\sigma^2I_n)$, $\sigma=0.1$. We compare the algorithms for $n=2$ and $n=20$ tasks. We observe that learning with MT kernels is much faster than learning the tasks independently -- even more so when no. of tasks are higher (Fig. \ref{fig:cumulative-regret-plot}: (b), (c)).

\paragraph{Perturbed sine function} We study a setting similar to \citep{baldassarre2012multi}, where $\cX$ is an $0.01$-net of the interval $[0,1]$ and we have $n=4$ tasks. Each task is given by a function $f_i(x) = \sin(2\pi x)+ 0.6f^{\text{pert}}_i(x)$ corrupted
by Gaussian noise of variance $0.01$. Each perturbation function $f^{\text{pert}}_i$ is a weighted sum of three Gaussians of width $0.1$ centered at $x_1 = 0.05$, $x_2 = 0.4$ and $x_3 = 0.7$, where task-specific weights are carefully chosen in order to yield tasks that are related by the common function, but also have local differences. We use the kernel $\Gamma_{\omega}(x,x')=k(x,x')\left(\omega I_n  + (1-\omega)1_n/n\right)$ that imposes a common similarity among all components and results are shown for $\omega=0.4$ (Fig. \ref{fig:cumulative-regret-plot}: (d)).

\paragraph{Shifted Branin-Hoo} The Branin-Hoo function, defined over a subset of $\Real^2$, is a common benchmark for BO \citep{jones2001taxonomy}. We consider $9$ shifted Branin-Hoo's as related tasks, where the $i$-th task is a translation of the function by $i\%$ along either axis, and run algorithms with the kernel $\Gamma_{\omega}$, $\omega=0.5$ (Fig. \ref{fig:cumulative-regret-plot}: (e)).

\paragraph{Sensor measurements}  
We take temperature, light and humidity measurements from 54 sensors collected in the Intel Berkeley lab \citep{srinivas2010gaussian} in the context of MOO. We have $3$ tasks, one for each variable, and each task $f_i(x)$ is given by the empirical mean of $50\%$ of the readings recorded at the sensor placed at location $x$. We take remaining readings to estimate an ICM kernel and run our algorithms with this kernel. Specifically, for its scalar part, we fit an SE kernel on sensor locations, and for its matrix part, we estimate inter-task similarities as $B=\frac{1}{m}R^{\top}K^{-1}R$, where $m$ denotes number of readings, $R$ is an $m \times 3$ matrix of readings for all tasks and $K$ is the $m \times m$ gram matrix of SE kernel. The idea is to de-correlate $R$ with $K^{-1}$ first so that only correlation with respect to $B$ is left. Further, we compute the empirical variance of sensor readings for each task and take the largest of those as $\sigma^2$. We see that the regret performance of MT-KB and MT-BKB are much better than IT-KB, IT-BKB and MOBO that do not use the inter-task structure in the form of the matrix $B$ (Fig. \ref{fig:cumulative-regret-plot}: (f)).

\section{Concluding remarks}
To the best of our knowledge, we prove the first rigorous regret bounds for multi-task Bayesian optimization that capture inter-task dependencies. We have demonstrated the shortcoming of modelling each task independently without making use of task similarities, and developed algorithms using multi-task kernels, which perform well in practice. We believe that our regret bounds are tight in terms of dependence on the time horizon. However, whether the dependence on the inter-task structure is optimal or not remains an important open question. It would also be interesting to see whether our multi-task concentration can be applied to several other interesting settings, for example optimizing under heavy-tailed corruptions \citep{chowdhury2019bayesian}, with a batch of inputs \citep{desautels2014parallelizing}, learning with kernel mean embeddings \citep{Chowdhury2020ActiveLO}, modelling the tranisition structure of a Markov decision process \citep{chowdhury2019online} to name a few.

\bibliographystyle{plainnat}
\bibliography{main}

\begin{thebibliography}{50}
\providecommand{\natexlab}[1]{#1}
\providecommand{\url}[1]{\texttt{#1}}
\expandafter\ifx\csname urlstyle\endcsname\relax
  \providecommand{\doi}[1]{doi: #1}\else
  \providecommand{\doi}{doi: \begingroup \urlstyle{rm}\Url}\fi

\bibitem[Alaoui and Mahoney(2015)]{alaoui2015fast}
Ahmed Alaoui and Michael~W Mahoney.
\newblock Fast randomized kernel ridge regression with statistical guarantees.
\newblock In \emph{Advances in Neural Information Processing Systems}, pages
  775--783, 2015.

\bibitem[Alvarez et~al.(2011)Alvarez, Rosasco, and
  Lawrence]{alvarez2011kernels}
Mauricio~A Alvarez, Lorenzo Rosasco, and Neil~D Lawrence.
\newblock Kernels for vector-valued functions: A review.
\newblock \emph{arXiv preprint arXiv:1106.6251}, 2011.

\bibitem[Astudillo and Frazier(2019)]{astudillo2019bayesian}
Raul Astudillo and Peter Frazier.
\newblock Bayesian optimization of composite functions.
\newblock In \emph{International Conference on Machine Learning}, pages
  354--363, 2019.

\bibitem[Baldassarre et~al.(2012)Baldassarre, Rosasco, Barla, and
  Verri]{baldassarre2012multi}
Luca Baldassarre, Lorenzo Rosasco, Annalisa Barla, and Alessandro Verri.
\newblock Multi-output learning via spectral filtering.
\newblock \emph{Machine learning}, 87\penalty0 (3):\penalty0 259--301, 2012.

\bibitem[Belakaria et~al.(2019)Belakaria, Deshwal, and Doppa]{belakaria2019max}
Syrine Belakaria, Aryan Deshwal, and Janardhan~Rao Doppa.
\newblock Max-value entropy search for multi-objective bayesian optimization.
\newblock In \emph{Advances in Neural Information Processing Systems}, pages
  7823--7833, 2019.

\bibitem[Bonilla et~al.(2008)Bonilla, Chai, and Williams]{bonilla2008multi}
Edwin~V Bonilla, Kian~M Chai, and Christopher Williams.
\newblock Multi-task gaussian process prediction.
\newblock In \emph{Advances in neural information processing systems}, pages
  153--160, 2008.

\bibitem[Brault et~al.(2019)Brault, Lambert, Szabo, Sangnier, and d’Alche
  Buc]{brault2019infinite}
Romain Brault, Alex Lambert, Zoltan Szabo, Maxime Sangnier, and Florence
  d’Alche Buc.
\newblock Infinite task learning in rkhss.
\newblock In \emph{The 22nd International Conference on Artificial Intelligence
  and Statistics}, pages 1294--1302, 2019.

\bibitem[Brochu et~al.(2010)Brochu, Cora, and De~Freitas]{brochu2010tutorial}
Eric Brochu, Vlad~M Cora, and Nando De~Freitas.
\newblock A tutorial on bayesian optimization of expensive cost functions, with
  application to active user modeling and hierarchical reinforcement learning.
\newblock \emph{arXiv preprint arXiv:1012.2599}, 2010.

\bibitem[Calandriello et~al.(2019)Calandriello, Carratino, Lazaric, Valko, and
  Rosasco]{calandriello2019gaussian}
Daniele Calandriello, Luigi Carratino, Alessandro Lazaric, Michal Valko, and
  Lorenzo Rosasco.
\newblock Gaussian process optimization with adaptive sketching: Scalable and
  no regret.
\newblock \emph{In Conference on Learning Theory}, 2019.

\bibitem[Caponnetto et~al.(2008)Caponnetto, Micchelli, Pontil, and
  Ying]{caponnetto2008universal}
Andrea Caponnetto, Charles~A Micchelli, Massimiliano Pontil, and Yiming Ying.
\newblock Universal multi-task kernels.
\newblock \emph{Journal of Machine Learning Research}, 9\penalty0
  (Jul):\penalty0 1615--1646, 2008.

\bibitem[Carmeli et~al.(2010)Carmeli, De~Vito, Toigo, and
  Umanit{\'a}]{carmeli2010vector}
Claudio Carmeli, Ernesto De~Vito, Alessandro Toigo, and Veronica Umanit{\'a}.
\newblock Vector valued reproducing kernel hilbert spaces and universality.
\newblock \emph{Analysis and Applications}, 8\penalty0 (01):\penalty0 19--61,
  2010.

\bibitem[Caruana(1997)]{caruana1997multitask}
Rich Caruana.
\newblock Multitask learning.
\newblock \emph{Machine learning}, 28\penalty0 (1):\penalty0 41--75, 1997.

\bibitem[Chowdhury et~al.(2020)Chowdhury, dos Santos~de Oliveira, and
  Ramos]{Chowdhury2020ActiveLO}
S.~R. Chowdhury, Rafael dos Santos~de Oliveira, and F.~Ramos.
\newblock Active learning of conditional mean embeddings via bayesian
  optimisation.
\newblock 2020.

\bibitem[Chowdhury and Gopalan(2017)]{chowdhury2017kernelized}
Sayak~Ray Chowdhury and Aditya Gopalan.
\newblock On kernelized multi-armed bandits.
\newblock In \emph{Proceedings of the 34th International Conference on Machine
  Learning-Volume 70}, pages 844--853. JMLR. org, 2017.

\bibitem[Chowdhury and Gopalan(2019{\natexlab{a}})]{chowdhury2019bayesian}
Sayak~Ray Chowdhury and Aditya Gopalan.
\newblock Bayesian optimization under heavy-tailed payoffs.
\newblock In \emph{Advances in Neural Information Processing Systems}, pages
  13790--13801, 2019{\natexlab{a}}.

\bibitem[Chowdhury and Gopalan(2019{\natexlab{b}})]{chowdhury2019online}
Sayak~Ray Chowdhury and Aditya Gopalan.
\newblock Online learning in kernelized markov decision processes.
\newblock In \emph{The 22nd International Conference on Artificial Intelligence
  and Statistics}, pages 3197--3205, 2019{\natexlab{b}}.

\bibitem[Desautels et~al.(2014)Desautels, Krause, and
  Burdick]{desautels2014parallelizing}
Thomas Desautels, Andreas Krause, and Joel~W Burdick.
\newblock Parallelizing exploration-exploitation tradeoffs in gaussian process
  bandit optimization.
\newblock \emph{Journal of Machine Learning Research}, 15:\penalty0 3873--3923,
  2014.

\bibitem[Drineas and Mahoney(2005)]{drineas2005nystrom}
Petros Drineas and Michael~W Mahoney.
\newblock On the nystr{\"o}m method for approximating a gram matrix for
  improved kernel-based learning.
\newblock \emph{journal of machine learning research}, 6\penalty0
  (Dec):\penalty0 2153--2175, 2005.

\bibitem[Drugan and Nowe(2013)]{drugan2013designing}
Madalina~M Drugan and Ann Nowe.
\newblock Designing multi-objective multi-armed bandits algorithms: A study.
\newblock In \emph{The 2013 International Joint Conference on Neural Networks
  (IJCNN)}, pages 1--8. IEEE, 2013.

\bibitem[Drugan and Now{\'e}(2014)]{drugan2014scalarization}
Madalina~M Drugan and Ann Now{\'e}.
\newblock Scalarization based pareto optimal set of arms identification
  algorithms.
\newblock In \emph{2014 International Joint Conference on Neural Networks
  (IJCNN)}, pages 2690--2697. IEEE, 2014.

\bibitem[Durand et~al.(2018)Durand, Maillard, and Pineau]{durand2018streaming}
Audrey Durand, Odalric-Ambrym Maillard, and Joelle Pineau.
\newblock Streaming kernel regression with provably adaptive mean, variance,
  and regularization.
\newblock \emph{The Journal of Machine Learning Research}, 19\penalty0
  (1):\penalty0 650--683, 2018.

\bibitem[Emmerich and Klinkenberg(2008)]{emmerich2008computation}
Michael Emmerich and Jan-willem Klinkenberg.
\newblock The computation of the expected improvement in dominated hypervolume
  of pareto front approximations.
\newblock \emph{Rapport technique, Leiden University}, 34:\penalty0 7--3, 2008.

\bibitem[Evgeniou et~al.(2005)Evgeniou, Micchelli, and
  Pontil]{evgeniou2005learning}
Theodoros Evgeniou, Charles~A Micchelli, and Massimiliano Pontil.
\newblock Learning multiple tasks with kernel methods.
\newblock \emph{Journal of machine learning research}, 6\penalty0
  (Apr):\penalty0 615--637, 2005.

\bibitem[Garnett et~al.(2010)Garnett, Osborne, and
  Roberts]{GarOsbRob10:BOsensor}
R.~Garnett, M.~A. Osborne, and S.~J. Roberts.
\newblock Bayesian optimization for sensor set selection.
\newblock In \emph{Proceedings of the 9th ACM/IEEE International Conference on
  Information Processing in Sensor Networks}, IPSN '10, pages 209--219, New
  York, NY, USA, 2010. ACM.

\bibitem[Gonzalez et~al.(2015)Gonzalez, Longworth, James, and
  Lawrence]{gonzalez2015bayesian}
Javier Gonzalez, Joseph Longworth, David~C James, and Neil~D Lawrence.
\newblock Bayesian optimization for synthetic gene design.
\newblock \emph{arXiv preprint arXiv:1505.01627}, 2015.

\bibitem[Greene(2003)]{greene2003econometric}
William~H Greene.
\newblock \emph{Econometric analysis}.
\newblock Pearson Education India, 2003.

\bibitem[Gr{\"u}new{\"a}lder et~al.(2012)Gr{\"u}new{\"a}lder, Lever,
  Baldassarre, Patterson, Gretton, and Pontil]{grunewalder2012conditional}
Steffen Gr{\"u}new{\"a}lder, Guy Lever, Luca Baldassarre, Sam Patterson, Arthur
  Gretton, and Massimilano Pontil.
\newblock Conditional mean embeddings as regressors.
\newblock In \emph{Proceedings of the 29th International Coference on
  International Conference on Machine Learning}, pages 1803--1810, 2012.

\bibitem[Guo et~al.(2017)Guo, Pleiss, Sun, and Weinberger]{guo2017calibration}
Chuan Guo, Geoff Pleiss, Yu~Sun, and Kilian~Q Weinberger.
\newblock On calibration of modern neural networks.
\newblock In \emph{Proceedings of the 34th International Conference on Machine
  Learning-Volume 70}, pages 1321--1330. JMLR. org, 2017.

\bibitem[Hern{\'a}ndez-Lobato et~al.(2016)Hern{\'a}ndez-Lobato,
  Hernandez-Lobato, Shah, and Adams]{hernandez2016predictive}
Daniel Hern{\'a}ndez-Lobato, Jose Hernandez-Lobato, Amar Shah, and Ryan Adams.
\newblock Predictive entropy search for multi-objective bayesian optimization.
\newblock In \emph{International Conference on Machine Learning}, pages
  1492--1501, 2016.

\bibitem[Jones(2001)]{jones2001taxonomy}
Donald~R Jones.
\newblock A taxonomy of global optimization methods based on response surfaces.
\newblock \emph{Journal of global optimization}, 21\penalty0 (4):\penalty0
  345--383, 2001.

\bibitem[Kadri et~al.(2016)Kadri, Duflos, Preux, Canu, Rakotomamonjy, and
  Audiffren]{kadri2016operator}
Hachem Kadri, Emmanuel Duflos, Philippe Preux, St{\'e}phane Canu, Alain
  Rakotomamonjy, and Julien Audiffren.
\newblock Operator-valued kernels for learning from functional response data.
\newblock \emph{The Journal of Machine Learning Research}, 17\penalty0
  (1):\penalty0 613--666, 2016.

\bibitem[Knowles(2006)]{knowles2006parego}
Joshua Knowles.
\newblock Parego: a hybrid algorithm with on-line landscape approximation for
  expensive multiobjective optimization problems.
\newblock \emph{IEEE Transactions on Evolutionary Computation}, 10\penalty0
  (1):\penalty0 50--66, 2006.

\bibitem[Liu et~al.(2018)Liu, Cai, and Ong]{liu2018remarks}
Haitao Liu, Jianfei Cai, and Yew-Soon Ong.
\newblock Remarks on multi-output gaussian process regression.
\newblock \emph{Knowledge-Based Systems}, 144:\penalty0 102--121, 2018.

\bibitem[Micchelli and Pontil(2005)]{micchelli2005learning}
Charles~A Micchelli and Massimiliano Pontil.
\newblock On learning vector-valued functions.
\newblock \emph{Neural computation}, 17\penalty0 (1):\penalty0 177--204, 2005.

\bibitem[Nakayama et~al.(2009)Nakayama, Yun, and Yoon]{nakayama2009sequential}
Hirotaka Nakayama, Yeboon Yun, and Min Yoon.
\newblock \emph{Sequential approximate multiobjective optimization using
  computational intelligence}.
\newblock Springer Science \& Business Media, 2009.

\bibitem[Paria et~al.(2019)Paria, Kandasamy, and P{\'o}czos]{Paria2019AFF}
Biswajit Paria, Kirthevasan Kandasamy, and B.~P{\'o}czos.
\newblock A flexible framework for multi-objective bayesian optimization using
  random scalarizations.
\newblock In \emph{UAI}, 2019.

\bibitem[Picheny(2015)]{picheny2015multiobjective}
Victor Picheny.
\newblock Multiobjective optimization using gaussian process emulators via
  stepwise uncertainty reduction.
\newblock \emph{Statistics and Computing}, 25\penalty0 (6):\penalty0
  1265--1280, 2015.

\bibitem[Ponweiser et~al.(2008)Ponweiser, Wagner, Biermann, and
  Vincze]{ponweiser2008multiobjective}
Wolfgang Ponweiser, Tobias Wagner, Dirk Biermann, and Markus Vincze.
\newblock Multiobjective optimization on a limited budget of evaluations using
  model-assisted $\mathcal{S}$-metric selection.
\newblock In \emph{International Conference on Parallel Problem Solving from
  Nature}, pages 784--794. Springer, 2008.

\bibitem[Rasmussen(2003)]{rasmussen2003gaussian}
Carl~Edward Rasmussen.
\newblock Gaussian processes in machine learning.
\newblock In \emph{Summer School on Machine Learning}, pages 63--71. Springer,
  2003.

\bibitem[Rifkin et~al.(2003)Rifkin, Mukherjee, Tamayo, Ramaswamy, Yeang,
  Angelo, Reich, Poggio, Lander, Golub, et~al.]{rifkin2003analytical}
Ryan Rifkin, Sayan Mukherjee, Pablo Tamayo, Sridhar Ramaswamy, Chen-Hsiang
  Yeang, Michael Angelo, Michael Reich, Tomaso Poggio, Eric~S Lander, Todd~R
  Golub, et~al.
\newblock An analytical method for multiclass molecular cancer classification.
\newblock \emph{Siam Review}, 45\penalty0 (4):\penalty0 706--723, 2003.

\bibitem[Roijers et~al.(2013)Roijers, Vamplew, Whiteson, and
  Dazeley]{roijers2013survey}
Diederik~M Roijers, Peter Vamplew, Shimon Whiteson, and Richard Dazeley.
\newblock A survey of multi-objective sequential decision-making.
\newblock \emph{Journal of Artificial Intelligence Research}, 48:\penalty0
  67--113, 2013.

\bibitem[Scarlett et~al.(2017)Scarlett, Bogunovic, and
  Cevher]{scarlett2017lower}
Jonathan Scarlett, Ilija Bogunovic, and Volkan Cevher.
\newblock Lower bounds on regret for noisy gaussian process bandit
  optimization.
\newblock In \emph{Conference on Learning Theory}, pages 1723--1742, 2017.

\bibitem[Shahriari et~al.(2015)Shahriari, Swersky, Wang, Adams, and
  De~Freitas]{shahriari2015taking}
Bobak Shahriari, Kevin Swersky, Ziyu Wang, Ryan~P Adams, and Nando De~Freitas.
\newblock Taking the human out of the loop: A review of bayesian optimization.
\newblock \emph{Proceedings of the IEEE}, 104\penalty0 (1):\penalty0 148--175,
  2015.

\bibitem[Snoek et~al.(2012)Snoek, Larochelle, and Adams]{snoek2012practical}
Jasper Snoek, Hugo Larochelle, and Ryan~P Adams.
\newblock Practical bayesian optimization of machine learning algorithms.
\newblock In \emph{Advances in neural information processing systems}, pages
  2951--2959, 2012.

\bibitem[Srinivas et~al.(2010)Srinivas, Krause, Kakade, and
  Seeger]{srinivas2010gaussian}
Niranjan Srinivas, Andreas Krause, Sham Kakade, and Matthias Seeger.
\newblock Gaussian process optimization in the bandit setting: no regret and
  experimental design.
\newblock In \emph{Proceedings of the 27th International Conference on
  International Conference on Machine Learning}, pages 1015--1022. Omnipress,
  2010.

\bibitem[Swersky et~al.(2013)Swersky, Snoek, and Adams]{swersky2013multi}
Kevin Swersky, Jasper Snoek, and Ryan~P Adams.
\newblock Multi-task bayesian optimization.
\newblock In \emph{Advances in neural information processing systems}, pages
  2004--2012, 2013.

\bibitem[Wackernagel(2013)]{wackernagel2013multivariate}
Hans Wackernagel.
\newblock \emph{Multivariate geostatistics: an introduction with applications}.
\newblock Springer Science \& Business Media, 2013.

\bibitem[Wang et~al.(2018)Wang, Gehring, Kohli, and Jegelka]{wang2018batched}
Zi~Wang, Clement Gehring, Pushmeet Kohli, and Stefanie Jegelka.
\newblock Batched large-scale bayesian optimization in high-dimensional spaces.
\newblock In \emph{International Conference on Artificial Intelligence and
  Statistics}, pages 745--754, 2018.

\bibitem[Zliobaite(2015)]{zliobaite2015relation}
Indre Zliobaite.
\newblock On the relation between accuracy and fairness in binary
  classification.
\newblock \emph{arXiv preprint arXiv:1505.05723}, 2015.

\bibitem[Zuluaga et~al.(2013)Zuluaga, Sergent, Krause, and
  P{\"u}schel]{zuluaga2013active}
Marcela Zuluaga, Guillaume Sergent, Andreas Krause, and Markus P{\"u}schel.
\newblock Active learning for multi-objective optimization.
\newblock In \emph{International Conference on Machine Learning}, pages
  462--470, 2013.

\end{thebibliography}

\newpage

\begin{appendix}
\begin{center}
\huge{Appendix}
\end{center}

\section{Computational complexity under ICM kernels}
\label{app:complexity}
In this section, we describe the time complexities of MT-KB and MT-BKB for the intrinsic coregionalization model (ICM) $\Gamma(x,x')=k(x,x')B$. As discussed earlier, we assume that an efficient oracle to optimize the acquisition function is provided to us, and the per step cost comes only from computing it. To this end, we first describe simplified model updates under ICM kernel using the eigen-system of $B$ and then detail out the time required for computing the updates. We note here that the eigen decomposition, which is $O(n^3)$, needs to be computed only once at the beginning and can be used at every step of the algorithms.
\paragraph{Per-step complexity of MT-KB}
Let $B=\sum_{i=1}^{n}\xi_iu_iu_i^{\top}$ denotes the eigen decomposition of the positive semi-definite matrix $B$. Then, $\Gamma(x,x)=\sum_{i=1}^{n}\xi_ik(x,x)u_iu_i^{\top}$. From the definition of the Kronecker product, we now have $G_t=\sum_{i=1}^{n}\xi_i K_t \otimes u_iu_i^{\top}$ and $G_t(x)=\sum_{i=1}^{n}\xi_i k_t(x) \otimes u_iu_i^{\top}$, where $K_t=\left[k(x_i,x_j)\right]_{i,j=1}^{t}$ and $k_t(x)=\left[k(x_1,x),\ldots,k(x_t,x)\right]^{\top}$. Since $\lbrace u_i \rbrace_{i=1}^{n}$ yields an orthonormal basis of $\Real^n$, the output $y_t \in \Real^n$ can be written as $y_t=\sum_{i=1}^{n}y_t^{\top}u_i\cdot u_i$. We then have $Y_t=\sum_{i=1}^{n}Y_{t}^{i}\otimes u_i$, where $Y_{t}^{i}=\left[y_1^{\top}u_i,\ldots,y_t^{\top}u_i\right]^{\top}$. We also note that $I_{nt}=\sum_{i=1}^{n}I_t\otimes u_iu_i^{\top}$, and, therefore
$G_t+\eta I_{nt}=\sum_{i=1}^{n}\left(\xi_iK_t+\eta I_t\right)\otimes u_iu_i^{\top}$. Now, let $K_t=\sum_{j=1}^{t}\alpha_jw_jw_j^{\top}$ denotes the eigen decomposition of the (positive semi-definite) kernel matrix $K_t$. We then have
\beq
G_t+\eta I_{nt}=\sum_{i=1}^{n}\sum_{j=1}^{t}(\xi_i\alpha_j+\eta)w_jw_j^{\top}\otimes u_iu_i^{\top}=\sum_{i=1}^{n}\sum_{j=1}^{t}(\xi_i\alpha_j+\eta)(w_j\otimes u_i)(w_j\otimes u_i)^{\top}.
\label{eqn:eig-dec}
\eeq
By the properties of tensor product $(w_j\otimes u_i)^{\top}(w_{j'}\otimes u_{i'})=(w_j^{\top}w_{j'})\cdot (u_i^{\top}u_{i'})$, which is equal to $1$ if $i=i'$, $j=j'$, and is equal to $0$ otherwise. Therefore, (\ref{eqn:eig-dec}) denotes the eigen decomposition of $G_t+\eta I_{nt}$. Hence
\beq
\left(G_t+\eta I_{nt}\right)^{-1}=\sum_{i=1}^{n}\sum_{j=1}^{t}\frac{1}{\xi_i\alpha_j+\eta}w_jw_j^{\top}\otimes u_iu_i^{\top}=\sum_{i=1}^{n}(\xi_iK_t+\eta I_t)^{-1}\otimes u_iu_i^{\top}.
\label{eqn:inverse}
\eeq
By the orthonormality of $\lbrace u_i \rbrace_{i=1}^{n}$ and the mixed product property of Kronecker product, we now obtain $(G_t+\eta I_{nt})^{-1}Y_{t} = \sum_{i=1}^{n}(\xi_iK_t+\eta I_t)^{-1}Y_t^i\otimes u_i$, and thus, in turn,
\beq
\mu_t(x)=G_{t}(x)^{\top}(G_t+\eta I_{nt})^{-1}Y_{t}= \sum_{i=1}^{n}\xi_ik_t(x)^{\top}(\xi_iK_t+\eta I_t)^{-1}Y_t^i \cdot u_i.
\label{eqn:eff-mean}
\eeq
Similarly, we get $G_{t}(x)^{\top}(G_t+\eta I_{nt})^{-1}G_t(x)=\sum_{i=1}^{n}\xi_i^2k_t(x)^{\top}(\xi_iK_t+\eta I_t)^{-1}k_t(x)\cdot u_iu_i^{\top}$ and therefore,
\beq
\norm{\Gamma_t(x,x)}= \max_{1 \leq i \leq n}\xi_i \left(k(x,x)- \xi_ik_t(x)^{\top}(\xi_iK_t+\eta I_t)^{-1}k_t(x)\right).
\label{eqn:eff-var}
\eeq
Let us now discuss the time required to compute $\mu_t(x)$ and $\norm{\Gamma_t(x,x)}$.
Given the eigen decomposition, updating $\lbrace Y_t^i \rbrace_{i=1}^{n}$ re-using those already computed at the previous step requires projecting the current output $y_t$ onto all coordinates, and thus, takes $O(n^2)$ time. Now, since the kernel matrix $K_t$ is rescaled by the eigenvalues $\xi_i$, we can find the eigen decomposition of $K_t$ once and reuse those to compute $\lbrace(\xi_iK_t+\eta I_t)^{-1}\rbrace_{i=1}^{n}$ in $O(t^3)$ time. Next, computing $n$ matrix-vector
multiplications and vector inner products of the form $k_t(x)^{\top}(\xi_iK_t+\eta I_t)^{-1}k_t(x)$ and $k_t(x)^{\top}(\xi_iK_t+\eta I_t)^{-1}Y_t^i$ take $O(nt^2)$ time. Finally, the sum in (\ref{eqn:eff-mean}) and the max in (\ref{eqn:eff-var}) can be computed in $O(n^2)$ and $O(n)$ time, respectively. Therefore, the overall cost to compute $\mu_t(x)$ and $\norm{\Gamma_t(x,x)}$ are $O\left(n^2+nt^2+t^3\right)=O\left(n^2+t^2(n+t)\right)$.

\paragraph{Per-step complexity of MT-BKB}
 Let $\tilde{\phi}_t(x)=\left(\tilde{K}_{t}^{1/2}\right)^{+}\tilde{k}_{t}(x) \in \Real^{m_t}$ denotes the Nystr\"{o}m embedding of the scalar kernel $k$, where $\tilde{k}_{t}(x)=\left[\frac{1}{\sqrt{p_{t,i_1}}}k(x_{i_1},x),\ldots,\frac{1}{\sqrt{p_{t,i_{m_t}}}}k(x_{i_{m_t}},x)\right]^{\top}$ and $\tilde{K}_{t}=\left[\frac{1}{\sqrt{p_{t,i_u}p_{t,i_v}}}k(x_{i_u},x_{i_v})\right]_{u,v=1}^{m_t}$. Then the eigen decomposition $B=\sum_{i=1}^{n}\xi_iu_iu_i^{\top}$ yields $\tilde{G}_t=\sum_{i=1}^{n}\xi_i \tilde{K}_t \otimes u_iu_i^{\top}$ and $\tilde{G}_t(x)=\sum_{i=1}^{n}\xi_i \tilde{k}_t(x) \otimes u_iu_i^{\top}$. A similar argument as in (\ref{eqn:eig-dec}) and (\ref{eqn:inverse}) now implies
$\left(\tilde{G}_t^{1/2}\right)^{+}=\sum_{i=1}^{n}\frac{1}{\sqrt{\xi_i}}\left(\tilde{K}_t^{1/2}\right)^{+} \otimes u_iu_i^{\top}$. Therefore, the Nystr\"{o}m embeddings for the multi-task kernel $\Gamma$ can be computed using the embeddings for the scalar kernel $k$ as
\beqn
\tilde{\Phi}_t(x)=\left(\tilde{G}_t^{1/2}\right)^{+}\tilde{G}_{t}(x)=\sum_{i=1}^{n}\sqrt{\xi_i}\left(\tilde{K}_{t}^{1/2}\right)^{+}\tilde{k}_{t}(x) \otimes u_iu_i^{\top}=\sum_{i=1}^{n}\sqrt{\xi_i}\tilde{\phi}_{t}(x) \otimes u_iu_i^{\top}. 
\label{eqn:nystrom-embedding}
\eeqn
We now have 
\beqn
\tilde{V}_t=\sum_{s=1}^{t}\tilde{\Phi}_t(x_s)\tilde{\Phi}_t(x_s)^{\top}=\sum_{s=1}^{t}\sum_{i=1}^{n}\xi_i\tilde{\phi}_t(x_s)\tilde{\phi}_t(x_s)^{\top}\otimes u_iu_i^{\top}=\sum_{i=1}^{n}\xi_i\tilde{v}_t \otimes u_iu_i^{\top},
\eeqn
where $\tilde{v}_t=\sum_{s=1}^{t}\tilde{\phi}_t(x_s)\tilde{\phi}_t(x_s)^{\top}$. A similar argument as in (\ref{eqn:eig-dec}) and (\ref{eqn:inverse}) then implies
\beqn
(\tilde{V}_t+\eta I_{nm_t})^{-1} = \sum_{i=1}^{n}\left(\xi_i\tilde{v}_t +\eta I_{m_t}\right)^{-1}\otimes u_iu_i^{\top}.
\eeqn
We further have 
\beqn
\sum_{s=1}^{t}\tilde{\Phi}_t(x_s)y_s=\sum_{s=1}^{t}\sum_{i=1}^{n}\sqrt{\xi_i}\cdot y_s^{\top}u_i\cdot\tilde{\phi}_{t}(x_s) \otimes u_i=\sum_{i=1}^{n}\sqrt{\xi_i}\left(\sum_{s=1}^{t}y_s^{\top}u_i\cdot \tilde{\phi}_t(x_s)\right)\otimes u_i.
\eeqn
Similar to (\ref{eqn:eff-mean}), we therefore obtain
\beq
\tilde{\mu}_t(x)=\sum_{i=1}^{n}\xi_i\tilde{\phi}_t(x)^{\top}\left(\xi_i\tilde{v}_t+\eta I_{m_t}\right)^{-1}\left(\sum_{s=1}^{t}y_s^{\top}u_i\cdot\tilde{\phi}_t(x_s)\right)\cdot u_i.
\label{eqn:eff-mean-approx}
\eeq
We now note that $\tilde{\Phi}_t(x)^{\top}\tilde{\Phi}_t(x)=\sum_{i=1}^{n}\xi_i \tilde{\phi}_t(x)^{\top}\tilde{\phi}_t(x)\cdot u_iu_i^{\top}$. Similar to (\ref{eqn:eff-var}), we then obtain
\beq
\norm{\tilde{\Gamma}_t(x,x)}=\max_{1 \leq i \leq n}\xi_i\left(k(x,x)-\tilde{\phi}_t(x)^{\top}\tilde{\phi}_t(x)+\eta \tilde{\phi}_t(x)^{\top} \left(\xi_i\tilde{v}_t+\eta I_{m_t}\right)^{-1}\tilde{\phi}_t(x)\right).
\label{eqn:eff-var-approx}
\eeq
We now discuss the time required to compute the scalar kernel embedding $\tilde{\phi}_t(x)$. Sampling the dictionary $\cD_t$, as we reuse the variances from the previous round, takes $O(t)$ time. We now compute the embedding $\tilde{\phi}_t(x)$ in $O(m_t^3+m_t^2)$ time, which corresponds to an inversion of $\tilde{K}_t^{1/2}$ and a matrix-vector product of dimension $m_t$, the size of the dictionary. Given the embedding function, let us now find the time required to compute $\tilde{\mu}_t(x)$ and $\lVert\tilde{\Gamma}_t(x,x)\rVert$. 
We first construct the matrix $\tilde{v}_t$ from scratch using all the points selected so far, which takes $O(m_t^2t)$ time. Then the inverses $\lbrace(\xi_i\tilde{v}_t+\eta I_{m_t})^{-1}\rbrace_{i=1}^{n}$ can be computed in $O(m_t^3)$ time and the matrix-vector multiplications $\lbrace(\xi_i\tilde{v}_t+\eta I_{m_t})^{-1}\tilde{\phi}_t(x)\rbrace_{i=1}^{n}$ in $O(nm_t^2)$ time. Similar to MT-KB, projecting the current output onto every direction takes $O(n^2)$ time. The projections can then be used to compute $n$ vectors of the form $\sum_{s=1}^{t}y_s^{\top}u_i\cdot\tilde{\phi}_t(x_s)$ in $O(nm_tt)$ time. Finally, $n$ vector inner products of dimension $m_t$ can be computed in $O(nm_t)$ time. Therefore, the overall cost to compute (\ref{eqn:eff-mean-approx}) and (\ref{eqn:eff-var-approx}) is  $O(n^2+nm_tt+nm_t^2+m_t^3+m_t^2t)=O\left(n^2+m_t t(n+t)\right)$, since the dictionary size $m_t \leq t$.


\section{Multi-task concentration}

We first introduce some notations. For any two Hilbert spaces $\cG$ and $\cH$ with respective inner products $\inner{\cdot}{\cdot}_{\cG}$ and $\inner{\cdot}{\cdot}_{\cH}$, we denote by $\cL(\cG,\cH)$ the space of all bounded linear operators from $\cG$ to $\cH$, with the operator norm $\norm{A}:=\sup_{\norm{g}_\cG \leq 1}\norm{Ag}_\cH$. We also denote, for any $A \in \cL(\cG,\cH)$, by $A^{\top}$ its adjoint, which is the unique operator such that $\inner{A^{\top}h}{g}_{\cG} =\inner{h}{Ag}_{\cH}$ for all $g \in \cG$, $h \in \cH$. In the case $\cG=\cH$, we denote $\cL(\cH)=\cL(\cH,\cH)$. We now review the following lemma \citep{rasmussen2003gaussian} about operators, which we will use several times.
\begin{mylemma}[Operator identities]
   \label{lem:dim-change}
   Let $A \in \cL(\cG,\cH)$. Then, for any $\eta > 0$, the following hold
   \beqan
   (A^{\top} A + \eta I)^{-1}A^{\top}&=&A^{\top}(AA^{\top}+\eta I)^{-1},\\
   I - A^{\top}(AA^{\top}+\eta I)^{-1}A &=& \eta (A^{\top} A + \eta I)^{-1}.
   \eeqan
   \end{mylemma}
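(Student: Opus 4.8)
The plan is to treat both identities as purely algebraic consequences of the ``push-through'' trick, after first noting that all operators being inverted are boundedly invertible. Since $\eta > 0$ and both $A^{\top}A \in \cL(\cG)$ and $AA^{\top} \in \cL(\cH)$ are bounded, self-adjoint, positive semidefinite operators, the operators $A^{\top}A + \eta I$ and $AA^{\top} + \eta I$ are bounded below by $\eta I$ (their spectra lie in $[\eta,\infty)$), hence bijective with bounded inverse. This is the only point where the Hilbert-space setting matters, and it justifies every manipulation that follows.

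For the first identity, I would start from the elementary observation, valid with no inverses at all,
\[
(A^{\top}A + \eta I)\,A^{\top} \;=\; A^{\top}AA^{\top} + \eta A^{\top} \;=\; A^{\top}(AA^{\top} + \eta I).
\]
Multiplying on the left by $(A^{\top}A+\eta I)^{-1}$ and on the right by $(AA^{\top}+\eta I)^{-1}$ then yields
\[
A^{\top}(AA^{\top}+\eta I)^{-1} \;=\; (A^{\top}A+\eta I)^{-1}A^{\top},
\]
which is the first claim.

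For the second identity, I would substitute the first one into the left-hand side and collect terms:
\[
I - A^{\top}(AA^{\top}+\eta I)^{-1}A \;=\; I - (A^{\top}A+\eta I)^{-1}A^{\top}A \;=\; (A^{\top}A+\eta I)^{-1}\big[(A^{\top}A+\eta I) - A^{\top}A\big] \;=\; \eta(A^{\top}A+\eta I)^{-1}.
\]

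The ``hard part'' is essentially nonexistent: once the bounded invertibility of $A^{\top}A+\eta I$ and $AA^{\top}+\eta I$ is recorded (which follows from positivity of $A^{\top}A$, $AA^{\top}$ together with $\eta>0$), each identity is a two-line computation. If one wanted to avoid even the push-through step, the first identity can alternatively be checked by comparing the two sides applied to an arbitrary $h \in \cH$, but the factorization above is cleaner and is what I would present.
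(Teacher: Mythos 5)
Your proof is correct: the push-through identity $(A^{\top}A+\eta I)A^{\top}=A^{\top}(AA^{\top}+\eta I)$ together with the bounded invertibility of both regularized operators (guaranteed since their spectra lie in $[\eta,\infty)$) gives the first claim, and substituting it into the left-hand side of the second yields the result immediately. The paper itself states this lemma without proof, citing it as a known fact, so there is no argument to compare against; yours is the standard derivation and is complete.
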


We now present the main result of this appendix, which is stated and proved using the feature map of the multi-task kernel.
\paragraph{Feature map of multi-task kernel}
We assume the multi-task kernel $\Gamma$ to be continuous relative to the operator norm on $\cL(\Real^n)$, the space of bounded linear operators from $\Real^n$ to itself. Then the RKHS $\cH_\Gamma(\cX)$ associated with the kernel $\Gamma$ is a subspace of the space of continuous functions from $\cX$ to $\Real^n$, and hence, $\Gamma$ is a Mercer kernel \citep{carmeli2010vector}.
Let $\mu$ be a probability measure on the (compact) set $\cX$.
Since $\Gamma$ is a Mercer kernel on $\cX$ and $\sup_{x \in \cX}\norm{\Gamma(x,x)}<\infty$, the RKHS $\cH_\Gamma(\cX)$ is a subspace of $L^2(\cX,\mu;\Real^n)$, the Banach space of measurable functions $g:\cX \to \Real^n$ such that $\int_{\cX}\norm{g(x)}^2d\mu(x) < \infty$, with norm $\norm{g}_{L^2} = \left(\int_{\cX}\norm{g(x)}^2d\mu(x)\right)^{1/2}$. Since $\Gamma(x,x) \in \cL(\Real^n)$ is a compact operator\footnote{An operator $A \in \cL(\cH)$ is said to be compact if the image of each bounded set under $A$ is relatively compact.}, by the Mercer theorem for multi-task kernels \citep{carmeli2010vector}, there exists an at most countable sequence $\lbrace(\psi_i,\nu_i)\rbrace_{i \in \Nat}$ such that
\beqn
\begin{split}
  \Gamma(x,x')&=\sum_{i=1}^{\infty}\nu_i \psi_i(x)\psi_i(x')^{\top} \quad \text{and}\\
  \norm{g}_\Gamma^2 &= \sum_{i=1}^{\infty}\frac{\inner{g}{\psi_i}^2_{L^2}}{\nu_i}, \quad g \in L^2(\cX,\mu;\Real^n)\;,
\end{split}
\eeqn
where $\nu_i \geq 0$ for all $i$, $\lim_{i \ra \infty}\nu_i = 0$ and $\lbrace \psi_i:\cX \to \Real^n \rbrace_{i \in \Nat}$ is an orthonormal basis of $L^2(\cX,\mu;\Real^n)$.
In particular $g \in \cH_\Gamma(\cX)$ if and only if $\norm{g}_\Gamma < \infty$. Note that $\lbrace\sqrt{\nu_i}\psi_i\rbrace_{i \in \Nat}$ is an orthonormal basis of $\cH_\Gamma(\cX)$. Then, we can represent the objective function $f \in \cH_{\Gamma}(\cX)$ as 
\beqn
   f= \sum_{i=1}^{\infty}\theta^\star_i\sqrt{\nu_i}\psi_i
\eeqn
for some $\theta^\star:=(\theta^\star_1,\theta^\star_2,\ldots) \in \ell^2$, the Hilbert space of square-summable sequences of real numbers, such that $\norm{f}_\Gamma=\norm{\theta^\star}_2:=\left(\sum_{i=1}^{\infty}|\theta^\star_i|^2\right)^{1/2} < \infty$.
We now define a feature map
$\Phi : \cX \to \cL(\Real^n,\ell^2)$ of the multi-task kernel $\Gamma$ by 
\beqn
\Phi(x) y := \left(\sqrt{\nu_1}\psi_1(x)^{\top}y,\sqrt{\nu_2}\psi_2(x)^{\top}y,\ldots \right), \quad \forall \;x \in \cX, \; y \in \Real^n.
\eeqn
We then have $f(x)=\Phi(x)^{\top}\theta^\star$ and $\Gamma(x,x') = 
\Phi(x)^{\top}\Phi(x')$ for all $x,x' \in \cX$.

\paragraph{Martingale control in $\ell^2$ space}

Let us define $S_t=\sum_{s=1}^{t}\Phi(x_s)\epsilon_s$, where $ \epsilon_1,\ldots,\epsilon_t$ are the random noise vectors in $\Real^n$. Now consider $\cF_{t-1}$, the $\sigma$-algebra generated by the random variables $\lbrace x_s, \epsilon_s\rbrace_{s = 1}^{t-1}$ and $x_t$. Observe that $S_t$ is $\cF_{t}$-measurable and $\expect{S_t\given \cF_{t-1}} = S_{t-1}$. The process $\lbrace S_t \rbrace_{t\geq 1}$ is thus a martingale with values\footnote{We ignore issues of measurability here.} in the $\ell^2$ space. We now define a map $\Phi_{\cX_t} : \ell^2 \ra \Real^{nt}$ by
\beqn
\Phi_{\cX_t} \theta := \left[\left(\Phi(x_1)^{\top}\theta\right)^{\top},\ldots,\left(\Phi(x_t)^{\top}\theta\right)^{\top}\right]^{\top},\quad \forall \; \theta\in \ell^2.
\eeqn 
We also let $V_t:=\Phi_{\cX_t}^{\top}\Phi_{\cX_t}$ be a map from $\ell^2$ to itself and $I$ be the identity operator in $\ell^2$.
In Lemma \ref{lem:martingale-control}, we measure the deviation of $S_t$ by the norm weighted by $(V_t+\eta I)^{-1}$, which is itself derived from $S_t$. Lemma \ref{lem:martingale-control} represents the multi-task generalization of the result of \citet{durand2018streaming}, and we recover their result under the single-task setting ($n=1$).

\begin{mylemma}[Self-normalized martingale control]
    Let the noise vectors $\lbrace\epsilon_t\rbrace_{t \geq 1}$ be $\sigma$-sub-Gaussian. Then, for any $\eta > 0$ and $\delta \in (0,1]$, with probability at least $1-\delta$, the following holds uniformly over all $t \geq 1:$ 
\beqn
    \norm{S_t}_{(V_{t}+\eta I)^{-1}} \leq \sigma \sqrt{2\log\left(1/\delta\right)+ \log\det\left(I+\eta^{-1}V_t\right)}~.
\eeqn
\label{lem:martingale-control}
\end{mylemma}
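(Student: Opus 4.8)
The plan is to prove Lemma \ref{lem:martingale-control} by the \emph{method of mixtures} (pseudo-maximization), generalizing the classical self-normalized concentration bound for martingales to the vector-valued noise setting inside the (possibly infinite-dimensional) Hilbert space $\ell^2$; this is exactly the route followed in the single-task case by \citet{durand2018streaming}.

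\textbf{Step 1 (a one-parameter family of exponential supermartingales).} For each fixed $\theta \in \ell^2$, I would define $M_t^\theta := \exp\big(\inner{\theta}{S_t}_2 - \tfrac{\sigma^2}{2}\norm{\theta}_{V_t}^2\big)$, where $\norm{\theta}_{V_t}^2 := \inner{\theta}{V_t\theta}_2$, and $M_0^\theta = 1$. Since $\inner{\theta}{\Phi(x_s)\epsilon_s}_2 = \inner{\Phi(x_s)^{\top}\theta}{\epsilon_s}_2$ with $\Phi(x_s)^{\top}\theta \in \Real^n$, the $\sigma$-sub-Gaussian hypothesis applied with $\alpha = \Phi(x_s)^{\top}\theta$ gives $\expect{\exp(\inner{\theta}{\Phi(x_s)\epsilon_s}_2)\given \cF_{s-1}} \leq \exp\big(\tfrac{\sigma^2}{2}\inner{\theta}{\Phi(x_s)\Phi(x_s)^{\top}\theta}_2\big)$. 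Because $S_t = S_{t-1} + \Phi(x_t)\epsilon_t$, $V_t = V_{t-1}+\Phi(x_t)\Phi(x_t)^{\top}$, and $x_t$ is $\cF_{t-1}$-measurable, this yields $\expect{M_t^\theta \given \cF_{t-1}} \leq M_{t-1}^\theta$, so $\{M_t^\theta\}_{t\geq 0}$ is a nonnegative $\{\cF_t\}$-supermartingale with $M_0^\theta = 1$.

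\textbf{Step 2 (mixing over $\theta$).} Fix a centered Gaussian measure $\Lambda$ on $\ell^2$ with a trace-class covariance operator (a genuine probability measure) and set $\bar M_t := \int_{\ell^2} M_t^\theta\, d\Lambda(\theta)$. By Tonelli's theorem $\{\bar M_t\}$ is again a nonnegative supermartingale with $\bar M_0 = 1$. The structural point that makes the computation go through is that $V_t = \Phi_{\cX_t}^{\top}\Phi_{\cX_t}$ is finite-rank (rank at most $nt$, since $\Phi(x_s)\Phi(x_s)^{\top}$ shares its nonzero spectrum with $\Gamma(x_s,x_s) = \Phi(x_s)^{\top}\Phi(x_s) \in \cL(\Real^n)$), and $S_t = \Phi_{\cX_t}^{\top}[\epsilon_1^{\top},\ldots,\epsilon_t^{\top}]^{\top} \in \mathrm{range}(\Phi_{\cX_t}^{\top}) = \mathrm{range}(V_t)$. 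Hence the Gaussian integral localizes to the finite-dimensional subspace $\mathrm{range}(V_t)$; completing the square in the exponent (exactly as in the finite-dimensional argument) and passing to the limit in which the covariances increase to $(\eta\sigma^2)^{-1}$ times the identity on that subspace, I would obtain
\beqn
\bar M_t = \det\!\big(I + \eta^{-1}V_t\big)^{-1/2}\exp\!\Big(\tfrac{1}{2\sigma^2}\norm{S_t}_{(V_t+\eta I)^{-1}}^2\Big),
\eeqn
where $\det(\cdot)$ is the Fredholm determinant (well defined as $V_t$ is trace-class) and $(V_t+\eta I)^{-1}$ the bounded inverse; one checks along the way that $\log\det(I+\eta^{-1}V_t)$ is unchanged (the extra eigenvalues equal $1$) and that $\inner{S_t}{(V_t+\eta I)^{-1}S_t}_2$ is unchanged ($(V_t+\eta I)^{-1}$ acts as $\eta^{-1}I$ on $\mathrm{range}(V_t)^{\perp}$, where $S_t$ has no component).

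\textbf{Step 3 (maximal inequality) and main obstacle.} Since $\{\bar M_t\}$ is a nonnegative supermartingale with $\bar M_0 = 1$, Ville's inequality gives $\prob{\exists\, t\geq 1 : \bar M_t \geq 1/\delta} \leq \delta$; on the complementary event $\bar M_t < 1/\delta$ for all $t$, and taking logarithms and rearranging yields $\norm{S_t}_{(V_t+\eta I)^{-1}}^2 \leq \sigma^2\big(2\log(1/\delta) + \log\det(I+\eta^{-1}V_t)\big)$ simultaneously for all $t\geq 1$, whence the claim after taking square roots. The main obstacle, and the only place beyond routine bookkeeping, is making Step 2 rigorous in infinite dimensions: the ``flat'' prior $\cN(0,(\eta\sigma^2)^{-1}I)$ driving the finite-dimensional proof is not a legitimate measure on $\ell^2$, so one must approximate it by a sequence of trace-class Gaussians and verify that the Fredholm determinant and the self-normalized quadratic form are continuous under this approximation — which works precisely because the finite rank of $V_t$ and the inclusion $S_t \in \mathrm{range}(V_t)$ confine everything to a finite-dimensional data-dependent subspace.
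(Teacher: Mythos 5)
Your proposal is correct and follows essentially the same route as the paper: the same exponential supermartingale family (yours differs only by rescaling $\theta$ by $\sigma$), a Gaussian method of mixtures targeting the improper flat prior $\cN\left(0,(\eta\sigma^2)^{-1}I\right)$, and a maximal inequality for the mixture supermartingale (the paper derives the uniform-in-$t$ statement via a stopping time plus Markov's inequality, which is just Ville's inequality spelled out). The one difference is purely in how the improper prior is regularized: you approximate it by trace-class Gaussians and localize to the finite-dimensional subspace $\mathrm{range}(V_t)$, whereas the paper works directly with the cylindrical i.i.d.\ Gaussian $\Theta_i \sim \cN(0,1/\eta)$ — well-defined as a mixing variable because $\Tr\left(\Gamma(x,x)\right)<\infty$ makes $\Phi(x)^{\top}\Theta$ almost surely finite — and evaluates the mixture by truncating to the first $d$ coordinates and letting $d\to\infty$ with Fatou's lemma.
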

\begin{proof}
For any sequence of real numbers $\theta=(\theta_1,\theta_2,\ldots)$ such that $\norm{\sum_{i=1}^{\infty}\theta_i\sqrt{\nu_i}\psi_i(x)}_2 < \infty$, let us define $\Phi(x)^{\top}\theta :=\sum_{i=1}^{\infty}\theta_i\sqrt{\nu_i}\psi_i(x)$ and
\beqn
    M_t^{\theta}=\prod_{s=1}^{t}D_s^{\theta},\quad D_s^{\theta} = \exp\left(\frac{\epsilon_s^{\top}\Phi(x_s)^{\top}\theta}{\sigma}-\frac{1}{2}\norm{\Phi(x_s)^{\top}\theta}_2^2\right)~.
\eeqn
Since the noise vectors $\lbrace \epsilon_t \rbrace_{t \geq 1}$ are conditionally $\sigma$-sub-Gaussian, i.e.,
\beqn
\forall \alpha \in \Real^n, \forall t \geq 1,\quad\expect{\exp(\epsilon_t^{\top}\alpha) \given \cF_{t-1}} \leq \exp\left(\sigma^2\norm{\alpha}_2^2/2\right),
\eeqn
we have $\expect{D^{\theta}_t|\mathcal{F}_{t-1}} \leq 1$ and hence $\expect{M_t^{\theta}| \cF_{t-1}} \leq M_{t-1}^{\theta}$. Therefore, it is immediate that $\lbrace M_t^{\theta} \rbrace_{t=0}^{\infty}$ is a non-negative super-martingale and actually satisfies $\expect{M_t^{\theta}} \leq 1$.

Now, let $\tau$ be a stopping time with respect to the filtration $\lbrace \mathcal{F}_t \rbrace_{t=0}^{\infty}$.
By the convergence theorem for non-negative super-martingales, $M_{\infty}^{\theta} = \lim\limits_{t\to\infty}M_t^{\theta}$ is almost surely well-defined, and thus $M_\tau^{\theta}$ is well-defined as well irrespective of whether $\tau < \infty$ or not. Let $Q_t^{\theta} = M_{\min\lbrace \tau,t \rbrace}^{\theta}$ be a stopped version of $\lbrace M_t^{\theta}\rbrace_t$. Then, by Fatou's lemma,
\begin{equation}
    \expect{M_\tau^{\theta}} = \expect{\liminf_{t \to \infty}Q_t^{\theta}}
\leq  \liminf_{t \to \infty} \expect{Q_t^{\theta}} = \liminf_{t \to \infty} \expect{M_{\min\lbrace \tau,t \rbrace}^{\theta}} \leq 1~,
\label{eqn:super-martingale}
\end{equation}
since the stopped super-martingale $\left\lbrace M_{\min\lbrace \tau,t \rbrace}^{\theta}\right\rbrace_{t \geq 1}$ is also a super-martingale.

Let $\cF_{\infty}$ be the $\sigma$-algebra generated by $\lbrace\cF_t\rbrace_{t=0}^{\infty}$, and $\Theta=(\Theta_1,\Theta_2,\ldots)$, $\Theta_i \sim \cN(0,1/\eta)$ be an infinite i.i.d. Gaussian random sequence which is independent of $\cF_\infty$. Since $\Gamma(x,x) \in \cL(\Real^n)$ has finite trace, we have
\begin{equation*}
    \expect{\norm{\sum_{i=1}^{\infty}\Theta_i\sqrt{\nu_i}\psi_i(x)}^2_2} = \frac{1}{\eta}\sum_{i=1}^{\infty}\nu_i\norm{\psi_i(x)}_2^2= \frac{1}{\eta} \Tr\left(\Gamma(x,x)\right)< \infty~.
\end{equation*}
Therefore,  $\norm{\sum_{i=1}^{\infty}\Theta_i\sqrt{\nu_i}\psi_i(x)}_2 < \infty$ almost surely and thus $M_t^{\Theta}$ is well-defined. Now, thanks to the sub-Gaussian property, $\expect{M_t^{\Theta}|\Theta} \leq 1$ almost surely, and thus $\expect{M_t^{\Theta}} \leq 1$ for all $t$.

Let
$M_t := \expect{M_t^{\Theta}| \mathcal{F}_{\infty}}$ be a mixture of non-negative super-martingales $M_t^{\Theta}$. Then $\lbrace M_t \rbrace_{t=0}^{\infty}$ is also a non-negative super-martingale adapted to the filtration $\{\mathcal{F}_t\}_{t=0}^{\infty}$. Hence, by a similar argument as in (\ref{eqn:super-martingale}), $M_\tau$ is almost surely well-defined and 
$\expect{M_\tau} = \expect{M_\tau^{\Theta}}  \leq 1$.
Let us now compute the mixture martingale $M_t$. We first note for any $\theta \in \ell^2$ that $M_t^{\theta}=\exp \left( \inner{\theta}{S_t/\sigma}_2-\frac{1}{2}\norm{\theta}^2_{V_t}\right)$. The difficulty however lies in the handling of possibly infinite dimension. To this end, we follow \citet{durand2018streaming} to consider the first $d$ dimensions for each $d \in \Nat$. Let $\Theta_d$ denote the restriction of $\Theta$ to the first $d$ components. Thus $\Theta_d\sim\cN(0,\frac{1}{\eta}I_d)$. Similarly, let $S_{t,d}$, $V_{t,d}$ and $M_{t,d}$ denote the corresponding restrictions of $S_t$, $V_t$ and $M_t$, respectively. Following the steps from \citet{chowdhury2017kernelized}, we then obtain that
\begin{align*}
    M_{t,d} &= \frac{\det(\eta I_d)^{1/2}}{(2\pi)^{d/2}}\int_{\Real^d} \exp \left( \inner{\alpha}{S_{t,d}/\sigma}_2-\frac{1}{2}\norm{\alpha}^2_{V_{t,d}}\right)\exp\left(-\frac{\eta}{2}\norm{\alpha}_2^2\right) \mathop{d\alpha} \nonumber\\
    & = \frac{1}{\det(I_d+\eta^{-1}V_{t,d})^{1/2}}\exp\left(\frac{1}{2\sigma^2}\norm{S_{t,d}}^2_{(V_{t,d}+\eta I_d)^{-1}}\right).
\end{align*}
Note that $M_{\tau,d}$ is also almost surely well defined and $\expect{M_{\tau,d}} \leq 1$ for all $d \in \mathbb{N}$. We now fix a $\delta \in (0,1]$. An application of Markov's inequality and Fatou's Lemma then yields
\begin{align*}
\mathbb{P}\left[\norm{S_\tau}^2_{(V_{\tau}+\eta I)^{-1}} > 2\sigma^2\log \left(\frac{\det(I+\eta^{-1}V_{\tau})^{1/2}}{\delta}\right)\right]&=\mathbb{P}\left[ \frac{\exp\left(\frac{1}{2\sigma^2}\norm{S_\tau}^2_{(V_{\tau}+\eta I)^{-1}}\right)}{\frac{1}{\delta} \det(I+\eta^{-1}V_{\tau})^{1/2}}> 1\right]\nonumber\\
    &=\mathbb{P}\left[\lim_{d \to \infty} \frac{\exp\left(\frac{1}{2\sigma^2}\norm{S_{\tau,d}}^2_{(V_{\tau,d}+\eta I_d)^{-1}}\right)}{\frac{1}{\delta} \det(I_d+\eta^{-1}V_{\tau,d})^{1/2}}> 1\right] \nonumber\\&\leq \mathbb{E}\left[\lim_{d \to \infty} \frac{\exp\left(\frac{1}{2\sigma^2}\norm{S_{\tau,d}}^2_{(V_{\tau,d}+\eta I_d)^{-1}}\right)}{\frac{1}{\delta} \det(I_d+\eta^{-1}V_{\tau,d})^{1/2}}\right]\nonumber\\
    & \leq \delta \lim_{d \to \infty}\expect{M_{\tau,d}} \leq \delta~.
\end{align*}
We now define a random stopping time $\tau$ following \citet{chowdhury2017kernelized}, by
 \begin{equation*}
     \tau=\min \left\lbrace t \geq 0: \norm{S_t}^2_{(V_{t}+\eta I)^{-1}} > 2\sigma^2\log \left(\frac{\det(I+\eta^{-1}V_{t})^{1/2}}{\delta}\right)\right \rbrace~.
 \end{equation*}
We then have
\beqn
\mathbb{P}\left[\exists \; t \geq 1:\;\norm{S_t}^2_{(V_{t}+\eta I)^{-1}} > 2\sigma^2\log \left(\frac{\det(I+\eta^{-1}V_{t})^{1/2}}{\delta}\right)\right] = \prob{\tau < \infty} \leq \delta~,
\eeqn
which concludes the proof.
\end{proof}


\subsection{Concentration bound for the estimate (Proof of Theorem \ref{thm:concentration})}
We first reformulate $\mu_t(x)$ in terms of the feature map $\Phi(x)$ as 
\begin{align*}
   \mu_t(x)&= G_t(x)^{\top}\left(G_t+\eta I_{nt}\right)^{-1}Y_t\\
   &=\Phi(x)^{\top}\Phi_{\cX_t}^{\top}\left(\Phi_{\cX_t}\Phi_{\cX_t}^{\top}+\eta I_{nt}\right)^{-1}Y_t\\
   &=\Phi(x)^{\top}\left(\Phi_{\cX_t}^{\top}\Phi_{\cX_t}+\eta I\right)^{-1}\Phi_{\cX_t}^{\top}Y_t\\
   &=\Phi(x)^{\top} \left(V_t+\eta I\right)^{-1}\sum_{s=1}^{t}\Phi(x_s)y_s\nonumber\\
   &=\Phi(x)^{\top} \left(V_t+\eta I\right)^{-1}\sum_{s=1}^{t}\Phi(x_s)(f(x_s)+\epsilon_s)\nonumber\\
   &=\Phi(x)^{\top} \left(V_t+\eta I\right)^{-1}\sum_{s=1}^{t}\Phi(x_s)\left(\Phi(x_s)^{\top}\theta^\star+\epsilon_s\right)\nonumber\\
   &=\Phi(x)^{\top}\theta^{\star}-\eta \Phi(x)^{\top} (V_t+\eta I)^{-1}\theta^{\star}+\Phi(x)^{\top} (V_t+\eta I)^{-1}S_t\\
   &=f(x) + \Phi(x)^{\top} (V_t+\eta I)^{-1}\left(S_t-\eta\theta^\star \right),
\end{align*}
where the third step follows from Lemma \ref{lem:dim-change}. We now obtain, from the definition of operator norm, the following
\begin{align*}
    \norm{f(x)-\mu_t(x)}_2 &\leq \norm{\Phi(x)^{\top}(V_t+\eta I)^{-1/2}} \norm{(V_t+\eta I)^{-1/2}\left(S_t-\eta \theta^\star\right)}_2\\
    &\leq \norm{(V_t+\eta I)^{-1/2}\Phi(x)}\left(\norm{S_t}_{(V_t+\eta I)^{-1}}+\eta \norm{\theta^\star}_{(V_t+\eta I)^{-1}} \right)\nonumber\\
    &\leq \norm{\Phi(x)^{\top}(V_t+\eta I)^{-1}\Phi(x)}^{1/2}\left(\norm{S_t}_{(V_t+\eta I)^{-1}}+\eta^{1/2} \norm{f}_{\Gamma}\right),
\end{align*}
where the last step is controlled as $\norm{\theta^\star}_{(V_t+\eta I)^{-1}} \leq \eta^{-1/2} \norm{\theta^\star}_2=\eta^{-1/2}\norm{f}_{\Gamma}$. A simple application of Lemma \ref{lem:dim-change} now yields
\beqa
\eta \Phi(x)^{\top}(V_t+\eta I)^{-1}\Phi(x)&=& \eta\Phi(x)^{\top}(\Phi_{\cX_t}^{\top}\Phi_{\cX_t}+\eta I)^{-1}\Phi(x)\nonumber\\
&=& \Phi(x)^{\top}\Phi(x)-\Phi(x)^{\top}\Phi_{\cX_t}^{\top}(\Phi_{\cX_t}\Phi_{\cX_t}^{\top}+\eta I_{nt})^{-1}\Phi_{\cX_t}\Phi(x)\nonumber\\
&=&\Gamma(x,x)-G_t(x)^{\top}(G_t+\eta I_{nt})^{-1}G_{t}(x)=\Gamma_t(x,x).
\label{eqn:pred-var-dual}
\eeqa
We then have $\norm{\Phi(x)^{\top}(V_t+\eta I)^{-1}\Phi(x)}^{1/2}= \eta^{-1/2}\norm{\Gamma_t(x,x)}^{1/2}$. We conclude the proof from Lemma \ref{lem:martingale-control} and using Sylvester's identity to get
\beqa
\det\left(I+\eta^{-1}V_t\right)= \det\left(I+\eta^{-1}\Phi_{\cX_t}^{\top}\Phi_{\cX_t}\right)
=\det\left(I_{nt}+\eta^{-1}\Phi_{\cX_t}\Phi_{\cX_t}^{\top}\right)
=\det \left(I_{nt}+\eta^{-1}G_t\right).
\label{eqn:det-dual}
\eeqa

\section{Regret analysis of MT-KB}
\subsection{Properties of predictive variance}
\begin{mylemma}[Sum of predictive variances]
For any $\eta > 0$ and $t \geq 1$,
\begin{align*}
\frac{1}{\eta}\sum_{s=1}^{t}\Tr\left(\Gamma_s(x_s,x_s)\right)&= \log\det\left(I_{nt}+\eta^{-1}G_t\right) = \sum_{s=1}^{t}\log \det\left(I_n+ \eta^{-1}\Gamma_{s-1}(x_s,x_s)\right).
\end{align*}
\label{lem:sum-of-pred-variances}
\end{mylemma}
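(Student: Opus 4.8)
The plan is to prove both relations by passing to the feature space of $\Gamma$ and setting up a one-step determinant recursion that telescopes. Throughout I write $\Phi_s:=\Phi(x_s)$ and work with the operator $V_t=\sum_{s=1}^{t}\Phi_s\Phi_s^{\top}$ on $\ell^2$, relying on two facts already established in this appendix: the dual variance formula $\Gamma_t(x,x)=\eta\,\Phi(x)^{\top}(V_t+\eta I)^{-1}\Phi(x)$ from (\ref{eqn:pred-var-dual}), and the Sylvester identity $\det(I+\eta^{-1}V_t)=\det(I_{nt}+\eta^{-1}G_t)$ from (\ref{eqn:det-dual}). For the rightmost equality I would split $V_s=V_{s-1}+\Phi_s\Phi_s^{\top}$, factor $(I+\eta^{-1}V_{s-1})^{1/2}$ out of $I+\eta^{-1}V_s$, and move the rank-$\le n$ factor $\Phi_s$ across with the Sylvester-type identity, obtaining $\det(I+\eta^{-1}V_s)=\det(I+\eta^{-1}V_{s-1})\,\det(I_n+\Phi_s^{\top}(V_{s-1}+\eta I)^{-1}\Phi_s)$. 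By (\ref{eqn:pred-var-dual}) applied at index $s-1$ the inner factor is $I_n+\eta^{-1}\Gamma_{s-1}(x_s,x_s)$; taking logarithms and telescoping from $s=1$ (where $V_0=0$) to $t$, then invoking Sylvester once more, delivers $\log\det(I_{nt}+\eta^{-1}G_t)=\sum_{s=1}^{t}\log\det(I_n+\eta^{-1}\Gamma_{s-1}(x_s,x_s))$.

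For the leftmost relation I would run the same recursion in the opposite direction, writing $V_{s-1}=V_s-\Phi_s\Phi_s^{\top}$ and factoring $(I+\eta^{-1}V_s)^{1/2}$ out instead. Lemma \ref{lem:dim-change} together with Sylvester then produces the \emph{exact} per-step identity $\log\det(I+\eta^{-1}V_s)-\log\det(I+\eta^{-1}V_{s-1})=-\log\det(I_n-\eta^{-1}\Gamma_s(x_s,x_s))$, where now $\Gamma_s(x_s,x_s)=\eta\,\Phi_s^{\top}(V_s+\eta I)^{-1}\Phi_s$ is the posterior covariance that already incorporates $x_s$. Using $V_s\succeq\Phi_s\Phi_s^{\top}$, operator monotonicity of the inverse, and Lemma \ref{lem:dim-change}, one checks that $M_s:=\eta^{-1}\Gamma_s(x_s,x_s)\preceq I_n-\eta(\Phi_s^{\top}\Phi_s+\eta I_n)^{-1}\prec I_n$, so the eigenvalues of $M_s$ lie in $[0,1)$ and the determinant is finite. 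Telescoping this identity yields the clean statement $\log\det(I_{nt}+\eta^{-1}G_t)=-\sum_{s=1}^{t}\log\det(I_n-\eta^{-1}\Gamma_s(x_s,x_s))$.

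The final step — converting this exact determinant identity into the \emph{trace} form $\tfrac{1}{\eta}\sum_s\Tr(\Gamma_s(x_s,x_s))=\sum_s\Tr(M_s)$ — is exactly where I expect the main obstacle, and it is a genuine one. For a positive semidefinite contraction $M$ with eigenvalues $\mu_i\in[0,1)$ one has $-\log\det(I_n-M)=\sum_i(-\log(1-\mu_i))\ge\sum_i\mu_i=\Tr(M)$, with equality only when $M=0$. Consequently the two sides of the leftmost relation satisfy $\tfrac{1}{\eta}\sum_{s=1}^{t}\Tr(\Gamma_s(x_s,x_s))\le\log\det(I_{nt}+\eta^{-1}G_t)$, and they coincide \emph{as an equality} only in the determinant form $-\sum_s\log\det(I_n-\eta^{-1}\Gamma_s(x_s,x_s))$, not with the trace. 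Thus the leftmost ``$=$'' is most safely read as this determinant identity (which does hold with equality), while the stated trace version is the associated upper bound; this is precisely the direction used downstream, consistent with the bound $\sum_{t}\norm{\Gamma_t(x_t,x_t)}\le 2\eta\,\gamma_{nT}(\Gamma,\eta)$ invoked in the regret analysis. I would flag this trace-versus-determinant gap explicitly as the crux of the argument.
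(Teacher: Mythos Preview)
Your analysis is correct, and you have in fact spotted something the paper's statement glosses over: the leftmost relation is only an inequality. The paper's own proof of that part reads
\begin{align*}
\frac{1}{\eta}\sum_{s=1}^{t}\Tr\bigl(\Gamma_s(x_s,x_s)\bigr)
&=\sum_{s=1}^{t}\Tr\bigl((V_s+\eta I)^{-1}\Phi(x_s)\Phi(x_s)^{\top}\bigr)\\
&=\sum_{s=1}^{t}\Tr\bigl((V_s+\eta I)^{-1}\bigl((V_s+\eta I)-(V_{s-1}+\eta I)\bigr)\bigr)\\
&\leq\sum_{s=1}^{t}\log\frac{\det(V_s+\eta I)}{\det(V_{s-1}+\eta I)}
=\log\det(I+\eta^{-1}V_t),
\end{align*}
invoking the scalar fact $\Tr(A^{-1}(A-B))\le\log(\det A/\det B)$ for $A\succeq B\succ0$. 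So the first ``$=$'' in the lemma statement is effectively a ``$\le$'', exactly as you diagnosed, and only that direction is used downstream. Your exact identity $\log\det(I_{nt}+\eta^{-1}G_t)=-\sum_s\log\det(I_n-\eta^{-1}\Gamma_s(x_s,x_s))$ is a clean extra the paper does not record.

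For the rightmost equality the paper takes a slightly different but equivalent route: rather than factoring in feature space and applying Sylvester, it applies Schur's determinant identity directly to the block kernel matrix,
\[
\det(I_{nt}+\eta^{-1}G_t)=\det(I_{n(t-1)}+\eta^{-1}G_{t-1})\,\det\bigl(I_n+\eta^{-1}\Gamma_{t-1}(x_t,x_t)\bigr),
\]
and recurses. Your feature-space argument produces the same one-step recursion, so the two derivations are interchangeable; yours has the mild advantage of staying entirely in the operator picture already set up by (\ref{eqn:pred-var-dual}) and (\ref{eqn:det-dual}).
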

\begin{proof}
For the first part, we observe from (\ref{eqn:pred-var-dual}) that
\begin{align*}
   \frac{1}{\eta}\sum_{s=1}^{t} \Tr\left(\Gamma_{s}(x_s,x_s)\right)
   &= \sum_{s=1}^{t}\Tr \left(\Phi(x_s)^{\top}(V_s+\eta I)^{-1}\Phi(x_s)\right)\\
   & = \sum_{s=1}^{t}\Tr \left((V_s+\eta I)^{-1}\Phi(x_s)\Phi(x_s)^{\top}\right)\\
   & = \sum_{s=1}^{t}\Tr \left((V_s+\eta I)^{-1}\left((V_s+\eta I)-(V_{s-1}+\eta I)\right)\right)\\
   & \leq \sum_{s=1}^{t}\log \left( \frac{\det(V_s + \eta I)}{\det(V_{s-1}+\eta I)}\right)\\
   &= \log \det\left(I+\eta^{-1}V_t\right)= \log\det \left(I_{nt}+\eta^{-1}G_t\right).
\end{align*}
Here, the last equality follows from (\ref{eqn:det-dual}). The inequality follows from the fact that for two p.d. matrices $A$ and $B$ such that $A-B$ is p.s.d.,
$\Tr \left(A^{-1}( A- B)\right) \leq \log \left( \frac{\det(A)}{\det(B)}\right)$ \citep{calandriello2019gaussian}.

For the second part, we obtain from Schur's determinant identity that
\beqan
&&\det\left(I_{nt}+\eta^{-1}G_t\right)\\&=&\det\left(I_{n(t-1)}+\eta^{-1}G_{t-1}\right) \times\\
&&\hspace{10 pt}\det\left(I_n+\eta^{-1}\Gamma(x_t,x_t)-\eta^{-1}G_{t-1}(x_t)^{\top}\left(I_{n(t-1)}+\eta^{-1}G_{t-1}\right)^{-1}\eta^{-1}G_{t-1}(x_t)\right)\\
&=& \det\left(I_{n(t-1)}+\eta^{-1}G_{t-1}\right) \det\left(I_n+\eta^{-1}\Gamma_{t-1}(x_t,x_t)\right)\\
&=&\ldots\\
&=&\prod_{s=1}^{t} \det\left(I_n+\eta^{-1}\Gamma_{s-1}(x_s,x_s)\right).
\eeqan
We conclude the proof by applying logarithm on both sides.
\end{proof}

\begin{mylemma}[Predictive variance geometry]
Let $\norm{\Gamma(x,x)} \leq \kappa$. Then, for any $\eta > 0$ and $t \geq 1$,
\beqn
    \Gamma_{t}(x,x)\preceq\Gamma_{t-1}(x,x)\preceq \left(1+\kappa/\eta\right)\Gamma_{t}(x,x).
\eeqn
\label{lem:pred-var-inequalities}
\end{mylemma}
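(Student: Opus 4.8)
The plan is to work with the dual, feature-map representation of the predictive covariance rather than the primal kernel-matrix formula, so as to reuse the machinery already set up for Theorem~\ref{thm:concentration}. Recall from (\ref{eqn:pred-var-dual}) that $\Gamma_t(x,x) = \eta\,\Phi(x)^{\top}(V_t + \eta I)^{-1}\Phi(x)$, where $V_t = \sum_{s=1}^{t}\Phi(x_s)\Phi(x_s)^{\top}$ is a bounded positive operator on $\ell^2$ obeying the recursion $V_t = V_{t-1} + \Phi(x_t)\Phi(x_t)^{\top}$ with $V_0 = 0$ and $\Gamma_0(x,x)=\Gamma(x,x)$. Since $V_t + \eta I \succeq \eta I \succ 0$, this operator is boundedly invertible, and the Loewner order behaves as in finite dimensions: if $0 \prec A \preceq B$ then $B^{-1} \preceq A^{-1}$, and if $A \preceq B$ then $C^{\top}AC \preceq C^{\top}BC$ for any bounded operator $C$ (the latter since $\inner{C^{\top}ACv}{v} = \inner{A(Cv)}{Cv} \le \inner{B(Cv)}{Cv} = \inner{C^{\top}BCv}{v}$).

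For the left inequality, note $\Phi(x_t)\Phi(x_t)^{\top} \succeq 0$, so $V_t \succeq V_{t-1}$, hence $V_t + \eta I \succeq V_{t-1} + \eta I$, and therefore $(V_t + \eta I)^{-1} \preceq (V_{t-1} + \eta I)^{-1}$. Conjugating by $\Phi(x)$ and scaling by $\eta$ gives $\Gamma_t(x,x) \preceq \Gamma_{t-1}(x,x)$.

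For the right inequality, I use the boundedness assumption to control the rank-$\le n$ update: $\norm{\Phi(x_t)\Phi(x_t)^{\top}} = \norm{\Phi(x_t)^{\top}\Phi(x_t)} = \norm{\Gamma(x_t,x_t)} \le \kappa$, so $\Phi(x_t)\Phi(x_t)^{\top} \preceq \kappa I$. Hence $V_t + \eta I \preceq V_{t-1} + \eta I + \kappa I$, and since $V_{t-1} \succeq 0$ we have $\kappa I = (\kappa/\eta)(\eta I) \preceq (\kappa/\eta)(V_{t-1} + \eta I)$, so that $V_t + \eta I \preceq (1 + \kappa/\eta)(V_{t-1} + \eta I)$. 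Inverting reverses the order, $(V_{t-1} + \eta I)^{-1} \preceq (1 + \kappa/\eta)(V_t + \eta I)^{-1}$, and conjugating by $\Phi(x)$ and scaling by $\eta$ yields $\Gamma_{t-1}(x,x) \preceq (1 + \kappa/\eta)\Gamma_t(x,x)$.

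The only mildly delicate point is the passage to the (possibly infinite-dimensional) space $\ell^2$: one should note that $V_t + \eta I$ is a bounded, boundedly invertible positive operator (a finite sum of positive operators of norm at most $\kappa$, plus $\eta I$) and that operator monotonicity of inversion persists in this generality, which follows from the spectral theorem or directly from the integral representation of the inverse. Everything else is a two-line manipulation of the Loewner order; I expect no real obstacle. (As an alternative one could avoid operators altogether and argue in the primal via the rank-$n$ Schur-complement update relating $\Gamma_t$ to $\Gamma_{t-1}$, but the dual route is shorter and already available.)
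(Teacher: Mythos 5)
Your proof is correct. It works with the same dual representation $\Gamma_t(x,x)=\eta\,\Phi(x)^{\top}(V_t+\eta I)^{-1}\Phi(x)$ from (\ref{eqn:pred-var-dual}) that the paper uses, but the inequality manipulations are genuinely different and somewhat more elementary. For the left inequality the paper applies the Sherman--Morrison formula to exhibit the explicit one-step update $\Gamma_{t}(x,x)=\Gamma_{t-1}(x,x)-\eta^{-1}\Gamma_{t-1}(x_t,x)^{\top}\left(I_n+\eta^{-1}\Gamma_{t-1}(x_t,x_t)\right)^{-1}\Gamma_{t-1}(x_t,x)$ and then discards the p.s.d.\ correction term, whereas you simply invoke $V_t\succeq V_{t-1}$, operator monotonicity of inversion, and congruence by $\Phi(x)$. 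For the right inequality the paper whitens by $\overline{V}_{t-1}^{-1/2}$ and bounds the resulting middle operator through the observation that $\overline{V}_{t-1}^{-1/2}\Phi(x_t)\Phi(x_t)^{\top}\overline{V}_{t-1}^{-1/2}$ shares its non-zero spectrum with $\eta^{-1}\Gamma_{t-1}(x_t,x_t)\preceq(\kappa/\eta)I_n$ (which in turn relies on the monotone chain established in the first part); you instead bound the raw update $\Phi(x_t)\Phi(x_t)^{\top}\preceq\kappa I$ directly via $\norm{\Phi(x_t)\Phi(x_t)^{\top}}=\norm{\Gamma(x_t,x_t)}\leq\kappa$ and absorb it as $\kappa I\preceq(\kappa/\eta)(V_{t-1}+\eta I)$, so your second part does not depend on the first. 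Both routes yield the same constant $1+\kappa/\eta$. What the paper's argument buys is the explicit Schur-complement recursion for $\Gamma_t$, a reusable identity in its own right; what yours buys is brevity and the avoidance of both Sherman--Morrison and the $AA^{\top}$-versus-$A^{\top}A$ spectrum comparison, at the cost only of the (correctly flagged) appeal to operator monotonicity of inversion on $\ell^2$.
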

\begin{proof}
Let us define $\overline{V}_{t}=V_t+\eta I$ for all $t \geq 0$. We then have from (\ref{eqn:pred-var-dual}) that
   \beqan
   \Gamma_{t}(x,x) &=&\eta\Phi(x)^{\top}\overline{V}_t^{-1}\Phi(x)\\
   &=&\eta\Phi(x)^{\top}\left(\overline{V}_{t-1}+\Phi(x_t)\Phi(x_t)^{\top}\right)^{-1}\Phi(x)\\
   &=& \eta\Phi(x)^{\top}\overline{V}_{t-1}^{-1}\Phi(x)-\\
   &&\quad \quad\eta \Phi(x)^{\top}\overline{V}_{t-1}^{-1}\Phi(x_t)\left(I_n+\Phi(x_t)^{\top}\overline{V}_{t-1}^{-1}\Phi(x_t)\right)^{-1}\Phi(x_t)^{\top}\overline{V}_{t-1}^{-1}\Phi(x)\\
   &=& \Gamma_{t-1}(x,x) - \eta^{-1}\Gamma_{t-1}(x_t,x)^{\top}\left(I_n+\eta^{-1}\Gamma_{t-1}(x_t,x_t)\right)^{-1}\Gamma_{t-1}(x_t,x)\\
   &\preceq& \Gamma_{t-1}(x,x).
   \eeqan
   Here in the third step, we have used the Sherman-Morrison formula and in the last step, we have used the positive semi-definite property of multi-task kernels.
To prove the second part, we first note that
\beqa
\frac{1}{\eta}\Gamma_t(x,x)&=&\Phi(x)^{\top}\left(\overline{V}_{t-1}+\Phi(x_t)\Phi(x_t)^{\top}\right)^{-1}\Phi(x)\nonumber\\
&=&\Phi(x)^{\top}\overline{V}_{t-1}^{-1/2}\left(I+\overline{V}_{t-1}^{-1/2}\Phi(x_t)\Phi(x_t)^{\top}\overline{V}_{t-1}^{-1/2}\right)^{-1}\overline{V}_{t-1}^{-1/2}\Phi(x)\;.
\label{eqn:pred-var-expansion}
\eeqa
Further, since $\norm{\Gamma(x,x)} \leq \kappa$, we have $\lambda_{\max}\left(\Gamma(x,x)\right) \leq \kappa$, and hence,
\beq
\Gamma_{t}(x,x) \preceq \Gamma_{t-1}(x,x) \preceq \Gamma_{t-2}(x,x) \preceq \ldots \Gamma_{0}(x,x)=\Gamma(x,x) \preceq \kappa I_n. 
\label{eqn:pred-var-recursion}
\eeq  
Since $\overline{V}_{t-1}^{-1/2}\Phi(x_t)\Phi(x_t)^{\top}\overline{V}_{t-1}^{-1/2}$ and $\Phi(x_t)^{\top}\overline{V}_{t-1}^{-1}\Phi(x_t)$ have same set of non-zero eigenvalues, we now obtain from (\ref{eqn:pred-var-recursion}) that $\overline{V}_{t-1}^{-1/2}\Phi(x_t)\Phi(x_t)^{\top}\overline{V}_{t-1}^{-1/2} \preceq \frac{\kappa}{\eta}I$. Then (\ref{eqn:pred-var-expansion}) implies that
\beqn
\Gamma_t(x,x) \succeq \eta\Phi(x)^{\top}\overline{V}_{t-1}^{-1}\Phi(x)/\left(1+\kappa/\eta\right)=\Gamma_{t-1}(x,x)/\left(1+\kappa/\eta\right),
\eeqn
which completes the proof.
\end{proof}

\subsection{Regret bound for MT-KB (Proof of Theorem \ref{thm:cumulative-regret})}
Since the scalarization functions $s_\lambda$ is $L_\lambda$-Lipschitz in the $\ell_2$ norm, we have
\beqan
    \left\lvert s_{\lambda_t}\left(f(x)\right) - s_{\lambda_t}\left(\mu_{t-1}(x)\right)\right\rvert \leq L_{\lambda_t}\norm{f(x)-\mu_{t-1}(x)}_2.
\eeqan
Since $\mu_0(x)=0$, $\Gamma_0(x,x)=\Gamma(x,x)$ and $\norm{f}_\Gamma \leq b$, we have
\beqn
\norm{f(x)-\mu_0(x)}_2=\norm{\Gamma_x^{\top}f}_2\leq \norm{f}_\Gamma\norm{\Gamma_x} = \norm{f}_\Gamma\norm{\Gamma_x^{\top}\Gamma_x}^{1/2} \leq b\norm{\Gamma_0(x,x)}^{1/2}.
\eeqn
Then, from Theorem \ref{thm:concentration} and Lemma \ref{lem:sum-of-pred-variances}, the following holds with probability at least $1-\delta$: 
\beq
    \forall t \geq 1, \forall x \in \cX, \quad  \left\lvert s_{\lambda_t}\left(f(x)\right) - s_{\lambda_t}\left(\mu_{t-1}(x)\right)\right\rvert \leq L_{\lambda_t}\beta_{t-1}\norm{\Gamma_{t-1}(x,x)}^{1/2},
    \label{eqn:concentration}
\eeq
where $\beta_t=b+\frac{\sigma}{\sqrt{\eta}}\sqrt{2\log(1/\delta)+\sum_{s=1}^{t}\log \det\left(I_n+\eta^{-1}\Gamma_{s-1}(x_s,x_s)\right)}$, $t \geq 0$.
We can now upper bound
the \emph{instantaneous regret} at time $t \geq 1$ as 
\beqan
r_{\lambda_t}(x_t)&:=&s_{\lambda_t}\left(f(x^\star_{\lambda_t})\right)-s_{\lambda_t}\left(f(x_t)\right)\\
&\leq & s_{\lambda_t}\left(\mu_{t-1}(x^\star_{\lambda_t})\right)+L_{\lambda_t}\beta_{t-1}\norm{\Gamma_{t-1}(x^\star_{\lambda_t},x^\star_{\lambda_t})}^{1/2}-s_{\lambda_t}\left(f(x_t)\right)\\
&\leq& s_{\lambda_t}\left(\mu_{t-1}(x_t)\right)+L_{\lambda_t}\beta_{t-1}\norm{\Gamma_{t-1}(x_t,x_t)}^{1/2}-s_{\lambda_t}\left(f(x_t)\right)\\
&\leq & 2L_{\lambda_t}\beta_{t-1}\norm{\Gamma_{t-1}(x_t,x_t)}^{1/2}.
\eeqan
Here in the first and third step, we have used (\ref{eqn:concentration}). The second step follows from the choice of $x_t$. 
Since $\beta_t$ is a monotonically increasing function in $t$ and $L_{\lambda_t} \leq L$ for all $t$, we have
\beqan
\sum_{t=1}^{T}r_{\lambda_t}(x_t) \leq 2L\beta_{T}\sum_{t=1}^{T}\norm{\Gamma_{t-1}(x_t,x_t)}^{1/2}
\leq 2L\beta_{T}\sqrt{(1+\kappa/\eta)T\sum_{t=1}^{T}\norm{\Gamma_{t}(x_t,x_t)}},
\eeqan
where the last step is due to the Cauchy-Schwartz inequality and Lemma \ref{lem:pred-var-inequalities}. We now obtain from Lemma \ref{lem:sum-of-pred-variances} that $\beta_{T} \leq  b+\frac{\sigma}{\sqrt{\eta}}\sqrt{2\left(\log(1/\delta)+\gamma_{nT}(\Gamma,\eta)\right)}$.
We conclude the proof by taking an expectation over $\lbrace\lambda_i\rbrace_{i=1}^{T} \sim P_{\lambda}$.




\subsection{Inter-task structure in regret for separable kernels (Proof of Lemma \ref{lem:bound-ICM})}
For separable multi-task kernels $\Gamma(x,x')=k(x,x')B$, the kernel matrix is given by $G_T=K_T \otimes B$, where $K_T$ is kernel matrix corresponding to the scalar kernel $k$ and $\otimes$ denotes the Kronecker product. Let $\lbrace\alpha_t\rbrace_{t=1}^{T}$ denote the eigenvalues of $K_T$. Then the eigenvalues of $G_T$ are given by $\alpha_t\xi_i$, $1 \leq t \leq T$, $1 \leq i \leq n$, where $\xi_i$'s are the eigenvalues of $B$. We now have
\beqan
\log\det(I_{nT}+\eta^{-1}G_T)&=&\sum_{t=1}^{T}\sum_{i=1}^{n}\log(1+\alpha_t \xi_i/\eta)\\ 
&=& \sum_{i \in [n]:\xi_i > 0}\sum_{t=1}^{T}\log(1+\alpha_t \xi_i/\eta)\\
&=&\sum_{i \in [n]:\xi_i > 0}\log\det\left(I_T+(\eta/\xi_i)^{-1}K_T\right).
\eeqan
Taking supremum over all possible subsets $\cX_T$ of $\cX$, we then obtain that $\gamma_{nT}(\Gamma,\eta) \leq  \sum_{i \in [n]:\xi_i > 0}\gamma_T(k,\eta/\xi_i)$. 

To prove the second part, we use the feature representation of the scalar kernel $k$. To this end, we let $\phi:\cX \to \ell^2$ be a feature map of the scalar kernel $k$, so that $k(x,x')=\phi(x)^{\top}\phi(x')$ for all $x,x' \in \cX$. We now define a map $\phi_{\cX_t} : \ell^2 \ra \Real^{t}$ by
\beqn
\phi_{\cX_t} \theta := \left[\phi(x_1)^{\top}\theta,\ldots,\phi(x_t)^{\top}\theta\right]^{\top},\quad \forall \; \theta\in \ell^2.
\eeqn 


We also let $v_t:=\phi_{\cX_t}^{\top}\phi_{\cX_t}$ be a map from $\ell^2$ to itself. For any $\alpha > 0$, we then obtain from Lemma \ref{lem:dim-change} that
\beqan
\alpha \;\phi(x)^{\top}(v_t+\alpha I)^{-1}\phi(x)&=& \alpha\;\phi(x)^{\top}(\phi_{\cX_t}^{\top}\phi_{\cX_t}+\alpha I)^{-1}\phi(x)\\
&=& \phi(x)^{\top}\phi(x)-\phi(x)^{\top}\phi_{\cX_t}^{\top}(\phi_{\cX_t}\phi_{\cX_t}^{\top}+\alpha I_t)^{-1}\phi_{\cX_t}\phi(x)\\
&=&k(x,x)-k_t(x)^{\top}(K_t+\alpha I_{t})^{-1}k_{t}(x),
\eeqan
where $k_t(x)=[k(x_1,x),\ldots,k(x_t,x)]^{\top}$ and $K_t=[k(x_i,x_j)]_{1,j=1}^{t}$. We then have from (\ref{eqn:eff-var}) that
\beqan
\norm{\Gamma_t(x,x)} &=& \max_{1 \leq i \leq n} \xi_i \left(k(x,x)-k_t(x)^{\top}\left(K_t+\frac{\eta}{\xi_i}I_t\right)^{-1}k_t(x)\right)\\
&=& \max_{1 \leq i \leq n}\xi_i \cdot \frac{\eta}{\xi_i} \phi(x)^{\top}\left(v_t+\frac{\eta}{\xi_i} I\right)^{-1}\phi(x)\\
& \leq & \eta \;\phi(x)^{\top}\left(v_t+\frac{\eta}{\kappa} I\right)^{-1}\phi(x).
\eeqan
Here, in the last step we have used that $\xi_i \leq \kappa$ for all $i \in [n]$. This holds from our hypothesis $\norm{\Gamma(x,x)} \leq \kappa$ and $k(x,x)=1$. We now observe that $\left(v_t+\frac{\eta}{\kappa}I\right)^{-1} \preceq \left(v_t+\eta I\right)^{-1}$ for $\kappa \leq 1$ and $\left(v_t+\frac{\eta}{\kappa}I\right)^{-1} \preceq \kappa \left(v_t+\eta I\right)^{-1}$ for $\kappa \geq 1$. Therefore
\beqn
\norm{\Gamma_t(x,x)}
\leq \eta\max\lbrace\kappa,1\rbrace  \phi(x)^{\top}\left(v_t+\eta I\right)^{-1}\phi(x). 
\eeqn
A simple application of Lemma \ref{lem:sum-of-pred-variances} for $n=1$ and $\Gamma(\cdot,\cdot)=k(\cdot,\cdot)$ now yields
\beqan
\sum_{t=1}^{T} \norm{\Gamma_t(x,x)} &\leq & \eta \max\lbrace\kappa,1\rbrace \sum_{t=1}^{T} \phi(x_t)^{\top}\left(v_t+\eta I\right)^{-1}\phi(x_t)\\ &=& \eta \max\lbrace\kappa,1\rbrace \log\det\left(I_T+\eta^{-1}K_T\right) \leq 2 \eta \max\lbrace\kappa,1\rbrace \gamma_T(k,\eta),
\eeqan
which completes the proof.

\subsection{Inter-task structure in regret for sum of separable kernels}
We now present a generalization of Lemma \ref{lem:bound-ICM} for multi-task kernels of the form $\Gamma(x,x')=\sum_{j=1}^{M}k_j(x,x')B_j$. This class of kernels is called the sum of separable (SoS) kernel and includes the diagonal kernel $\Gamma(x,x')=\Dg \left(k_1(x,x'),\ldots,k_n(x,x')\right)$ as a special case.

\begin{mylemma}[Inter-task structure in regret for SoS kernel]
 Let $\Gamma(x,x')=\sum_{j=1}^{M}k_j(x,x')B_j$ and $B_j \in \Real^{n\times n}$ be positive semi-definite. Then the following holds:
 \beqan
 \gamma_{nT}(\Gamma,\eta) &\leq & \sum_{j=1}^{M}\rho_{B_j}\max\lbrace\xi_{B_j},1\rbrace\gamma_T(k_j,\eta),\\ \sum_{t=1}^{T}\norm{\Gamma_t(x_t,x_t)} &\leq & 2\eta\sum_{j=1}^{M}\max\lbrace\xi_{B_j},1\rbrace\gamma_T(k_j,\eta),
 \eeqan
 where $\rho_{B_j}$ and $\xi_{B_j}$ denote the rank and the maximum eigenvalue of $B_j$, respectively and $\gamma_T(k_j)$ is the maximum information gain corresponding to scalar kernel $k_j$. Moreover, if $\Gamma(x,x')=\Dg \left(k_1(x,x'),\ldots,k_n(x,x')\right)$ and each $k_j$ is a stationary kernel, then
 \beqn
 \gamma_{nT}(\Gamma,\eta)\leq \sum_{j=1}^{n}\gamma_T(k_j,\eta), \quad \quad \sum_{t=1}^{T}\norm{\Gamma_t(x_t,x_t)} \leq 2\eta\;\max_{1 \leq j \leq n}\gamma_T(k_j,\eta).
 \eeqn
\label{lem:cumulative-regret-SoS-kernel}
\end{mylemma}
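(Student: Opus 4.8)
The plan is to reduce both quantities to the scalar-kernel information gains $\gamma_T(k_j,\eta)$, exactly as Lemma~\ref{lem:bound-ICM} does in the single-summand case $M=1$, and then to control the coupling between the $M$ summands. Throughout I write $K_{j,T}=[k_j(x_a,x_b)]_{a,b=1}^T$ and let $\lbrace\alpha_{j,t}\rbrace_{t\le T}$ and $\lbrace\xi_{j,i}\rbrace_{i\le n}$ be the eigenvalues of $K_{j,T}$ and $B_j$, so that $\rho_{B_j}=\Rk(B_j)$ is the number of positive $\xi_{j,i}$ and $\xi_{B_j}=\max_i\xi_{j,i}$.

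For the information-gain bound I would start from $G_T=\sum_{j=1}^{M}K_{j,T}\otimes B_j$ and use the determinant sub-additivity $\det(I+\sum_j A_j)\le\prod_j\det(I+A_j)$ for positive semi-definite operators $A_j$; this follows by induction from $\det(I+A+B)=\det(I+A)\det\big(I+(I+A)^{-1/2}B(I+A)^{-1/2}\big)$ together with the fact that $(I+A)^{-1/2}B(I+A)^{-1/2}$ is similar to $B^{1/2}(I+A)^{-1}B^{1/2}\preceq B$. Applying this with $A_j=\eta^{-1}K_{j,T}\otimes B_j$ gives $\log\det(I_{nT}+\eta^{-1}G_T)\le\sum_j\log\det(I_{nT}+\eta^{-1}K_{j,T}\otimes B_j)$, and since the eigenvalues of a Kronecker product are the pairwise products, $\log\det(I_{nT}+\eta^{-1}K_{j,T}\otimes B_j)=\sum_{t\le T}\sum_{i:\xi_{j,i}>0}\log(1+\eta^{-1}\alpha_{j,t}\xi_{j,i})$. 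The elementary inequality $\log(1+cy)\le\max\lbrace c,1\rbrace\log(1+y)$ for $c,y>0$ (both sides vanish at $y=0$, and $c/(1+cy)\le c/(1+y)$ when $c\ge1$), applied with $c=\xi_{j,i}\le\xi_{B_j}$ and $y=\eta^{-1}\alpha_{j,t}$, bounds this by $\rho_{B_j}\max\lbrace\xi_{B_j},1\rbrace\log\det(I_T+\eta^{-1}K_{j,T})$. Maximizing over $\cX_T$ on both sides yields the first claim; in the diagonal case each $B_j$ is a rank-one projection, so $\rho_{B_j}=\xi_{B_j}=1$ and the bound collapses to $\sum_j\gamma_T(k_j,\eta)$.

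For the sum of predictive variances I would use the dual form $\Gamma_t(x,x)=\eta\,\Phi(x)^{\top}(V_t+\eta I)^{-1}\Phi(x)$ from~(\ref{eqn:pred-var-dual}), taking $\Phi$ to be the feature map of the rank-one decomposition $\Gamma(x,x')=\sum_{j=1}^{M}\sum_i\xi_{j,i}\,k_j(x,x')\,u_{j,i}u_{j,i}^{\top}$. The target estimate is $\norm{\Gamma_t(x,x)}\le\sum_{j=1}^{M}\max\lbrace\xi_{B_j},1\rbrace\,\sigma_{j,t}^2(x)$, where $\sigma_{j,t}^2(x)=\eta\,\phi_j(x)^{\top}(v_{j,t}+\eta I)^{-1}\phi_j(x)$ is the scalar posterior variance of $k_j$ at $x$ given $\lbrace x_s\rbrace_{s\le t}$, with $\phi_j,v_{j,t}$ as in the proof of Lemma~\ref{lem:bound-ICM}; granting it, summing along the queried points and applying Lemma~\ref{lem:sum-of-pred-variances} with $n=1$ to each $k_j$ gives
\begin{align*}
\sum_{t=1}^{T}\norm{\Gamma_t(x_t,x_t)}&\le\sum_{j=1}^{M}\max\lbrace\xi_{B_j},1\rbrace\,\eta\log\det(I_T+\eta^{-1}K_{j,T})\\
&\le 2\eta\sum_{j=1}^{M}\max\lbrace\xi_{B_j},1\rbrace\,\gamma_T(k_j,\eta),
\end{align*}
which is the second claim. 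To obtain the target estimate I would mimic the proof of Lemma~\ref{lem:bound-ICM}: the $(j,i)$ contribution to $\Gamma_t(x,x)$ equals $\eta\,\phi_j(x)^{\top}(v_{j,t}+\tfrac{\eta}{\xi_{j,i}}I)^{-1}\phi_j(x)$, and $(v_{j,t}+\tfrac{\eta}{\xi_{j,i}}I)^{-1}\preceq\max\lbrace\xi_{B_j},1\rbrace(v_{j,t}+\eta I)^{-1}$; the cross-terms between distinct summands are absorbed using $V_t\succeq\sum_s\Phi(x_s)\Phi(x_s)^{\top}$ and slack of the form $V_t+\eta I\succeq V_t+\tfrac{\eta}{2}I$.

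The step I expect to be the main obstacle is upgrading the bound $\sum_j\gamma_T(k_j,\eta)$ to $\max_j\gamma_T(k_j,\eta)$ in the diagonal stationary case. There $\Gamma_t(x,x)=\Dg\big(\sigma_{1,t}^2(x),\dots,\sigma_{n,t}^2(x)\big)$, so $\norm{\Gamma_t(x_t,x_t)}=\max_j\sigma_{j,t}^2(x_t)$, and the difficulty is that the maximizing coordinate can change with $t$, so a crude decomposition only yields $\sum_j$. The plan is to exploit stationarity --- all $k_j$ share $k_j(x,x)=1$, hence $\sigma_{j,t}^2\le 1$ --- to establish a pointwise domination $\sigma_{j,t}^2(x)\le\sigma_{j^{\star},t}^2(x)$ for the task $j^{\star}$ maximizing $\gamma_T(k_{j^{\star}},\eta)$, through a Loewner comparison of the regularized kernel matrices, which would collapse $\sum_t\max_j\sigma_{j,t}^2(x_t)$ to $\sum_t\sigma_{j^{\star},t}^2(x_t)\le2\eta\gamma_T(k_{j^{\star}},\eta)$. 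Absent such a uniform domination, the general argument still delivers the slightly weaker bound $2\eta\sum_j\gamma_T(k_j,\eta)$, which suffices for the $\tilde{O}(\cdot)$ statements derived from the lemma.
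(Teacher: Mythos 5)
Your argument for the first inequality is correct but genuinely different from the paper's. The paper never touches $\det(I_{nT}+\eta^{-1}G_T)$ directly: it converts the log-determinant into $\frac{1}{\eta}\sum_{t}\Tr\left(\Gamma_t(x_t,x_t)\right)$ via Lemma \ref{lem:sum-of-pred-variances}, decomposes the predictive covariance as $\Gamma_t(x,x)\preceq\sum_j\Gamma_{t,j}(x,x)$ using the operator inequality $\big(\sum_{j'}v_{t,j'}\otimes B_{j'}+\eta I\big)^{-1}\preceq\left(v_{t,j}\otimes B_j+\eta I\right)^{-1}$, and bounds each trace by $\eta\,\rho_{B_j}\max\lbrace\xi_{B_j},1\rbrace\,\phi_j(x)^{\top}(v_{t,j}+\eta I)^{-1}\phi_j(x)$ before re-applying Lemma \ref{lem:sum-of-pred-variances} with $n=1$. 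Your route via determinant subadditivity $\det(I+\sum_jA_j)\leq\prod_j\det(I+A_j)$, Kronecker eigenvalues, and $\log(1+cy)\leq\max\lbrace c,1\rbrace\log(1+y)$ is shorter and more elementary, and it reaches the same constant. For the second inequality your plan coincides with the paper's target estimate $\norm{\Gamma_t(x,x)}\leq\sum_j\max\lbrace\xi_{B_j},1\rbrace\sigma_{j,t}^2(x)$, but your stated mechanism for the cross-terms (an $\eta/2$ slack) is not the right one and is unnecessary: the coupling between summands is handled in one line by dropping the other positive semi-definite blocks inside the inverse, exactly the operator inequality quoted above, with no loss in the regularizer.

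The genuine gap is the one you flagged yourself: the final claim $\sum_{t}\norm{\Gamma_t(x_t,x_t)}\leq2\eta\max_{j}\gamma_T(k_j,\eta)$ for the diagonal stationary case. Your fallback $2\eta\sum_j\gamma_T(k_j,\eta)$ does not prove the stated bound, and the "pointwise domination through a Loewner comparison" you hope for is left entirely unexecuted. For what it is worth, the paper's own proof of this step is thin: it sets $j^\star=\argmax_jk_j(x,x)$, observes that stationarity makes $j^\star$ independent of $x$, and then asserts that "it can be easily checked" that $\norm{\Gamma_t(x,x)}=\eta\,\phi_{j^\star}(x)^{\top}\left(v_{t,j^\star}+\eta I\right)^{-1}\phi_{j^\star}(x)$ --- i.e.\ that the task with the largest \emph{prior} variance also has the largest \emph{posterior} variance at every $x$ and every $t$. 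That is precisely the domination you could not establish, and it does not follow from stationarity alone (two stationary kernels with $k_j(x,x)=1$ but different lengthscales can have their posterior variances ordered either way depending on the queried points). So your instinct that this is the main obstacle is right; your proposal simply does not close it, and you should either supply the missing Loewner comparison under an additional hypothesis or settle for the $\sum_j$ bound and note that the $\max_j$ refinement requires more than stationarity.
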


\begin{proof}
We let, for each scalar kernel $k_j$, a feature map $\phi_j:\cX \to \ell^2$, so that $k_j(x,x')=\phi_j(x)^{\top}\phi_j(x')$. We now define the feature map
$\Phi : \cX \to \cL(\Real^n,\ell^2)$ of the multi-task kernel $\Gamma(x,x')=\sum_{j=1}^{M}k_j(x,x')B_j$ by 
\beqn
\Phi(x)y := \left(\phi_1(x)\otimes B_1^{1/2}y,\ldots,\phi_M(x)\otimes B_M^{1/2}y\right), \quad \forall \; x \in \cX,\;y \in \Real^n\;,
\eeqn
with the inner product
\beqn
\Phi(x)^{\top}\Phi(x'):=\sum_{j=1}^{M}\left(\phi_j(x)\otimes B_j^{1/2}\right)^{\top}\left(\phi_j(x')\otimes B_j^{1/2}\right)=\sum_{j=1}^{M}\phi_j(x)^{\top}\phi_j(x')\cdot B_j.
\eeqn
We then have
\beqn
V_t := \sum_{s=1}^{t}\Phi(x_s)\Phi(x_s)^{\top}=\sum_{s=1}^{t}\sum_{j=1}^{M}\phi_j(x_s)\phi_j(x_s)^{\top} \otimes B_j = \sum_{j=1}^{M}v_{t,j}\otimes B_j,
\eeqn
where $v_{t,j}:=\sum_{s=1}^{t}\phi_j(x_s)\phi_j(x_s)^{\top}$. We further obtain from (\ref{eqn:pred-var-dual}) that
\beqan
\Gamma_t(x,x) = \sum_{j=1}^{M}\eta \left(\phi_j(x)\otimes B_j^{1/2}\right)^{\top}\left(\sum_{j=1}^{M}v_{t,j}\otimes B_j+\eta I\right)^{-1}\left(\phi_j(x)\otimes B_j^{1/2}\right).
\eeqan
Now each $B_j$ is a positive semi-definite matrix and so is $v_{t,j}\otimes B_j$. Hence, for for all $j \in [M]$, $\left(\sum_{j=1}^{M}v_{t,j}\otimes B_j+\eta I\right)^{-1} \preceq \left(v_{t,j}\otimes B_j+\eta I \right)^{-1}$. Therefore
\beq
\Gamma_t(x,x) \preceq \sum_{j=1}^{M}\eta \left(\phi_j(x)\otimes B_j^{1/2}\right)^{\top}\left(v_{t,j}\otimes B_j+\eta I\right)^{-1}\left(\phi_j(x)\otimes B_j^{1/2}\right)=\sum_{j=1}^{M}\Gamma_{t,j}(x,x),
\label{eqn:SoS}
\eeq
where $\Gamma_{t,j}(x,x):=\eta \left(\phi_j(x)\otimes B_j^{1/2}\right)^{\top}\left(v_{t,j}\otimes B_j+\eta I\right)^{-1}\left(\phi_j(x)\otimes B_j^{1/2}\right)$. Now, let $(\xi_{j,i},u_{j,i})$ denotes the $i$-th eigenpair of $B_j$. A similar argument as in (\ref{eqn:inverse}) then yields 
\beqn
\left(v_{t,j}\otimes B_j+\eta I\right)^{-1}=\sum_{i=1}^{n}\left(\xi_{j,i}v_{t,j}+\eta I\right)^{-1} \otimes u_{j,i}u_{j,i}^{\top}\;.
\eeqn
We then have from the mixed product property of Kronecker product and the orthonormality of $\lbrace u_{j,i} \rbrace_{i=1}^{n}$ that
\beqan
\Gamma_{t,j}(x,x) &=& \sum_{i=1}^{n}\eta \;\xi_{j,i}\phi_j(x)^{\top}\left(\xi_{j,i}v_{t,j}+\eta I\right)^{-1}\phi_j(x)\cdot u_{j,i}u_{j,i}^{\top}\\
&=&\sum_{i=1}^{n}\eta\; \phi_j(x)^{\top}\left(v_{t,j}+\frac{\eta}{\xi_{j,i}}I\right)^{-1}\phi_j(x)\cdot u_{j,i}u_{j,i}^{\top}\;.
\eeqan
Since $\left(v_t+\frac{\eta}{\xi_{j,i}}I\right)^{-1} \preceq \left(v_t+\eta I\right)^{-1}$ for $\xi_{j,i} \leq 1$ and $\left(v_t+\frac{\eta}{\xi_{j,i}}I\right)^{-1} \preceq \xi_{j,i}\left(v_t+\eta I\right)^{-1}$ for $\xi_{j,i} \geq 1$, we now have
\beqan
\Tr(\Gamma_{t,j}(x,x))
& \leq & \eta \sum\limits_{i \in [n] : \xi_{j,i} > 0}\max\lbrace\xi_{j,i},1\rbrace  \phi_j(x)^{\top}\left(v_{t,j}+\eta I\right)^{-1}\phi_j(x)\\
& \leq & \eta \;\rho_{B_j}\max\lbrace\xi_{B_j},1\rbrace \phi_j(x)^{\top}\left(v_{t,j}+\eta I\right)^{-1}\phi_j(x).
\eeqan
Similarly
\beqan
\norm{\Gamma_{t,j}(x,x)}
& \leq & \eta \; \max\limits_{1 \leq i \leq n}\max\lbrace\xi_{j,i},1\rbrace \phi_j(x)^{\top}\left(v_{t,j}+\eta I\right)^{-1}\phi_j(x)\\
& \leq & \eta \max\lbrace\xi_{B_j},1\rbrace  \phi_j(x)^{\top}\left(v_{t,j}+\eta I\right)^{-1}\phi_j(x). 
\eeqan
Let $K_{T,j}=[k_j(x_p,x_q)]_{p,q=1}^{T}$ denotes the kernel matrix corresponding to the scalar kernel $k_j$. An application of Lemma \ref{lem:sum-of-pred-variances} for $n=1$ and $\Gamma(\cdot,\cdot)=k_j(\cdot,\cdot)$ now yields
\beqan
\sum_{t=1}^{T}\Tr\left(\Gamma_{t,j}(x_t,x_t)\right) &\leq& \eta \; \rho_{B_j}\max\lbrace\xi_{B_j},1\rbrace\log\det\left(I_T+\eta^{-1}K_{T,j}\right)  \quad \text{and}\\
\sum_{t=1}^{T}\norm{\Gamma_{t,j}(x_t,x_t)} &\leq& \eta\; \max\lbrace \xi_{B_j},1\rbrace \log\det\left(I_T+\eta^{-1}K_{T,j}\right).
\eeqan
We then have from (\ref{eqn:SoS}) and Lemma \ref{lem:sum-of-pred-variances} that
\beqan
\log\det\left(I_{nT}+\eta^{-1}G_T\right)&=&\frac{1}{\eta}\sum_{t=1}^{T}\Tr\left(\Gamma_t(x_t,x_t)\right) \\&\leq& \frac{1}{\eta}\sum_{j=1}^{M}\sum_{t=1}^{T}\Tr\left(\Gamma_{t,j}(x_t,x_t)\right)\\
&\leq& \sum_{j=1}^{M} \rho_{B_j}\max\lbrace\xi_{B_j},1\rbrace \log\det\left(I_T+\eta^{-1}K_{T,j}\right).
\eeqan
Taking supremum over all possible subsets $\cX_T$ of $\cX$, we now obtain that $\gamma_{nT}(\Gamma,\eta)\leq \sum_{j=1}^{M}\rho_{B_j}\max\lbrace\xi_{B_j},1\rbrace\gamma_T(k_j,\eta)$.
We further have from (\ref{eqn:SoS}) that
\beqn
\sum_{t=1}^{T}\norm{\Gamma_{t}(x_t,x_t)}\leq \sum_{j=1}^{M}\sum_{t=1}^{T}\norm{\Gamma_{t,j}(x_t,x_t)} \leq 2\eta\sum_{j=1}^{M}\max\lbrace \xi_{B_j},1\rbrace\gamma_T(k_j,\eta),
\eeqn
which completes the proof for the first part.

For the diagonal kernel, $M=n$ and each $B_j$ is a diagonal matrix with $1$ in the $j$-th diagonal entry and $0$ in all others. In this case, we have
\beqn
\Gamma_t(x,x)=\eta \sum_{j=1}^{n}\phi_j(x)^{\top}\left(v_{t,j}+\eta I\right)^{-1}\phi_j(x)\cdot B_j\;.
\eeqn
We then have from Lemma \ref{lem:sum-of-pred-variances} that
\beqan
\log\det\left(I_{nT}+\eta^{-1}G_T\right)&=&\frac{1}{\eta}\sum_{t=1}^{T}\Tr\left(\Gamma_t(x_t,x_t)\right) \\&=& \sum_{t=1}^{T}\sum_{j=1}^{n}\phi_j(x_t)^{\top}\left(v_{t,j}+\eta I\right)^{-1}\phi_j(x_t)\cdot\Tr\left(B_j\right)\\
&=& \sum_{j=1}^{n}\sum_{t=1}^{T}\phi_j(x_t)^{\top}\left(v_{t,j}+\eta I\right)^{-1}\phi_j(x_t)\\&=&
\sum_{j=1}^{n}  \log\det\left(I_T+\eta^{-1}K_{T,j}\right).
\eeqan
Taking supremum over all possible subsets $\cX_T$ of $\cX$, we now obtain that $\gamma_{nT}(\Gamma,\eta)\leq \sum_{j=1}^{n}\gamma_T(k_j,\eta)$. We further have
\beqn
\norm{\Gamma_t(x,x)}= \max_{1 \leq j \leq n} \eta\;\phi_j(x)^{\top}\left(v_{t,j}+\eta I\right)^{-1}\phi_j(x)\;.
\eeqn
Let $j^{\star}(x)= \argmax_{1 \leq j \leq n}k_j(x,x)$. Since each $k_j$ is stationary, i.e., $k_j(x,x')=k_j(x-x')$, we have $j^\star(x)$ is independent of $x$. We now let $j^\star=j^\star(x)$ for all $x$. Then it can be easily checked that
\beqn
\norm{\Gamma_t(x,x)}=  \eta\;\phi_{j^\star}(x)^{\top}\left(v_{t,j^\star}+\eta I\right)^{-1}\phi_{j^\star}(x)\;.
\eeqn
We now obtain from Lemma \ref{lem:sum-of-pred-variances} that
\beqan
\sum_{t=1}^{T}\norm{\Gamma_t(x_t,x_t)}&=&  \eta\sum_{t=1}^{T}\phi_{j^\star}(x_t)^{\top}\left(v_{t,j^\star}+\eta I\right)^{-1}\phi_{j^\star}(x_t)\\
&=&\eta\; \log\det\left(I_T+\eta^{-1}K_{T,j^\star}\right)\leq 2\eta\;\max_{1 \leq j \leq n} \gamma_T(k_j,\eta)\;,
\eeqan
which completes the proof for the second part.
\end{proof}

\section{Analysis of MT-BKB}

\paragraph{Trading-off approximation accuracy and size} Given a dictionary $\cD_t=\lbrace x_{i_1},\ldots,x_{i_{m_t}}\rbrace$, we define a map $\Phi_{\cD_t}: \ell^2 \ra \Real^{nm_t}$ by
\beq
\Phi_{\cD_t} \theta := \left[\frac{1}{\sqrt{p_{t,i_1}}}\left(\Phi(x_{i_1})^{\top}\theta\right)^{\top},\ldots,\frac{1}{\sqrt{p_{t,i_{m_t}}}}\left(\Phi(x_{i_{m_t}})^{\top}\theta\right)^{\top}\right]^{\top},\quad \forall \; \theta \in \ell^2,
\label{eqn:dictionary-map}
\eeq
where $p_{t,i_j}=\min \left\lbrace q\norm{\tilde{\Gamma}_{t-1}(x_{i_j},x_{i_j})},1 \right\rbrace$ for all $j \in [m_t]$.
\begin{mylemma}[Approximation properties]
For any $T \geq 1$, $\epsilon \in (0,1)$ and $\delta \in (0,1]$, set $\rho=\frac{1+\epsilon}{1-\epsilon}$ and $q=\frac{6\rho\ln(2T/\delta)}{\epsilon^2}$. Then, for any $\eta > 0$, with probability at least $1-\delta$, the following hold uniformly over all $t\in [T]:$
\beqan
(1-\epsilon)\Phi_{\cX_t}^{\top}\Phi_{\cX_t}-\epsilon\eta I &\preceq &  \Phi_{\cD_t}^{\top}\Phi_{\cD_t}\preceq (1+\epsilon)\Phi_{\cX_t}^{\top}\Phi_{\cX_t}+\epsilon\eta I\;,\\
m_t &\leq & 6\rho q \left(1+\kappa/\eta\right)\sum_{s=1}^{t}\norm{\Gamma_s(x_s,x_s)}.
\eeqan
\label{lem:dictionary-prop}
\end{mylemma}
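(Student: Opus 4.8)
The plan is to establish both claims \emph{simultaneously} by induction on $t$, with a single union bound over $t\in[T]$ absorbing the $\ln(2T/\delta)$ in $q$. The two inequalities are entangled because the inclusion probabilities $p_{t,i}$ are defined through the \emph{approximate} covariance $\tilde\Gamma_{t-1}$ rather than the true one, so the first ingredient I would isolate is a purely deterministic implication: whenever the spectral sandwich in the statement holds at a round $s$, it forces $\rho^{-1}\Gamma_s(x,x)\preceq\tilde\Gamma_s(x,x)\preceq\rho\,\Gamma_s(x,x)$ for every $x\in\cX$. In the feature-map language of the previous appendix this is clean: writing $V_s:=\Phi_{\cX_s}^{\top}\Phi_{\cX_s}$ and letting $P_s$ be the orthogonal projection of $\ell^2$ onto $\mathrm{range}(\Phi_{\cD_s}^{\top})$, one checks (as in \citet{calandriello2019gaussian}, now carried out blockwise) that $\tilde\Gamma_s(x,x)=\eta\,\Phi(x)^{\top}(P_sV_sP_s+\eta I)^{-1}\Phi(x)$ while $\Gamma_s(x,x)=\eta\,\Phi(x)^{\top}(V_s+\eta I)^{-1}\Phi(x)$ by \eqref{eqn:pred-var-dual}, so the $\rho$-sandwich on $\tilde\Gamma_s$ versus $\Gamma_s$ reduces to $\rho^{-1}(V_s+\eta I)\preceq P_sV_sP_s+\eta I\preceq\rho(V_s+\eta I)$, which follows from the spectral inequality at round $s$ together with Lemma \ref{lem:dim-change}. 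Granting this, on the event that the first claim holds at round $t-1$ we get $p_{t,i}=\min\{q\norm{\tilde\Gamma_{t-1}(x_i,x_i)},1\}\ge\min\{(q/\rho)\norm{\Gamma_{t-1}(x_i,x_i)},1\}$, i.e. the probabilities dominate a fixed multiple of the true posterior-variance (ridge-leverage) scores --- precisely the input a leverage-score sampling bound needs.

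For the inductive step I would condition on the $\sigma$-algebra generated by everything preceding the $t$-th dictionary draw --- in particular the adaptively chosen points $x_1,\dots,x_t$ (hence $V_t$) and the numbers $p_{t,1},\dots,p_{t,t}$ --- so that $z_{t,i}\sim\mathrm{Bernoulli}(p_{t,i})$ are conditionally independent and $\Phi_{\cD_t}^{\top}\Phi_{\cD_t}=\sum_{i=1}^{t}(z_{t,i}/p_{t,i})\Phi(x_i)\Phi(x_i)^{\top}$ is a conditionally unbiased estimate of $V_t$. With $\overline V_t:=V_t+\eta I$, the first claim at round $t$ is exactly $\norm{\overline V_t^{-1/2}(\Phi_{\cD_t}^{\top}\Phi_{\cD_t}-V_t)\overline V_t^{-1/2}}\le\epsilon$; writing the left side as $\norm{\sum_{i=1}^{t}X_i}$ with $X_i=(z_{t,i}/p_{t,i}-1)\,\overline V_t^{-1/2}\Phi(x_i)\Phi(x_i)^{\top}\overline V_t^{-1/2}$, the summands are conditionally independent, self-adjoint, mean-zero, and --- crucially --- all supported on $\mathrm{range}(V_t)$, a subspace of dimension at most $nt<\infty$, so the infinite dimensionality of $\ell^2$ is a non-issue and a finite-dimensional matrix Bernstein (or intrinsic-dimension) inequality applies. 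Using $\norm{\overline V_t^{-1/2}\Phi(x_i)\Phi(x_i)^{\top}\overline V_t^{-1/2}}=\eta^{-1}\norm{\Gamma_t(x_i,x_i)}\le\eta^{-1}\norm{\Gamma_{t-1}(x_i,x_i)}$ (monotonicity of the posterior variance, Lemma \ref{lem:pred-var-inequalities}) together with $p_{t,i}\ge(q/\rho)\norm{\Gamma_{t-1}(x_i,x_i)}$ when $p_{t,i}<1$, both $\max_i\norm{X_i}$ and the conditional variance proxy $\norm{\sum_i\expect{X_i^2}}$ (bounded via $\sum_i\expect{X_i^2}\preceq O(\rho/q)\,\overline V_t^{-1/2}V_t\overline V_t^{-1/2}\preceq O(\rho/q)\,I$) come out of order $\rho/q$, modulo a factor depending only on $\eta$ that is handled as in \citet{calandriello2019gaussian}; plugging $q=6\rho\ln(2T/\delta)/\epsilon^2$ into matrix Bernstein then yields the bound with conditional failure probability $\le\delta/(2T)$, and a union bound over $t$ closes the induction.

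For the dictionary-size bound, condition on the same $\sigma$-algebra: $m_t=\sum_{i=1}^{t}z_{t,i}$ has conditional mean $\sum_{i=1}^{t}p_{t,i}\le q\sum_{i=1}^{t}\norm{\tilde\Gamma_{t-1}(x_i,x_i)}\le q\rho\sum_{i=1}^{t}\norm{\Gamma_{t-1}(x_i,x_i)}$ by the sandwich just proved, and $\norm{\Gamma_{t-1}(x_i,x_i)}\le\norm{\Gamma_{i-1}(x_i,x_i)}\le(1+\kappa/\eta)\norm{\Gamma_i(x_i,x_i)}$ by Lemma \ref{lem:pred-var-inequalities}, so the conditional mean is at most $q\rho(1+\kappa/\eta)\sum_{s=1}^{t}\norm{\Gamma_s(x_s,x_s)}$. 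A multiplicative Chernoff bound for this sum of conditionally independent Bernoullis, with the concentration slack absorbed into the constant $6$, plus a union bound over $t$, then gives $m_t\le 6\rho q(1+\kappa/\eta)\sum_{s=1}^{t}\norm{\Gamma_s(x_s,x_s)}$ with the remaining probability. The base case is immediate since $\tilde\Gamma_0=\Gamma_0=\Gamma$ and $\rho\ge1$.

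The step I expect to be the main obstacle is making the adaptivity airtight: the points $x_1,\dots,x_t$ and the probabilities $p_{t,i}$ depend on all earlier (random) dictionary draws, and the event on which those probabilities are ``good'' is itself defined through the first claim holding at every earlier round, so naive round-by-round conditioning is circular. The clean remedy, following \citet{calandriello2019gaussian}, is to run the whole argument as a single Freedman-type martingale inequality over the stream rather than $T$ independent matrix Bernstein applications, and this is where the genuine technical care lies. A secondary nuisance is the bookkeeping of the regularizer $\eta$ in the leverage-score comparison, and the passage between the operator norm $\norm{\cdot}$ used to define $p_{t,i}$ and the $n\times n$ block structure of the summands in the matrix concentration step.
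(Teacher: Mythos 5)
Your proposal is correct and follows essentially the same route as the paper: the paper's own proof consists only of observing that $\Phi_{\cD_t}^{\top}\Phi_{\cD_t}=\Phi_{\cX_t}^{\top}S_t^{\top}S_t\Phi_{\cX_t}$ for a block-diagonal selection matrix $S_t$ and then deferring entirely to \citet[Theorem 1]{calandriello2019gaussian}, whose argument (induction coupling the spectral sandwich to the $\rho$-accuracy of $\tilde\Gamma_{t-1}$, a Freedman-type martingale matrix concentration to handle the adaptive sampling, and a Chernoff bound for $m_t$) is exactly what you sketch, carried out blockwise. You correctly identify the adaptivity/circularity issue as the genuine technical crux, which is precisely the part the paper outsources.
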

\begin{proof}
Let $S_t$ be an $nt$-by-$nt$ block diagonal matrix with $i$-th diagonal block $[S_{t}]_i=\frac{1}{\sqrt{p_{t,i}}}I_n$ if $x_i \in \cD_t$, and $[S_{t}]_i=0$ if $x_i \notin \cD_t$, $1 \leq i \leq t$. We then have $\Phi_{\cD_t}^{\top}\Phi_{\cD_t}= \Phi_{\cX_t}^{\top}S_t^{\top}S_t\Phi_{\cX_t}$. The proof now can be completed by following \citet[Theorem 1]{calandriello2019gaussian}.
\end{proof}

\begin{remark} Note that although tuning the approximation trade-off parameter $q$ requires the knowledge of the time horizon $T$ in advance, Lemma \ref{lem:dictionary-prop} is quite robust to the uncertainty on $T$. If the horizon is not known, then after the $T$-th step, one can increase $q$ according to the new desired horizon, and update the dictionary with this new value of $q$. Combining this with a standard doubling trick preserve the approximation properties \citep{calandriello2019gaussian}.
\end{remark}

\paragraph{Approximating the confidence set}
We now focus on the dictionary $\cD_t$ chosen by MT-BKB at each step and discuss a principled approach to compute the approximations $\tilde{\mu}_t(x)$ and $\tilde{\Gamma}_t(x,x)$.
To this end, we let 
\beq
P_t=\Phi_{\cD_t}^{\top}\left(\Phi_{\cD_t}\Phi_{\cD_t}^{\top}\right)^{+}\Phi_{\cD_t}
\label{eqn:proj-op}
\eeq
denote the symmetric orthogonal projection operator on the subspace of $\cL\left(\Real^n,\ell^2\right)$ that is spanned by $\Phi(x_{i_1}),\ldots,\Phi(x_{i_{m_t}})$. We also let $\hat{\Phi}_t(x)=P_t\Phi(x)$ denote the projection of $\Phi(x)$. We now define a map $\hat{\Phi}_{\cX_t} : \ell^2 \ra \Real^{nt}$ by 
\beqn
\hat{\Phi}_{\cX_t} \theta := \left[\left(\hat{\Phi}_t(x_1)^{\top}\theta\right)^{\top},\ldots,\left(\hat{\Phi}_t(x_t)^{\top}\theta\right)^{\top}\right]^{\top},\quad \forall \; \theta\in \ell^2.
\eeqn
We then have $\hat{\Phi}_{\cX_t}=\Phi_{\cX_t}P_t$ and $\hat{\Phi}_{\cX_t}\hat{\Phi}_{\cX_t}^{\top}=\Phi_{\cX_t}P_t\Phi_{\cX_t}^{\top}$.


\begin{mylemma}[Approximation as given by projection]
Let $\hat{V}_t := \hat{\Phi}_{\cX_t}^{\top}\hat{\Phi}_{\cX_t}$. Then, for any $\eta >0$ and $t\geq 1$, the following holds:
\beqan
\tilde{\mu}_t(x)&=&\Phi(x)^{\top}\left(\hat{V}_t+\eta I\right)^{-1}\sum_{s=1}^{t}\hat{\Phi}_t(x_s)y_s~,\\ \tilde{\Gamma}_t(x,x)&=&\eta \Phi(x)^{\top}\left(\hat{V}_t+\eta I\right)^{-1}\Phi(x)~. 
\eeqan
\label{lem:projection}
\end{mylemma}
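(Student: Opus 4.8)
The strategy is to realize both sides through a single ``whitening'' operator and then invoke a change-of-dimension identity in the spirit of Lemma~\ref{lem:dim-change}. With $\Phi_{\cD_t}$ as in (\ref{eqn:dictionary-map}), one has the feature-space identities $\tilde G_t(x)=\Phi_{\cD_t}\Phi(x)$ and $\tilde G_t=\Phi_{\cD_t}\Phi_{\cD_t}^{\top}$, so that $\tilde\Phi_t(x)=(\tilde G_t^{1/2})^{+}\Phi_{\cD_t}\Phi(x)$. First I would introduce $W:=\Phi_{\cD_t}^{\top}(\tilde G_t^{1/2})^{+}\in\cL(\Real^{nm_t},\ell^2)$ and let $\Pi_t$ be the orthogonal projection onto $\mathrm{range}(\tilde G_t)$. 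Using $((\tilde G_t^{1/2})^{+})^{2}=\tilde G_t^{+}$ together with $(\tilde G_t^{1/2})^{+}\tilde G_t^{1/2}=\tilde G_t^{1/2}(\tilde G_t^{1/2})^{+}=\Pi_t$, a short computation gives (i) $WW^{\top}=\Phi_{\cD_t}^{\top}\tilde G_t^{+}\Phi_{\cD_t}=P_t$, the projection of (\ref{eqn:proj-op}); (ii) $W^{\top}W=\Pi_t$, hence $W=P_tW=W\Pi_t$; and (iii) $\hat\Phi_t(x)=P_t\Phi(x)=W\tilde\Phi_t(x)$ for every $x\in\cX$. Summing (iii) over the observed points yields $\hat V_t=\hat\Phi_{\cX_t}^{\top}\hat\Phi_{\cX_t}=W\tilde V_tW^{\top}$ and $\sum_{s\le t}\hat\Phi_t(x_s)y_s=W\tilde z_t$, where $\tilde z_t:=\sum_{s\le t}\tilde\Phi_t(x_s)y_s$.

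The second ingredient is a change-of-dimension identity. Every $\tilde\Phi_t(x)$ and $\tilde z_t$ lie in $\mathrm{range}(\tilde G_t)=\mathrm{range}(\Pi_t)$, and $\tilde V_t\Pi_t=\Pi_t\tilde V_t=\tilde V_t$, while $(\tilde V_t+\eta I_{nm_t})^{-1}$ maps $\mathrm{range}(\Pi_t)$ into itself. I would then check that for every $v\in\mathrm{range}(\Pi_t)$,
\beqn
W^{\top}\bigl(W\tilde V_tW^{\top}+\eta I\bigr)^{-1}Wv=\bigl(\tilde V_t+\eta I_{nm_t}\bigr)^{-1}v,
\eeqn
by verifying that applying $W\tilde V_tW^{\top}+\eta I$ to $W(\tilde V_t+\eta I_{nm_t})^{-1}v$ returns $Wv$ (this needs only $W^{\top}W=\Pi_t$, $\tilde V_t\Pi_t=\tilde V_t$, and $(\tilde V_t+\eta I_{nm_t})^{-1}v\in\mathrm{range}(\Pi_t)$), and then premultiplying by $W^{\top}$ and using $W^{\top}W=\Pi_t$ once more.

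To assemble, for the mean I would write $\Phi(x)^{\top}(\hat V_t+\eta I)^{-1}W\tilde z_t$; since $W\tilde z_t\in\mathrm{range}(P_t)$ and $\hat V_t=W\tilde V_tW^{\top}$ has range in $\mathrm{range}(P_t)$ (because $W=P_tW$), the vector $(\hat V_t+\eta I)^{-1}W\tilde z_t$ stays in $\mathrm{range}(P_t)$, so $\Phi(x)^{\top}$ may be replaced by $(P_t\Phi(x))^{\top}=(W\tilde\Phi_t(x))^{\top}$; the identity with $v=\tilde z_t$ then gives exactly $\tilde\Phi_t(x)^{\top}(\tilde V_t+\eta I_{nm_t})^{-1}\tilde z_t=\tilde\mu_t(x)$. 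For the variance I would decompose $\Phi(x)=P_t\Phi(x)+(I-P_t)\Phi(x)$; since $\hat V_t$ annihilates $\mathrm{range}(I-P_t)$ and preserves $\mathrm{range}(P_t)$, the cross terms drop and $(\hat V_t+\eta I)^{-1}$ acts as $\tfrac1\eta I$ on $\mathrm{range}(I-P_t)$, so
\beqn
\eta\,\Phi(x)^{\top}(\hat V_t+\eta I)^{-1}\Phi(x)=\eta\,\hat\Phi_t(x)^{\top}(\hat V_t+\eta I)^{-1}\hat\Phi_t(x)+\Phi(x)^{\top}(I-P_t)\Phi(x).
\eeqn
The first term equals $\eta\,\tilde\Phi_t(x)^{\top}(\tilde V_t+\eta I_{nm_t})^{-1}\tilde\Phi_t(x)$ by the identity with $v=\tilde\Phi_t(x)$, while $\Phi(x)^{\top}(I-P_t)\Phi(x)=\Gamma(x,x)-\hat\Phi_t(x)^{\top}\hat\Phi_t(x)=\Gamma(x,x)-\tilde\Phi_t(x)^{\top}\tilde\Phi_t(x)$, the last equality because $\hat\Phi_t(x)^{\top}\hat\Phi_t(x)=\tilde\Phi_t(x)^{\top}W^{\top}W\tilde\Phi_t(x)=\tilde\Phi_t(x)^{\top}\Pi_t\tilde\Phi_t(x)=\tilde\Phi_t(x)^{\top}\tilde\Phi_t(x)$. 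Adding the two pieces reproduces the definition of $\tilde\Gamma_t(x,x)$.

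The main obstacle is the pseudo-inverse and range bookkeeping in the (possibly infinite-dimensional) space $\ell^2$: one must verify carefully how $(\tilde G_t^{1/2})^{+}$, $\tilde G_t^{+}$ and $P_t$ compose, and — crucially — that $\tilde V_t$, $\tilde\Phi_t(x)$ and $\tilde z_t$ all lie inside $\mathrm{range}(\Pi_t)$, which is precisely what makes the change-of-dimension identity hold with no residual projection term. The operator inversions themselves pose no difficulty, since $\hat V_t\succeq0$ forces $\hat V_t+\eta I\succeq\eta I$.
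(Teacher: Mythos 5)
Your proposal is correct, and it reaches the result by a genuinely different route than the paper. The paper's proof never introduces your whitening operator $W$: it first establishes the product identities $\tilde{\Phi}_t(x)^{\top}\tilde{\Phi}_t(x')=\Phi(x)^{\top}P_t\Phi(x')$, $\tilde{\Phi}_{\cX_t}\tilde{\Phi}_t(x)=\hat{\Phi}_{\cX_t}\Phi(x)$ and $\tilde{\Phi}_{\cX_t}\tilde{\Phi}_{\cX_t}^{\top}=\hat{\Phi}_{\cX_t}\hat{\Phi}_{\cX_t}^{\top}$, and then applies the push-through identity of Lemma~\ref{lem:dim-change} twice — once to move each expression from its ``primal'' form (inverse of an $nm_t\times nm_t$ or $\ell^2$ operator) to the common $nt\times nt$ ``dual'' form $(\hat{\Phi}_{\cX_t}\hat{\Phi}_{\cX_t}^{\top}+\eta I_{nt})^{-1}=(\tilde{\Phi}_{\cX_t}\tilde{\Phi}_{\cX_t}^{\top}+\eta I_{nt})^{-1}$, where the two sides visibly coincide, and once to move back. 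This sidesteps all pseudo-inverse and range bookkeeping: only the three product identities and Lemma~\ref{lem:dim-change} are needed. Your route instead exhibits $W=\Phi_{\cD_t}^{\top}(\tilde{G}_t^{1/2})^{+}$ as a partial isometry ($WW^{\top}=P_t$, $W^{\top}W=\Pi_t$) identifying the Nystr\"{o}m coordinate space with the dictionary subspace, proves the resolvent-intertwining identity $W^{\top}(W\tilde{V}_tW^{\top}+\eta I)^{-1}W=(\tilde{V}_t+\eta I_{nm_t})^{-1}$ on $\mathrm{range}(\Pi_t)$, and assembles the claims via the block-diagonal action of $\hat{V}_t+\eta I$ with respect to $\mathrm{range}(P_t)\oplus\mathrm{range}(I-P_t)$. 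Every step you flag checks out — in particular $\tilde{\Phi}_t(x)$, $\tilde{z}_t$ and $\tilde{V}_t$ do live in $\mathrm{range}(\Pi_t)$, and $(\hat{V}_t+\eta I)^{-1}$ does commute with $P_t$ because $\hat{V}_tP_t=P_t\hat{V}_t=\hat{V}_t$. What your approach buys is an explicit geometric picture (the approximation is exactly the restriction of the problem to the dictionary subspace, carried by an isometry), and your intertwining identity is reusable wherever one must pass between the $nm_t$-dimensional and feature-space resolvents; what it costs is the careful pseudo-inverse and range verification that the paper's dual-form argument avoids entirely.
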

\begin{proof}
We first note that
\beqan
\tilde{\Phi}_t(x)^{\top}\tilde{\Phi}_t(x')&=&\tilde{G}_t(x)^{\top}\tilde{G}_t^{+}\tilde{G}_t(x')=\Phi(x)^{\top}P_t\Phi(x').
\eeqan
We now define an $nt \times nm_t$ matrix $\tilde{\Phi}_{\cX_t}=\left[\tilde{\Phi}_t(x_1),\ldots,\tilde{\Phi}_t(x_t)\right]^{\top}$. We then have
\beq
\tilde{\Phi}_{\cX_t}\tilde{\Phi}_t(x)=\Phi_{\cX_t}P_t\Phi(x)=\hat{\Phi}_{\cX_t}\Phi(x),\quad \tilde{\Phi}_{\cX_t}\tilde{\Phi}_{\cX_t}^{\top}=\Phi_{\cX_t}P_t\Phi_{\cX_t}^{\top}=\hat{\Phi}_{\cX_t}\hat{\Phi}_{\cX_t}^{\top},
\label{eqn:projection}
\eeq
where $P_t$ is the projection operator as defined in (\ref{eqn:proj-op}).
We also have $\tilde{V}_t:=\sum_{s=1}^{t}\tilde{\Phi}_t(x_s)\tilde{\Phi}_t(x_s)^{\top}=\tilde{\Phi}_{\cX_t}^{\top}\tilde{\Phi}_{\cX_t}$. Therefore
\begin{align*}
  \tilde{\mu}_t(x)&=\tilde{\Phi}_t(x)^{\top}(\tilde{\Phi}_{\cX_t}^{\top}\tilde{\Phi}_{\cX_t}+\eta I_{nm_t})^{-1}\sum_{s=1}^{t}\tilde\Phi_t(x_s)y_s\\
  &=\tilde{\Phi}_t(x)^{\top}(\tilde{\Phi}_{\cX_t}^{\top}\tilde{\Phi}_{\cX_t}+\eta I_{nm_t})^{-1}\tilde{\Phi}_{\cX_t}^{\top}Y_t\\
  &=\tilde{\Phi}_t(x)^{\top}\tilde{\Phi}_{\cX_t}^{\top}(\tilde{\Phi}_{\cX_t}\tilde{\Phi}_{\cX_t}^{\top}+\eta I_{nt})^{-1}Y_t\\
  &=\Phi(x)^{\top}\hat{\Phi}_{\cX_t}^{\top}(\hat{\Phi}_{\cX_t} \hat{\Phi}_{\cX_t}^{\top}+\eta I_{nt})^{-1}Y_t\\
  &=\Phi(x)^{\top}(\hat{\Phi}_{\cX_t}^{\top}\hat{\Phi}_{\cX_t}+\eta I)^{-1}\hat{\Phi}_{\cX_t}^{\top}Y_t=\Phi(x)^{\top}(\hat{V}_t+\eta I)^{-1}\sum_{s=1}^{t}\hat\Phi_t(x_s)y_s\;,
\end{align*}
where in third and fifth step, we have used Lemma \ref{lem:dim-change}, and in fourth step, we have used (\ref{eqn:projection}). Further
\begin{align*}
 \tilde{\Gamma}_t(x,x)&=\Gamma(x,x)-\tilde{\Phi}_t(x)^{\top}\tilde{\Phi}_t(x)+\eta \tilde{\Phi}_t(x)^{\top} (\tilde{\Phi}_{\cX_t}^{\top}\tilde{\Phi}_{\cX_t}+\eta I_{nm_t})^{-1}\tilde{\Phi}_t(x)\\
 &=\Gamma(x,x)-\tilde{\Phi}_t(x)^{\top}\left(I_{nm_t}-\eta (\tilde{\Phi}_{\cX_t}^{\top}\tilde{\Phi}_{\cX_t}+\eta I_{nm_t})^{-1}\right)\tilde{\Phi}_t(x)\\
 &=\Gamma(x,x)-\tilde{\Phi}_t(x)^{\top}\tilde{\Phi}_{\cX_t}^{\top} (\tilde{\Phi}_{\cX_t}\tilde{\Phi}_{\cX_t}^{\top}+\eta I_{nt})^{-1}\tilde{\Phi}_{\cX_t}\tilde{\Phi}_t(x)\\
 &=\Phi(x)^{\top}\Phi(x)-\Phi(x)^{\top}\hat{\Phi}_{\cX_t}^{\top} (\hat{\Phi}_{\cX_t}\hat{\Phi}_{\cX_t}^{\top}+\eta I_{nt})^{-1}\hat{\Phi}_{\cX_t}\Phi(x)\\
 &=\Phi(x)^{\top}\left(I-\hat{\Phi}_{\cX_t}^{\top} (\hat{\Phi}_{\cX_t}\hat{\Phi}_{\cX_t}^{\top}+\eta I_{nt})^{-1}\hat{\Phi}_{\cX_t}\right)\Phi(x)\\
 &=\eta \Phi(x)^{\top}(\hat{\Phi}_{\cX_t}^{\top}\hat{\Phi}_{\cX_t}+\eta I)^{-1}\Phi(x)=\eta \Phi(x)^{\top}(\hat{V}_t+\eta I)^{-1}\Phi(x),
\end{align*}
where in third and sixth step, we have used Lemma \ref{lem:dim-change}, and in fourth step, we have used (\ref{eqn:projection}).
\end{proof}

\begin{mylemma}[Multi-task concentration under Nystr\"{o}m approximation]
	Let $f \in \cH_\Gamma(\cX)$ and the noise vectors $\lbrace\epsilon_t\rbrace_{t \geq 1}$ be $\sigma$-sub-Gaussian. Further, for any $\eta >0$, $\epsilon \in (0,1)$ and $t \geq 1$, let $(1-\epsilon)\Phi_{\cX_t}^{\top}\Phi_{\cX_t}-\epsilon\eta I \preceq  \Phi_{\cD_t}^{\top}\Phi_{\cD_t}\preceq (1+\epsilon)\Phi_{\cX_t}^{\top}\Phi_{\cX_t}+\epsilon\eta I$. Then, for any $\delta \in (0,1]$, with probability at least $1-\delta$, the following holds uniformly over all $x \in \cX$ and $t \geq 1$:
	\beqn
		\norm{f(x)-\tilde{\mu}_{t}(x)}_2 \leq \left(c_{\epsilon}\norm{f}_\Gamma+\frac{\sigma}{\sqrt{\eta}}\sqrt{2\log(1/\delta)+\log\det(I_{nt}+\eta^{-1}G_t)}\right)\norm{\tilde{\Gamma}_{t}(x,x)}^{1/2},
	\eeqn
	where $c_{\epsilon}=1+\frac{1}{\sqrt{1-\epsilon}}$.
	\label{lem:concentration-kernel-approx}
\end{mylemma}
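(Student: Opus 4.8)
The plan is to run the argument used to prove Theorem~\ref{thm:concentration}, with the exact estimates $\mu_t,\Gamma_t$ and covariance operator $V_t$ replaced throughout by the Nystr\"{o}m objects produced by Lemma~\ref{lem:projection}. Recall from Lemma~\ref{lem:projection} that $\tilde\mu_t(x)=\Phi(x)^\top(\hat V_t+\eta I)^{-1}\sum_{s=1}^t\hat\Phi_t(x_s)y_s$ and $\tilde\Gamma_t(x,x)=\eta\,\Phi(x)^\top(\hat V_t+\eta I)^{-1}\Phi(x)$, where $\hat\Phi_t(x)=P_t\Phi(x)$, $\hat\Phi_{\cX_t}=\Phi_{\cX_t}P_t$, $\hat V_t=\hat\Phi_{\cX_t}^\top\hat\Phi_{\cX_t}=P_tV_tP_t$, and $P_t$ is the orthogonal projection~(\ref{eqn:proj-op}) onto $R(\Phi_{\cD_t}^\top)$. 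First I would substitute $y_s=\Phi(x_s)^\top\theta^\star+\epsilon_s$ (with $\norm{\theta^\star}_2=\norm{f}_\Gamma$), use $\sum_s\hat\Phi_t(x_s)\Phi(x_s)^\top=\hat\Phi_{\cX_t}^\top\Phi_{\cX_t}=P_tV_t$ together with the split $P_tV_t=\hat V_t+P_tV_t(I-P_t)$, and absorb $\hat V_t\theta^\star=(\hat V_t+\eta I)\theta^\star-\eta\theta^\star$, to arrive at
\beqn
f(x)-\tilde\mu_t(x)=-\,\Phi(x)^\top(\hat V_t+\eta I)^{-1}\big(P_tS_t+P_tV_t(I-P_t)\theta^\star-\eta\theta^\star\big),
\eeqn
where $S_t=\sum_{s=1}^t\Phi(x_s)\epsilon_s$ is exactly the $\ell^2$-martingale of Lemma~\ref{lem:martingale-control}. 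Factoring out $\norm{(\hat V_t+\eta I)^{-1/2}\Phi(x)}_{\op}=\eta^{-1/2}\norm{\tilde\Gamma_t(x,x)}^{1/2}$ (Lemma~\ref{lem:projection}), it then suffices to bound the $(\hat V_t+\eta I)^{-1/2}$-norm of the three bracketed pieces.

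The regularization piece is immediate, exactly as in Theorem~\ref{thm:concentration}: $\eta\norm{(\hat V_t+\eta I)^{-1/2}\theta^\star}_2\le\sqrt{\eta}\,\norm{f}_\Gamma$, producing the leading $\norm{f}_\Gamma\norm{\tilde\Gamma_t(x,x)}^{1/2}$. For the projection-error piece I would first convert the spectral hypothesis into a bound on the mass of $V_t$ orthogonal to the dictionary: since $P_t$ projects onto $R(\Phi_{\cD_t}^\top)$, every $v\in R(I-P_t)$ satisfies $\Phi_{\cD_t}v=0$, so $0=v^\top\Phi_{\cD_t}^\top\Phi_{\cD_t}v\ge(1-\epsilon)v^\top V_tv-\epsilon\eta\norm{v}^2$, i.e.\ $(I-P_t)V_t(I-P_t)\preceq\tfrac{\epsilon\eta}{1-\epsilon}I$ (cf.\ Lemma~\ref{lem:dictionary-prop}). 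Writing $P_tV_t(I-P_t)=(P_tV_t^{1/2})(V_t^{1/2}(I-P_t))$ and noting $(\hat V_t+\eta I)^{-1/2}(P_tV_t^{1/2})(P_tV_t^{1/2})^\top(\hat V_t+\eta I)^{-1/2}=(\hat V_t+\eta I)^{-1/2}\hat V_t(\hat V_t+\eta I)^{-1/2}\preceq I$, this piece is at most $\sqrt{\tfrac{\epsilon\eta}{1-\epsilon}}\,\norm{f}_\Gamma$, contributing $\sqrt{\tfrac{\epsilon}{1-\epsilon}}\norm{f}_\Gamma\norm{\tilde\Gamma_t(x,x)}^{1/2}\le\tfrac{1}{\sqrt{1-\epsilon}}\norm{f}_\Gamma\norm{\tilde\Gamma_t(x,x)}^{1/2}$. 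Adding this to the regularization term produces the $c_\epsilon\norm{f}_\Gamma$ contribution with $c_\epsilon=1+1/\sqrt{1-\epsilon}$.

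For the noise piece the target is $\norm{(\hat V_t+\eta I)^{-1/2}P_tS_t}_2^2=S_t^\top P_t(\hat V_t+\eta I)^{-1}P_tS_t\le S_t^\top(V_t+\eta I)^{-1}S_t=\norm{S_t}^2_{(V_t+\eta I)^{-1}}$, because then Lemma~\ref{lem:martingale-control} together with Sylvester's identity (\ref{eqn:det-dual}) bounds this, with probability at least $1-\delta$ and uniformly in $t$, by $\sigma^2\big(2\log(1/\delta)+\log\det(I_{nt}+\eta^{-1}G_t)\big)$. The key deterministic ingredient, which I expect to be the main obstacle, is the operator inequality $P_t(\hat V_t+\eta I)^{-1}P_t\preceq(V_t+\eta I)^{-1}$, valid for \emph{any} orthogonal projection $P_t$ (with $\hat V_t:=P_tV_tP_t$): in the block decomposition $\ell^2=R(P_t)\oplus R(P_t)^\perp$ one has $\hat V_t=\Dg(A,0)$ and $V_t=\left(\begin{smallmatrix}A&C\\C^\top&D\end{smallmatrix}\right)$, and a short Schur-complement computation shows $(V_t+\eta I)^{-1}-\Dg((A+\eta I)^{-1},0)\succeq0$ (its generalized Schur complement relative to the $(2,2)$-block vanishes identically, while that block is positive definite). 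Notably this means the Nystr\"{o}m confidence width loses \emph{nothing} on the noise term, the entire price of sparsification being the $c_\epsilon$ factor multiplying $\norm{f}_\Gamma$. Combining the three bounds with $\norm{(\hat V_t+\eta I)^{-1/2}\Phi(x)}_{\op}=\eta^{-1/2}\norm{\tilde\Gamma_t(x,x)}^{1/2}$ then yields the stated inequality: the $x$-uniformity is automatic (the bound is deterministic in $\Phi(x)$) and the $t$-uniformity is inherited from Lemma~\ref{lem:martingale-control}. The only remaining care, as in Lemma~\ref{lem:martingale-control}, is to check that all operator-norm manipulations are legitimate in the possibly infinite-dimensional $\ell^2$, which is handled by the same restriction-to-$d$-coordinates device.
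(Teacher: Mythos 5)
Your proposal is correct and shares the paper's overall architecture: decompose $f(x)-\tilde{\mu}_t(x)$ into a regularization piece, a projection-error piece and a noise piece, bound each in the $(\hat{V}_t+\eta I)^{-1}$-weighted geometry, reduce the noise piece to $\norm{S_t}_{(V_t+\eta I)^{-1}}$, and invoke Lemma \ref{lem:martingale-control} with Sylvester's identity. (The paper packages the first two pieces by introducing the noise-free estimate $\tilde{\alpha}_t(x)=\Phi(x)^{\top}(\hat{V}_t+\eta I)^{-1}\sum_s\hat{\Phi}_t(x_s)f(x_s)$ and applying the triangle inequality, which is algebraically the same as your one-shot identity.) You diverge in two sub-steps, both valid. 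For the projection error, you use $(I-P_t)V_t(I-P_t)\preceq\frac{\epsilon\eta}{1-\epsilon}I$, obtained from $\Phi_{\cD_t}v=0$ on $R(I-P_t)$, paired with $\norm{(\hat{V}_t+\eta I)^{-1/2}P_tV_t^{1/2}}\leq 1$; the paper instead uses $I-P_t\preceq\eta(\Phi_{\cD_t}^{\top}\Phi_{\cD_t}+\eta I)^{-1}\preceq\frac{\eta}{1-\epsilon}(V_t+\eta I)^{-1}$ paired with $\norm{(\hat{V}_t+\eta I)^{-1/2}\hat{\Phi}_{\cX_t}^{\top}}\leq 1$. Your route even gives the marginally sharper constant $\sqrt{\epsilon/(1-\epsilon)}$ in place of $1/\sqrt{1-\epsilon}$, though both land on $c_\epsilon$. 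For the noise term, your deterministic inequality $P_t(\hat{V}_t+\eta I)^{-1}P_t\preceq(V_t+\eta I)^{-1}$ is correct and your Schur-complement justification checks out (the key identity being that the Schur complement of $(V_t+\eta I)^{-1}$ relative to its block on $R(P_t)^{\perp}$ equals $(A+\eta I)^{-1}$ exactly); the paper reaches the same scalar bound $S_t^{\top}P_t(\hat{V}_t+\eta I)^{-1}P_tS_t\leq\norm{S_t}^2_{(V_t+\eta I)^{-1}}$ more economically by passing to the dual form $E_t^{\top}\big(I_{nt}-\eta(\hat{\Phi}_{\cX_t}\hat{\Phi}_{\cX_t}^{\top}+\eta I_{nt})^{-1}\big)E_t$ via Lemma \ref{lem:dim-change} and using $\hat{\Phi}_{\cX_t}\hat{\Phi}_{\cX_t}^{\top}=\Phi_{\cX_t}P_t\Phi_{\cX_t}^{\top}\preceq\Phi_{\cX_t}\Phi_{\cX_t}^{\top}$, which avoids any block computation and also sidesteps the infinite-dimensional care you flag at the end. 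Your version of that inequality is the more general-purpose statement, but both observations establish the same fact that sparsification costs nothing on the noise term.
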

\begin{proof}
Let us first define $\tilde{\alpha}_t(x):=\Phi(x)^{\top}\left(\hat{V}_t+\eta I\right)^{-1}\sum_{s=1}^{t}\hat{\Phi}_t(x_s)f(x_s)$, where $\hat{V}_t = \hat{\Phi}_{\cX_t}^{\top}\hat{\Phi}_{\cX_t}$. We now note that $f(x)=\Phi(x)^{\top}{\theta^\star}$ and $\tilde{\alpha}_t(x)=\Phi(x)^{\top}\left(\hat{V}_t+\eta I\right)^{-1}\hat{\Phi}_{\cX_t}^{\top}\Phi_{\cX_t}\theta^\star$ for some $\theta^\star \in \ell^2$, so that $\norm{f}_\Gamma=\norm{\theta^\star}_2$. We then have
\beqan
\norm{f(x)-\tilde{\alpha}_t(x)}_2 &=& \norm{\Phi(x)^{\top}\left(\theta^\star-\left(\hat{V}_t+\eta I\right)^{-1}\hat{\Phi}_{\cX_t}^{\top}\Phi_{\cX_t}\theta^\star\right)}_2\\
&\leq & \norm{\Phi(x)^{\top}\left(\hat{V}_t+\eta I\right)^{-1/2}}\norm{\theta^\star-\left(\hat{V}_t+\eta I\right)^{-1}\hat{\Phi}_{\cX_t}^{\top}\Phi_{\cX_t}\theta^\star}_{(\hat{V}_t+\eta I)}\\
&=&\norm{\Phi(x)^{\top}\left(\hat{V}_t+\eta I\right)^{-1}\Phi(x)}^{1/2}\norm{\left(\hat{V}_t+\eta I\right)\theta^\star-\hat{\Phi}_{\cX_t}^{\top}\Phi_{\cX_t}\theta^\star}_{\left(\hat{V}_t+\eta I\right)^{-1}}\\
&=&\eta^{-1/2}\norm{\tilde{\Gamma}_t(x,x)}^{1/2}\norm{\eta \theta^\star-\hat{\Phi}_{\cX_t}^{\top}\left(\Phi_{\cX_t}-\hat{\Phi}_{\cX_t}\right)\theta^\star}_{\left(\hat{V}_t+\eta I\right)^{-1}}\\
&\leq& \eta^{-1/2}\norm{\tilde{\Gamma}_t(x,x)}^{1/2}\left(\eta\norm{\theta^\star}_{\left(\hat{V}_t+\eta I\right)^{-1}}+\norm{\hat{\Phi}_{\cX_t}^{\top}\Phi_{\cX_t}\left(I-P_t\right)\theta^\star}_{\left(\hat{V}_t+\eta I\right)^{-1}}\right)\\
&\leq& \left(\norm{\theta^\star}_2+\eta^{-1/2}\norm{\left(\hat{V}_t+\eta I\right)^{-1/2}\hat{\Phi}_{\cX_t}^{\top}\Phi_{\cX_t}\left(I-P_t\right)\theta^\star}_2\right)\norm{\tilde{\Gamma}_t(x,x)}^{1/2}.
\eeqan
Here in the fourth step, we have used Lemma \ref{lem:projection} and in the second last step, we have used $\hat{\Phi}_{\cX_t}=\Phi_{\cX_t}P_t$, where $P_t$ is the projection operator as defined in (\ref{eqn:proj-op}).
The last step is controlled as $\norm{\theta^\star}_{(\hat{V}_t+\eta I)^{-1}} \leq \eta^{-1/2} \norm{\theta^\star}_2$.
We now have
\beqan
\norm{\left(\hat{V}_t+\eta I\right)^{-1/2}\hat{\Phi}_{\cX_t}^{\top}\Phi_{\cX_t}\left(I-P_t\right)\theta^\star}_2 &\leq& \norm{\left(\hat{V}_t+\eta I\right)^{-1/2}\hat{\Phi}_{\cX_t}^{\top}} \norm{\Phi_{\cX_t}\left(I-P_t\right)}\norm{\theta^\star}_2\\
&\leq& \norm{\Phi_{\cX_t}(I-P_t)\Phi_{\cX_t}^{\top}}^{1/2}\norm{\theta^\star}_2,
\eeqan
where we have used that $\norm{({\hat{V}_t+\eta I)^{-1/2}}\hat{\Phi}_{\cX_t}^{\top}} = \norm{\hat{\Phi}_{\cX_t}(\hat{\Phi}_{\cX_t}^{\top}\hat{\Phi}_{\cX_t}+\eta I)^{-1}\hat{\Phi}_{\cX_t}^{\top}}^{1/2} \leq 1$ and $(I-P_t)^2=I-P_t$.
We now observe from Lemma \ref{lem:dim-change} and our hypothesis $(1-\epsilon)\Phi_{\cX_t}^{\top}\Phi_{\cX_t}-\epsilon\eta I \preceq  \Phi_{\cD_t}^{\top}\Phi_{\cD_t}\preceq (1+\epsilon)\Phi_{\cX_t}^{\top}\Phi_{\cX_t}+\epsilon\eta I$ that
\beqn
I-P_t \preceq I - \Phi_{\cD_t}^{\top} (\Phi_{\cD_t}\Phi_{\cD_t}^{\top}+\eta I_{nm_t})^{-1}\Phi_{\cD_t}=\eta (\Phi_{\cD_t}^{\top} \Phi_{\cD_t}+\eta I)^{-1}
\preceq \frac{\eta}{1-\epsilon} (\Phi_{\cX_t}^{\top}\Phi_{\cX_t}+\eta I)^{-1},
\eeqn
and therefore,
$\norm{\Phi_{\cX_t}(I-P_t)\Phi_{\cX_t}^{\top}}^{1/2} \leq \sqrt{\frac{\eta}{1-\epsilon}}\norm{\Phi_{\cX_t}(\Phi_{\cX_t}^{\top}\Phi_{\cX_t}+\eta I)^{-1}\Phi_{\cX_t}^{\top}}^{1/2} \leq \sqrt{\frac{\eta}{1-\epsilon}}$.
Putting it all together, we now have
\beq
\norm{f(x)-\tilde{\alpha}_t(x)}_2 \leq \norm{\theta^\star}_2 \left(1+\frac{1}{\sqrt{1-\epsilon}}\right)\norm{\tilde{\Gamma}_t(x,x)}^{1/2}=c_{\epsilon}\norm{f}_\Gamma \norm{\tilde{\Gamma}_t(x,x)}^{1/2},
\label{eqn:combine-one}
\eeq
where we have used that $\norm{\theta^\star}_2=\norm{f}_\Gamma$ and $c_\epsilon=1+\frac{1}{\sqrt{1-\epsilon}}$.
We further obtain from Lemma \ref{lem:projection} that 
\beqan
\norm{\tilde{\mu}_t(x)-\tilde{\alpha}_t(x)}_2&=&\norm{\Phi(x)^{\top}\left(\hat{V}_t+\eta I\right)^{-1}\sum_{s=1}^{t}\hat{\Phi}_t(x_s)(y_s-f(x_s))}_2\\
&\leq&\norm{\Phi(x)^{\top}\left(\hat{V}_t+\eta I\right)^{-1/2}}\norm{\sum_{s=1}^{t}\hat{\Phi}_t(x_s)\epsilon_s}_{\left(\hat{V}_t+\eta I\right)^{-1}}\\
&=& \norm{\Phi(x)^{\top}\left(\hat{V}_t+\eta I\right)^{-1}\Phi(x)}^{1/2}\norm{\hat{\Phi}_{\cX_t}^{\top}E_t}_{\left(\hat{V}_t+\eta I\right)^{-1}}\\
&=& \eta^{-1/2}\norm{\tilde{\Gamma}_t(x,x)}^{1/2}\norm{\hat{\Phi}_{\cX_t}^{\top}E_t}_{\left(\hat{V}_t+\eta I\right)^{-1}},
\eeqan
where $E_t=\left[\epsilon_1^{\top},\ldots,\epsilon_t^{\top}\right]^{\top}$ denotes an $nt \times 1$ vector formed by concatenating the noise vectors $\epsilon_i,1\leq i \leq t$.
We now have
\beqan
\norm{\hat{\Phi}_{\cX_t}^{\top}E_t}_{\left(\hat{V}_t+\eta I\right)^{-1}}^2 &=& E_t^{\top}\hat{\Phi}_{\cX_t}\left(\hat{\Phi}_{\cX_t}^{\top}\hat{\Phi}_{\cX_t}+\eta I\right)^{-1}\hat{\Phi}_{\cX_t}^{\top}E_t\\
&=&E_t^{\top}\left(I_{nt}-\eta\left(\hat{\Phi}_{\cX_t}\hat{\Phi}_{\cX_t}^{\top}+\eta I_{nt}\right)^{-1}\right)E_t\\
&\leq &E_t^{\top}\left(I_{nt}-\eta\left(\Phi_{\cX_t}\Phi_{\cX_t}^{\top}+\eta I_{nt}\right)^{-1}\right)E_t\\
&=& E_t^{\top}\Phi_{\cX_t}\left(\Phi_{\cX_t}^{\top}\Phi_{\cX_t}+\eta I\right)^{-1}\Phi_{\cX_t}^{\top}E_t
=\norm{\Phi_{\cX_t}^{\top}E_t}_{\left(V_t+\eta I\right)^{-1}}^2,
\eeqan
where in second and fourth step, we have used Lemma \ref{lem:dim-change}, and in third step, we have used $\hat{\Phi}_{\cX_t}\hat{\Phi}_{\cX_t}^{\top}=\Phi_{\cX_t}P_t\Phi_{\cX_t}^{\top} \preceq \Phi_{\cX_t}\Phi_{\cX_t}^{\top}$. 
We then have
\beqa
\norm{\tilde{\mu}_t(x)-\tilde{\alpha}_t(x)}_2 &\leq& \eta^{-1/2}\norm{\sum_{s=1}^{t}\Phi(x_s)\epsilon_s}_{\left(V_t+\eta I\right)^{-1}}\norm{\tilde{\Gamma}_t(x,x)}^{1/2}\nonumber\\
&=& \eta^{-1/2}\norm{S_t}_{\left(V_t+\eta I\right)^{-1}}\norm{\tilde{\Gamma}_t(x,x)}^{1/2},
\label{eqn:combine-two}
\eeqa
where $S_t:=\sum_{s=1}^{t}\Phi(x_s)\epsilon_s$.
Combining (\ref{eqn:combine-one}) and (\ref{eqn:combine-two}) together, we now obtain
\beqan
\norm{f(x)-\tilde{\mu}_t(x)}_2 &\leq & \norm{f(x)-\tilde{\alpha}_t(x)}_2+\norm{\tilde{\alpha}_t(x)-\tilde{\mu}_t(x)}_2\\&\leq& \left(c_{\epsilon}\norm{f}_\Gamma +\eta^{-1/2}\norm{S_t}_{(V_t+\eta I)^{-1}}\right)\norm{\tilde{\Gamma}_t(x,x)}^{1/2}.
\eeqan
We now conclude the proof using Lemma \ref{lem:martingale-control}.
\end{proof}



\paragraph{Preventing variance starvation}
We now show that an accurate dictionary helps us avoid variance starvation in Nystr\"{o}m approximation.

\begin{mylemma}[Predictive variance control]
For any $\eta > 0$ and $\epsilon \in (0,1)$, let $\rho =(1+\epsilon)/(1-\epsilon)$ and $(1-\epsilon)\Phi_{\cX_t}^{\top}\Phi_{\cX_t}-\epsilon\eta I \preceq  \Phi_{\cD_t}^{\top}\Phi_{\cD_t}\preceq (1+\epsilon)\Phi_{\cX_t}^{\top}\Phi_{\cX_t}+\epsilon\eta I$. Then
\beqn
\frac{1}{\rho} \Gamma_t(x,x) \preceq \tilde{\Gamma}_t(x,x) \preceq		 \rho\Gamma_t(x,x).
\eeqn
\label{lem:approx-pred-var-bound}
\end{mylemma}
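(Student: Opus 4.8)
The plan is to reduce the asserted matrix sandwich to an operator sandwich in $\ell^2$, and then obtain the latter by pushing the spectral-approximation guarantee of Lemma~\ref{lem:dictionary-prop} through the projection $P_t$. First I would write both predictive variances in dual form: by~(\ref{eqn:pred-var-dual}), $\Gamma_t(x,x)=\eta\,\Phi(x)^{\top}(V_t+\eta I)^{-1}\Phi(x)$, and by Lemma~\ref{lem:projection}, $\tilde\Gamma_t(x,x)=\eta\,\Phi(x)^{\top}(\hat V_t+\eta I)^{-1}\Phi(x)$ with $\hat V_t=\hat\Phi_{\cX_t}^{\top}\hat\Phi_{\cX_t}$. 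Since $A\mapsto A^{-1}$ is order-reversing on positive definite operators and $A\mapsto\Phi(x)^{\top}A\,\Phi(x)$ preserves the Loewner order, it is enough to prove the operator inequalities $\tfrac1\rho(V_t+\eta I)\preceq\hat V_t+\eta I\preceq\rho\,(V_t+\eta I)$ on $\ell^2$.

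Next I would restate the hypothesis of the lemma, $(1-\epsilon)\Phi_{\cX_t}^{\top}\Phi_{\cX_t}-\epsilon\eta I\preceq\Phi_{\cD_t}^{\top}\Phi_{\cD_t}\preceq(1+\epsilon)\Phi_{\cX_t}^{\top}\Phi_{\cX_t}+\epsilon\eta I$, in the equivalent regularized form $(1-\epsilon)(V_t+\eta I)\preceq\Phi_{\cD_t}^{\top}\Phi_{\cD_t}+\eta I\preceq(1+\epsilon)(V_t+\eta I)$ (add $\eta I$ and regroup). The key structural observation is that both operators of interest are conjugations by the orthogonal projection $P_t$ onto $\mathrm{range}(\Phi_{\cD_t}^{\top})$: from $\hat\Phi_{\cX_t}=\Phi_{\cX_t}P_t$ one gets $\hat V_t=P_tV_tP_t$, and since $\mathrm{range}(\Phi_{\cD_t}^{\top})=\mathrm{range}(P_t)$ one has $P_t\Phi_{\cD_t}^{\top}=\Phi_{\cD_t}^{\top}$, hence $\Phi_{\cD_t}^{\top}\Phi_{\cD_t}=P_t\,\Phi_{\cD_t}^{\top}\Phi_{\cD_t}\,P_t$. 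Conjugating the regularized hypothesis by $P_t$ (and using $P_t^2=P_t$) therefore gives $(1-\epsilon)(\hat V_t+\eta P_t)\preceq\Phi_{\cD_t}^{\top}\Phi_{\cD_t}+\eta P_t\preceq(1+\epsilon)(\hat V_t+\eta P_t)$.

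Then I would add $\eta(I-P_t)\succeq0$ to each of the three members, using on the two outer terms the trivial bounds $(1-\epsilon)\eta(I-P_t)\preceq\eta(I-P_t)\preceq(1+\epsilon)\eta(I-P_t)$; this replaces $\eta P_t$ by $\eta I$ throughout and yields $(1-\epsilon)(\hat V_t+\eta I)\preceq\Phi_{\cD_t}^{\top}\Phi_{\cD_t}+\eta I\preceq(1+\epsilon)(\hat V_t+\eta I)$, i.e.\ $\tfrac1{1+\epsilon}(\Phi_{\cD_t}^{\top}\Phi_{\cD_t}+\eta I)\preceq\hat V_t+\eta I\preceq\tfrac1{1-\epsilon}(\Phi_{\cD_t}^{\top}\Phi_{\cD_t}+\eta I)$. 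Chaining this with the regularized hypothesis, and recalling $\rho=(1+\epsilon)/(1-\epsilon)$, gives exactly $\tfrac1\rho(V_t+\eta I)\preceq\hat V_t+\eta I\preceq\rho\,(V_t+\eta I)$; inverting both sides and conjugating by $\eta^{1/2}\Phi(x)$, as noted above, completes the proof.

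I expect the main obstacle to be the bookkeeping in the middle step, namely showing that the Nystr\"{o}m operator $\hat V_t+\eta I$ inherits the $\epsilon$-spectral approximation of $V_t+\eta I$ enjoyed by the sketch $\Phi_{\cD_t}^{\top}\Phi_{\cD_t}+\eta I$. The two facts doing the work there are the identity $\hat V_t=P_tV_tP_t$, which requires unwinding the Nystr\"{o}m embedding via Lemma~\ref{lem:projection} and the relation $\hat\Phi_{\cX_t}=\Phi_{\cX_t}P_t$, and the compatibility of the regularizer with the splitting $I=P_t+(I-P_t)$; once these are in place the remaining manipulations are routine Loewner-order algebra, in the footsteps of \citet{calandriello2019gaussian}.
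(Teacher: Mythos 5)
Your proposal is correct and follows essentially the same route as the paper's proof: both reduce to the operator sandwich $\tfrac1\rho(V_t+\eta I)\preceq\hat V_t+\eta I\preceq\rho(V_t+\eta I)$ via the dual forms of $\Gamma_t$ and $\tilde\Gamma_t$, conjugate the spectral-approximation hypothesis by $P_t$ using $P_t\Phi_{\cD_t}^{\top}=\Phi_{\cD_t}^{\top}$, and exploit $0\preceq I-P_t\preceq I$ to restore the full regularizer before chaining with the hypothesis and inverting. The only difference is cosmetic bookkeeping (you carry the regularized form $+\,\eta I$ throughout, the paper carries the $\epsilon\eta P_t$ error terms and regroups at the end).
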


\begin{proof}
We first note that 
$\hat{\Phi}_{\cX_t}^{\top}\hat{\Phi}_{\cX_t}=P_t\Phi_{\cX_t}^{\top}\Phi_{\cX_t}P_t$, where $P_t$ is the projection operator as defined in (\ref{eqn:proj-op}). Then our hypothesis $(1-\epsilon)\Phi_{\cX_t}^{\top}\Phi_{\cX_t}-\epsilon\eta I \preceq  \Phi_{\cD_t}^{\top}\Phi_{\cD_t}\preceq (1+\epsilon)\Phi_{\cX_t}^{\top}\Phi_{\cX_t}+\epsilon\eta I$ can be re-formulated as
\beqn
\frac{1}{1+\epsilon}P_t\Phi_{\cD_t}^{\top}\Phi_{\cD_t}P_t - \frac{\epsilon\eta }{1+\epsilon} P_t \preceq \hat{\Phi}_{\cX_t}^{\top}\hat{\Phi}_{\cX_t} \preceq  \frac{1}{1-\epsilon}P_t\Phi_{\cD_t}^{\top}\Phi_{\cD_t}P_t + \frac{\epsilon\eta }{1-\epsilon} P_t.
\eeqn 
Since, by definition, $P_t\Phi_{\cD_t}^{\top}=\Phi_{\cD_t}^{\top}$ and $P_t \preceq I$, we have
\beqn
\frac{1}{1+\epsilon}\Phi_{\cD_t}^{\top}\Phi_{\cD_t} - \frac{\epsilon\eta }{1+\epsilon} \preceq \hat{\Phi}_{\cX_t}^{\top}\hat{\Phi}_{\cX_t} \preceq  \frac{1}{1-\epsilon}\Phi_{\cD_t}^{\top}\Phi_{\cD_t} + \frac{\epsilon\eta }{1-\epsilon},
\eeqn 
and, thus, in turn
\beqn
\frac{1}{1+\epsilon}\left(\Phi_{\cD_t}^{\top}\Phi_{\cD_t}+\eta I\right) \preceq \hat{\Phi}_{\cX_t}^{\top}\hat{\Phi}_{\cX_t} +\eta I \preceq  \frac{1}{1-\epsilon} \left(\Phi_{\cD_t}^{\top}\Phi_{\cD_t}+\eta I\right).
\eeqn
We now obtain from our hypothesis that
\beqn
\frac{1-\epsilon}{1+\epsilon}\left(\Phi_{\cX_t}^{\top}\Phi_{\cX_t}+\eta I\right) \preceq \hat{\Phi}_{\cX_t}^{\top}\hat{\Phi}_{\cX_t} +\eta I \preceq  \frac{1+\epsilon}{1-\epsilon} \left(\Phi_{\cX_t}^{\top}\Phi_{\cX_t}+\eta I\right).
\eeqn
This further implies that
\beqn
\frac{1-\epsilon}{1+\epsilon}\Phi(x)^{\top}(V_t+\eta I)^{-1}\Phi(x) \preceq \Phi(x)^{\top}\left(\hat{V}_t +\eta I\right)^{-1}\Phi(x) \preceq  \frac{1+\epsilon}{1-\epsilon} \Phi(x)^{\top}\left(V_t+\eta I\right)^{-1}\Phi(x),
\eeqn
which completes the proof.
\end{proof}

\subsection{Regret bound and dictionary size for MT-BKB (Proof of Theorem \ref{thm:cumulative-regret-kernel-approx})}


Since the scalarization functions $s_\lambda$ is $L_\lambda$-Lipschitz in the $\ell_2$ norm, we have
\beqan
    \left\lvert s_{\lambda_t}\left(f(x)\right) - s_{\lambda_t}\left(\tilde{\mu}_{t-1}(x)\right)\right\rvert \leq L_{\lambda_t}\norm{f(x)-\tilde{\mu}_{t-1}(x)}_2.
\eeqan
Since $\tilde{\mu}_0(x)=0$, $\tilde{\Gamma}_0(x,x)=\Gamma(x,x)$ and $\norm{f}_\Gamma \leq b$, we have
\beqn
\norm{f(x)-\tilde{\mu}_0(x)}_2=\norm{\Gamma_x^{\top}f}_2\leq \norm{f}_\Gamma\norm{\Gamma_x} = \norm{f}_\Gamma\norm{\Gamma_x^{\top}\Gamma_x}^{1/2} \leq b\norm{\tilde{\Gamma}_0(x,x)}^{1/2}.
\eeqn
Further, since $\log(1+ax) \leq a\log(1+x)$ holds for any $a \geq 1$ and $x \geq 0$, we obtain from Lemma \ref{lem:sum-of-pred-variances} and Lemma \ref{lem:approx-pred-var-bound} that
\beqa
\log\det\left(I_{nt}+\eta^{-1}G_t\right)&=&\sum_{s=1}^{t}\log \det\left(I_n+\eta^{-1}\Gamma_{s-1}(x_s,x_s)\right)\nonumber\\&\leq& \rho\sum_{s=1}^{t}\log \det\left(I_n+\eta^{-1}\tilde{\Gamma}_{s-1}(x_s,x_s)\right),
\label{eqn:log-det}
\eeqa
where $\rho=\frac{1+\epsilon}{1-\epsilon}$.
Let us now assume, for any $t \geq 1$, that 
\beq
(1-\epsilon)\Phi_{\cX_t}^{\top}\Phi_{\cX_t}-\epsilon\eta I \preceq  \Phi_{\cD_t}^{\top}\Phi_{\cD_t}\preceq (1+\epsilon)\Phi_{\cX_t}^{\top}\Phi_{\cX_t}+\epsilon\eta I.
\label{eqn:good-event}
\eeq
Then, from (\ref{eqn:log-det}) and Lemma \ref{lem:concentration-kernel-approx}, the following holds with probability at least $1-\delta/2$: 
\beq
    \forall t \geq 1, \forall x \in \cX, \quad  \left\lvert s_{\lambda_t}\left(f(x)\right) - s_{\lambda_t}\left(\tilde{\mu}_{t-1}(x)\right)\right\rvert \leq L_{\lambda_t}\tilde{\beta}_{t-1}\norm{\tilde{\Gamma}_{t-1}(x,x)}^{1/2},
    \label{eqn:concentration-approx}
\eeq
where $\tilde{\beta}_t=c_{\epsilon}b+\frac{\sigma}{\sqrt{\eta}}\sqrt{2\log(2/\delta)+\rho\sum_{s=1}^{t}\log\det\left(I_n+\eta^{-1}\tilde{\Gamma}_{s-1}(x_s,x_s)\right)}$, $t \geq 0$ and $c_{\epsilon}=1+\frac{1}{\sqrt{1-\epsilon}}$.
We can now upper bound
the \emph{instantaneous regret} at time $t \geq 1$ as 
\beqan
r_{\lambda_t}(x_t)&:=&s_{\lambda_t}\left(f(x^\star_{\lambda_t})\right)-s_{\lambda_t}\left(f(x_t)\right)\\
&\leq & s_{\lambda_t}\left(\tilde{\mu}_{t-1}(x^\star_{\lambda_t})\right)+L_{\lambda_t}\tilde{\beta}_{t-1}\norm{\tilde{\Gamma}_{t-1}(x^\star_{\lambda_t},x^\star_{\lambda_t})}^{1/2}-s_{\lambda_t}\left(f(x_t)\right)\\
&\leq& s_{\lambda_t}\left(\tilde{\mu}_{t-1}(x_t)\right)+L_{\lambda_t}\tilde{\beta}_{t-1}\norm{\tilde{\Gamma}_{t-1}(x_t,x_t)}^{1/2}-s_{\lambda_t}\left(f(x_t)\right)\\
&\leq & 2L_{\lambda_t}\tilde{\beta}_{t-1}\norm{\tilde{\Gamma}_{t-1}(x_t,x_t)}^{1/2}.
\eeqan
Here in the first and third step, we have used (\ref{eqn:concentration-approx}). The second step follows from the choice of $x_t$.
Since $\tilde{\beta}_t$ is a monotonically increasing function in $t$ and $L_{\lambda_t} \leq L$ for all $t$, we now have
\beqan
\sum_{t=1}^{T}r_{\lambda_t}(x_t) \leq 2L\tilde{\beta}_T\sum_{t=1}^{T}\norm{\tilde{\Gamma}_{t-1}(x_t,x_t)}^{1/2}
&\leq & 2L\tilde{\beta}_T\sqrt{\rho T\sum_{t=1}^{T}\norm{\Gamma_{t-1}(x_t,x_t)}}\\ &\leq& 2L\tilde{\beta}_T\sqrt{\rho(1+\kappa/\eta) T\sum_{t=1}^{T}\norm{\Gamma_{t}(x_t,x_t)}} ,
\eeqan
where the second last step is due to the Cauchy-Schwartz inequality and Lemma \ref{lem:approx-pred-var-bound}, and the last step is due to Lemma \ref{lem:pred-var-inequalities}.
A similar argument as in (\ref{eqn:log-det}) now yields
\beqan
\sum_{t=1}^{T}\log \det\left(I_n+\eta^{-1}\tilde{\Gamma}_{t-1}(x_t,x_t)\right) &\leq& \rho\sum_{t=1}^{T}\log \det\left(I_n+\eta^{-1}\Gamma_{t-1}(x_t,x_t)\right)\\ &=& \rho\log \det \left(I_{nT}+\eta^{-1}G_T \right)\leq 2\rho\gamma_{nT}(\Gamma,\eta).
\eeqan
We then have $\tilde{\beta}_{T} \leq c_\epsilon b+\frac{\sigma}{\sqrt{\eta}}\sqrt{2\left(\log(2/\delta)+\rho^2\gamma_{nT}(\Gamma,\eta)\right)}$.
Setting $q=\frac{6\rho\ln(4T/\delta)}{\epsilon^2}$, we now have from Lemma \ref{lem:dictionary-prop}, that with probability at least $1-\delta/2$, uniformly across all $t \in [T]$, the dictionary size $m_t \leq  6\rho q \left(1+\kappa/\eta\right)\sum_{s=1}^{t}\norm{\Gamma_s(x_s,x_s)}$ and (\ref{eqn:good-event}) is true. Taking an expectation over $\lbrace\lambda_i\rbrace_{i=1}^{T} \sim P_{\lambda}$ and using a union bound argument, we then obtain, with probability at least $1-\delta$, the cumulative regret
\beqn
		R_C^{\text{MT-BKB}}(T) \leq 2L\left(c_\epsilon b+\frac{\sigma}{\sqrt{\eta}}\sqrt{2\left(\log(1/\delta)+\rho^2\gamma_{nT}(\Gamma,\eta)\right)}\right)\sqrt{\rho(1+\kappa/\eta) T\sum_{t=1}^{T}\norm{\Gamma_{t}(x_t,x_t)}}\;.
	\eeqn
	We conclude the proof by noting that $\rho=\frac{1+\epsilon}{1-\epsilon} > 1$ and $c_{\epsilon}=1+\frac{1}{\sqrt{1-\epsilon}} \leq 2\rho$.

\section{Additional details on experiments}
\label{app:experiments}
\paragraph{Cumulative regret using linear scalarization} We sample from $P_\lambda$ as $\lambda=u/\norm{u}_1$, where $u$ is uniformly sampled from $[0,1]^n$.
We plot the time-average cumulative regret $\frac{1}{T}R_C(T)$ in Figure \ref{fig:plot-linear}.
\begin{figure}[H]
\centering
\subfigure[RKHS function]{\includegraphics[height=1.15in,width=1.75in]{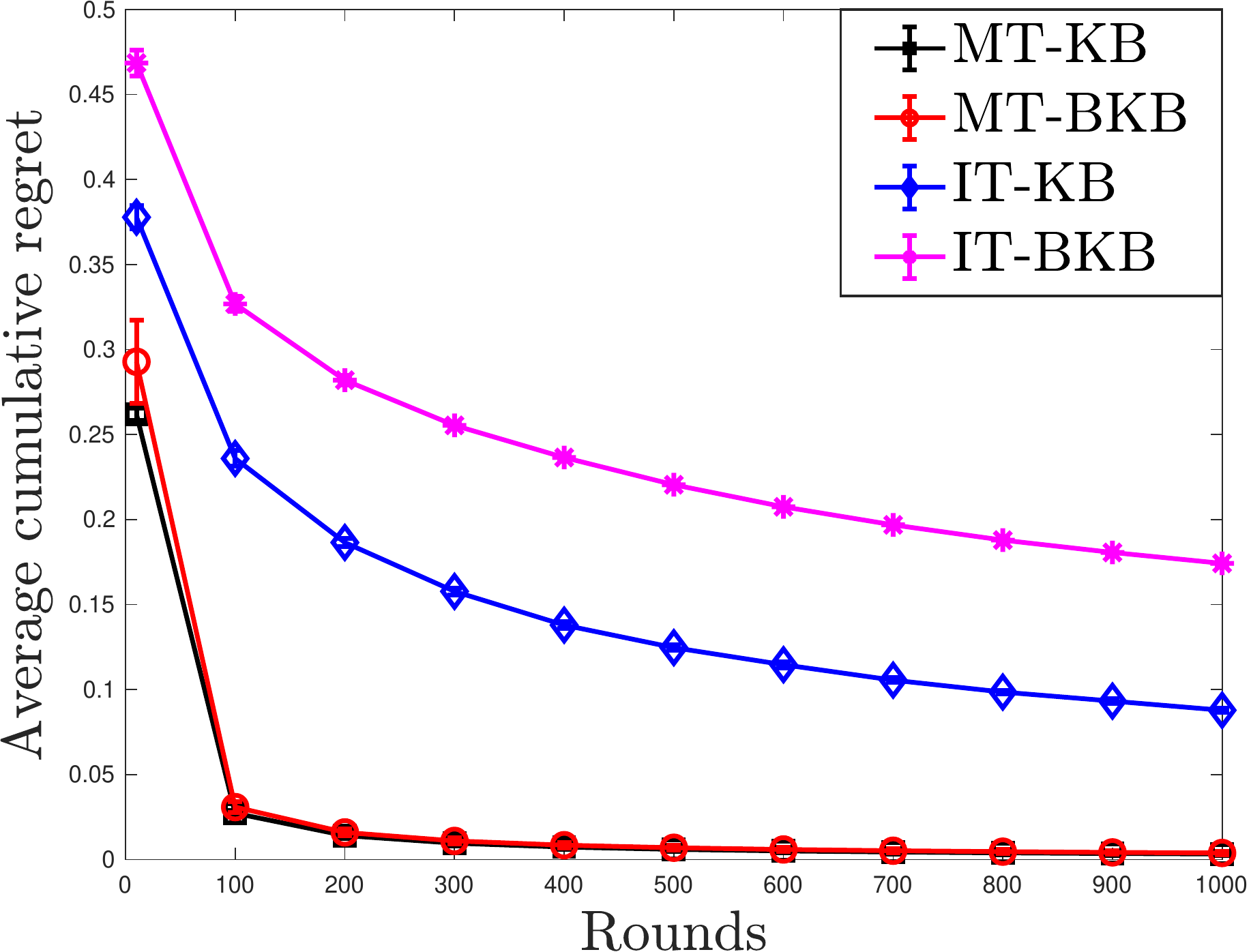}}
\subfigure[Perturbed sine function]{\includegraphics[height=1.15in,width=1.75in]{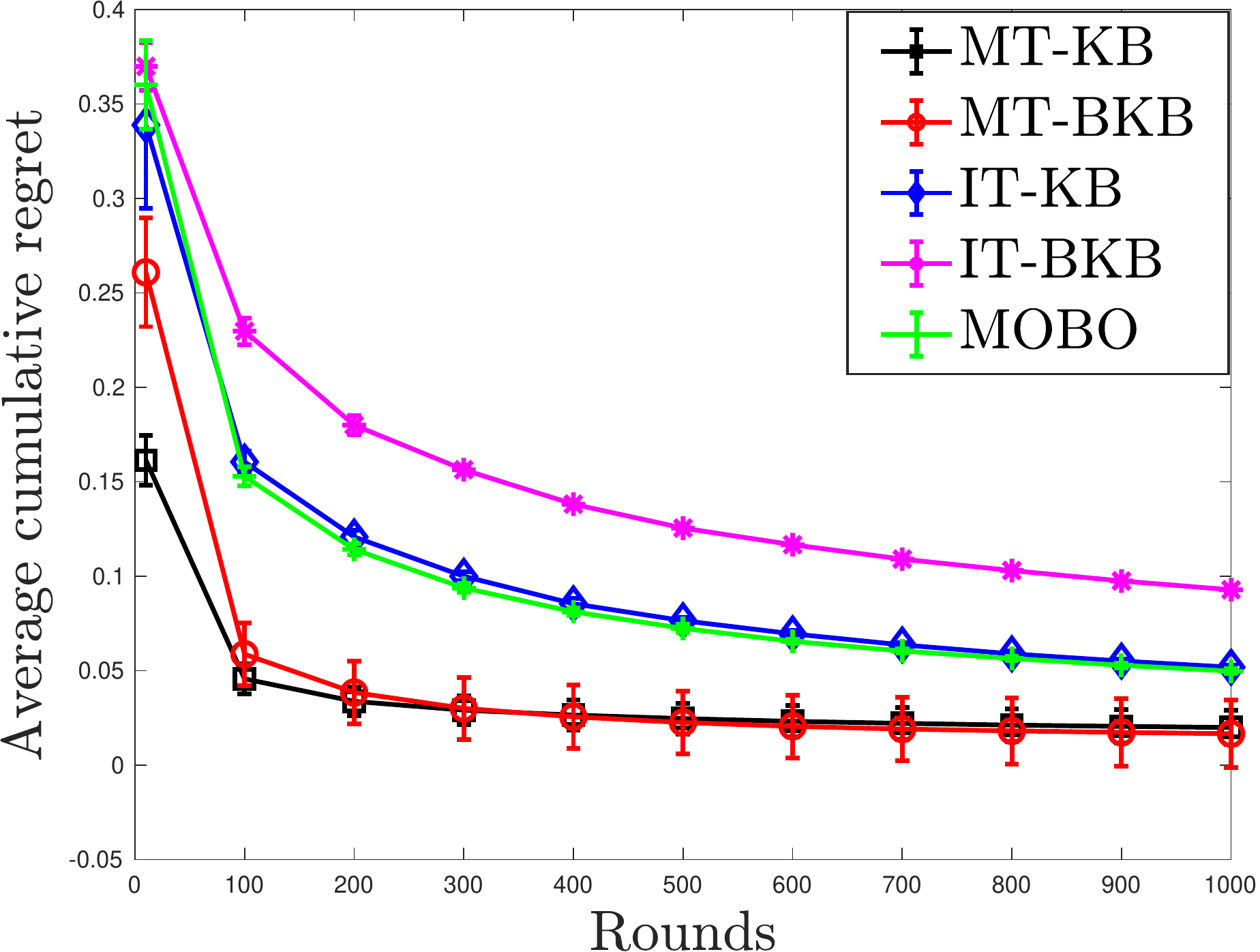}}
\subfigure[Sensor measurements]{\includegraphics[height=1.15in,width=1.75in]{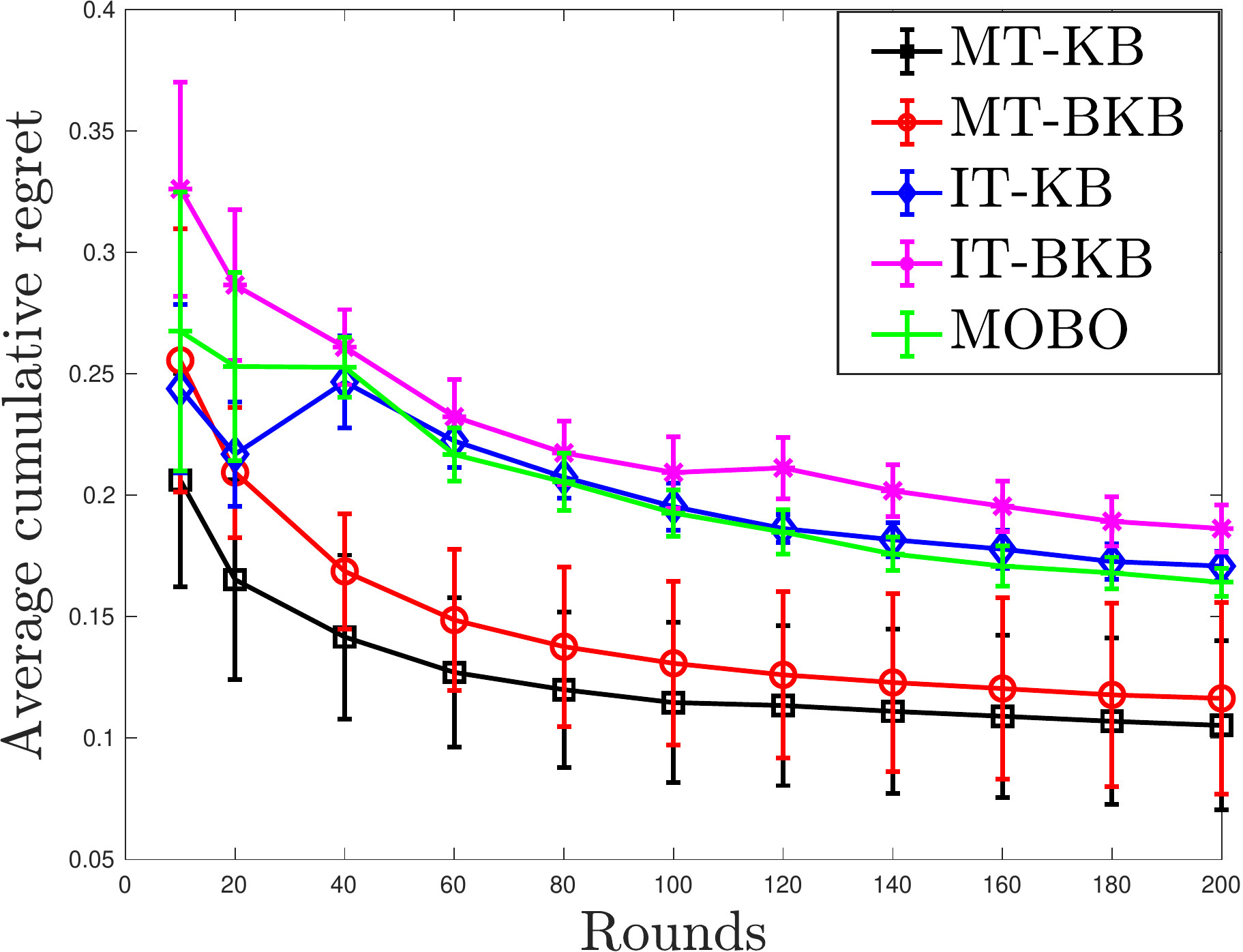}}
\caption{{\footnotesize Comparison of time-average cumulative regret of MT-KB and MT-BKB with IT-KB, IT-BKB and MOBO using linear scalarization.
}}
\label{fig:plot-linear} 
\end{figure}
\vspace*{-5mm}
\paragraph{Comparison of Bayes regret} We compare the Bayes regret $R_B(T)$ of MT-KB and MT-BKB with independent task benchmarks IT-KB, IT-BKB and MOBO using Chebyshev scalarization in Figure \ref{fig:plot-bayes}.
\begin{figure}[H]
\centering
\subfigure[RKHS function]{\includegraphics[height=1.15in,width=1.75in]{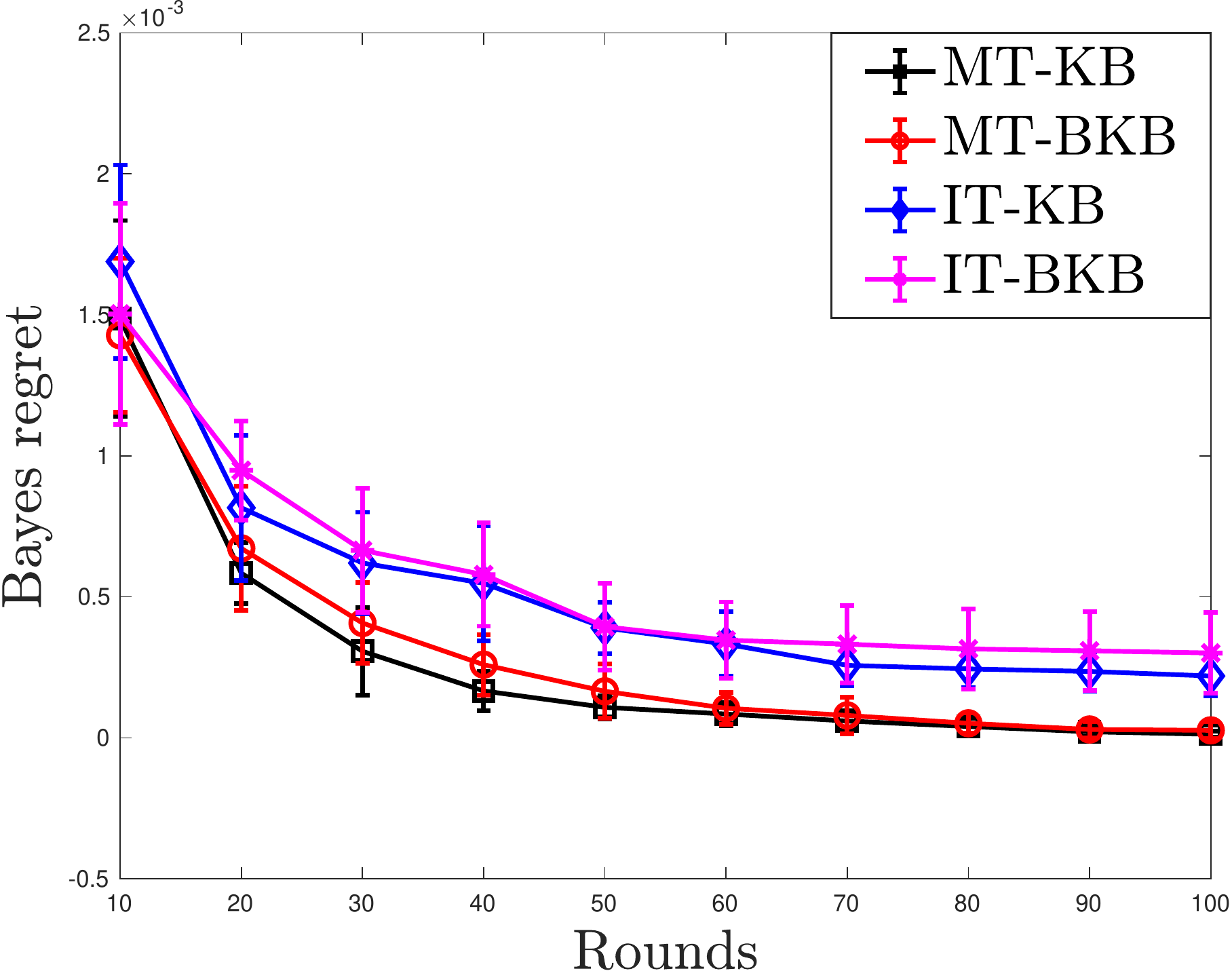}}
\subfigure[Perturbed sine function]{\includegraphics[height=1.15in,width=1.75in]{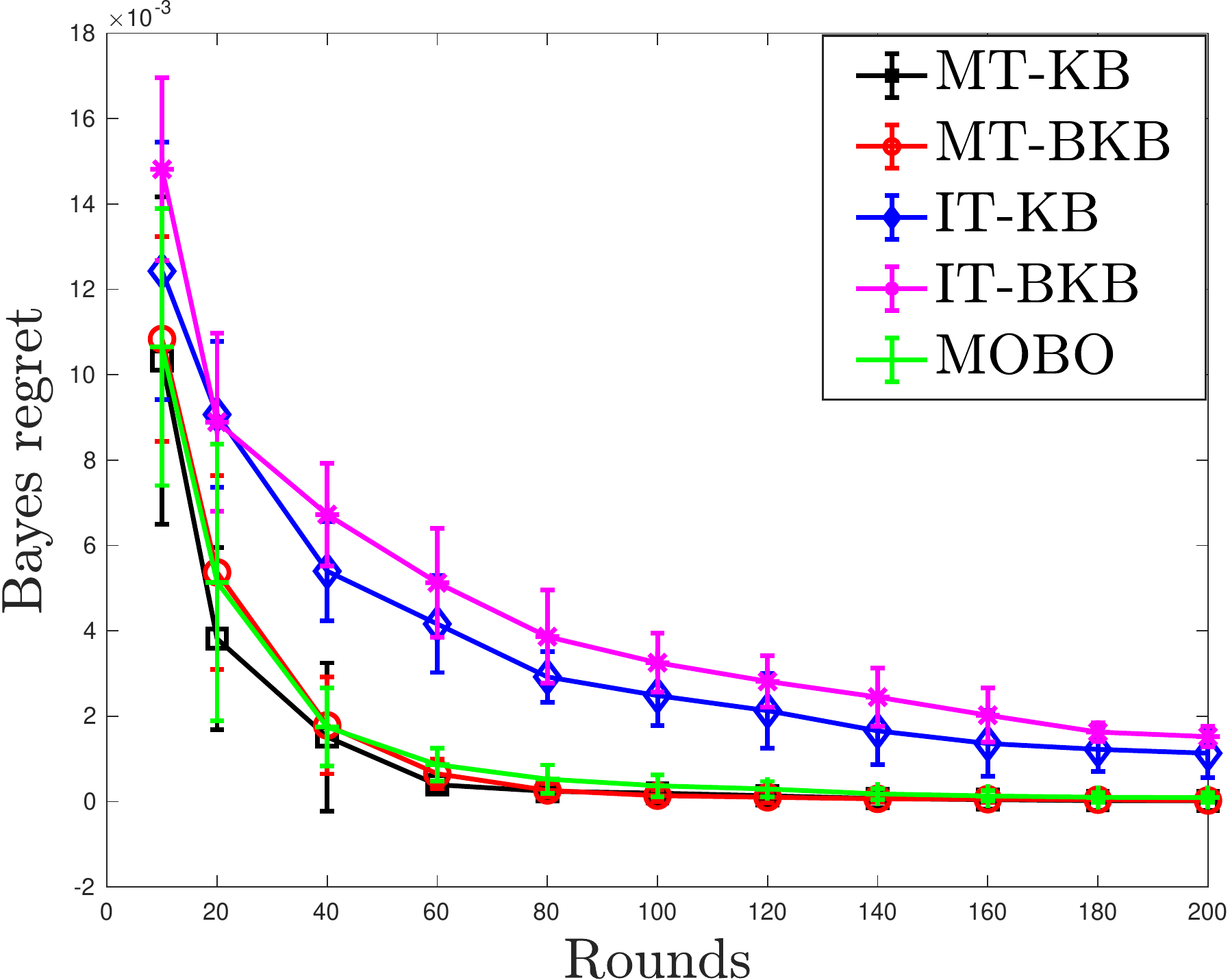}}
\subfigure[Sensor measurements]{\includegraphics[height=1.15in,width=1.75in]{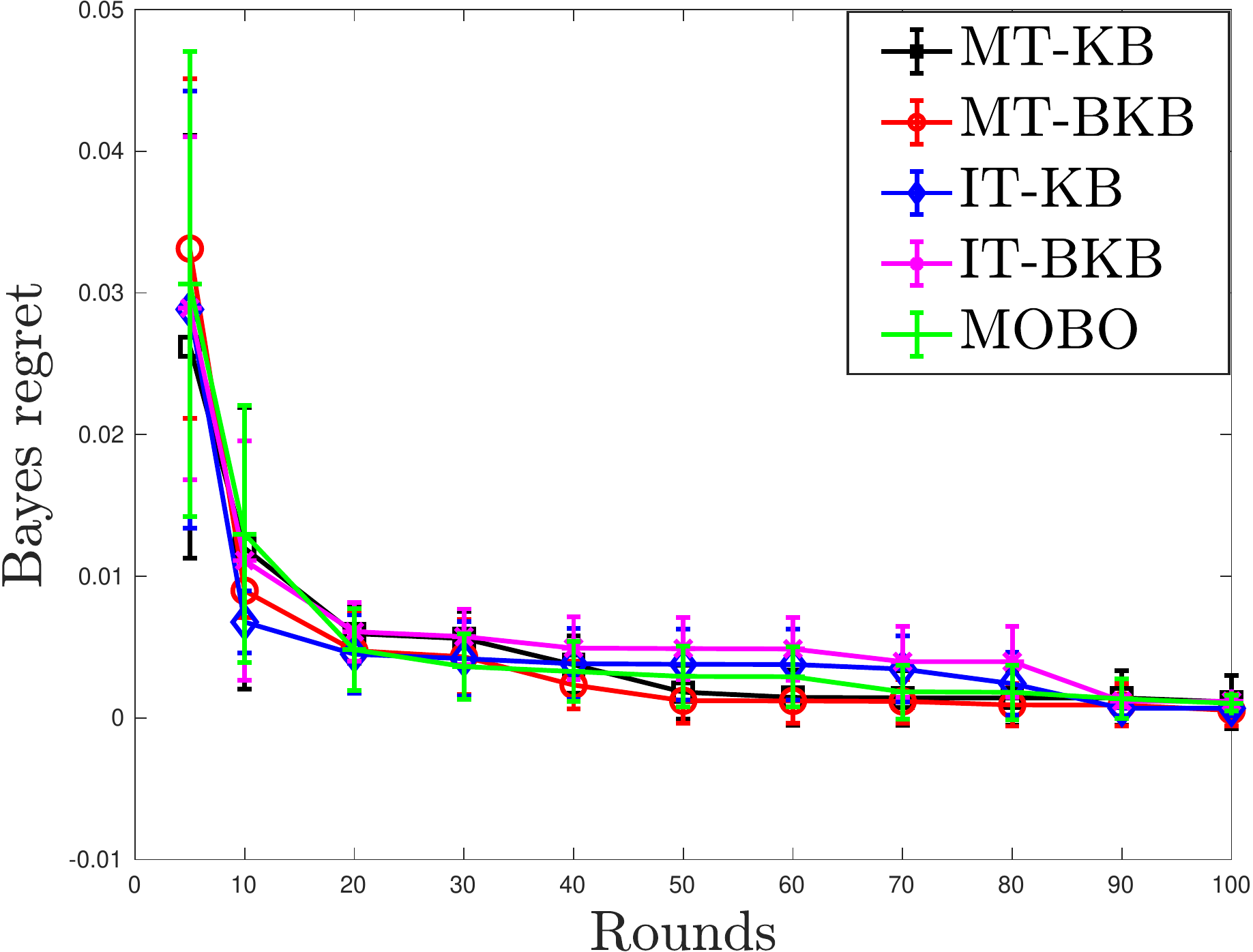}}
\caption{{\footnotesize Comparison of Bayes regret of MT-KB and MT-BKB with IT-KB, IT-BKB and MOBO using Chebyshev scalarization.
}}
\label{fig:plot-bayes} 
\end{figure}
\vspace*{-5mm}
\paragraph{Comments on parameters used} We set the confidence radii (i.e., $\beta_t$ and $\tilde\beta_t$) of MT-KB and MT-BKB exactly as given in Theorem \ref{thm:cumulative-regret} and Theorem \ref{thm:cumulative-regret-kernel-approx}, respectively. Similarly, for IT-KB and IT-BKB, we use respective choices of radii given in \citep{chowdhury2017kernelized} and \citep{calandriello2019gaussian} in the context of single task BO and suitably blow those up by a $\sqrt{n}$ factor to account for $n$ tasks. For MOBO, we use the UCB acquistion function and set the radius as specified in \citep{pariaflexible}. To make the comparison uniform across all experiments, we do not tune any hyper-parameter for any algorithm and for a particular hyperparameter, we always use the same value in all algorithms. The hyper-paramter choices are specified in Section \ref{sec:experiments}. We though believe that careful tuning of hyper-parameters might lead to better performance in practice.
\vspace*{-2mm}
\paragraph{A note on the sensor data} The data was collected at 30 second intervals for 5 consecutive days starting Feb. 28th 2004 from 54 sensors deployed in the Intel Berkeley Research lab. We have downloaded the data previously from the webpage \url{http://db.csail.mit.edu/labdata/labdata}. But the link appears to be broken now. We can share a copy of our downloaded version if asked to do so.
\end{appendix}

\end{document}